\documentclass[11pt]{article}
\usepackage[letterpaper, left=1.2truein, right=1.2truein, top = 1.2truein,bottom = 1.2truein]{geometry}
\usepackage{filecontents}
\RequirePackage{amsmath, amsfonts, amssymb, amsthm, bm, graphicx, mathtools, enumerate,multirow}
\RequirePackage[numbers, sort]{natbib}
\RequirePackage{graphicx}%
\usepackage{url}
\usepackage[utf8]{inputenc} 
\usepackage[T1]{fontenc}    
\usepackage{hyperref}       
\usepackage{url}            
\usepackage{booktabs}       
\usepackage{amsfonts}       
\usepackage{nicefrac}       
\usepackage{microtype}      


\usepackage{amsthm,amsmath,amssymb,bbm,bm}
\usepackage{cancel}
\usepackage{natbib}
\usepackage{longtable}
\usepackage{multirow}
\usepackage{setspace}
\usepackage{centernot}
\usepackage{array}
\usepackage{mathrsfs}
\usepackage{dsfont}
\usepackage{relsize}
\usepackage{rotating}
\usepackage{enumitem}
\usepackage{float}
\floatstyle{plaintop}
\restylefloat{table}
\usepackage{subcaption}
\usepackage{multirow}
\usepackage{wrapfig}
\usepackage{graphicx}
\usepackage{comment}
\usepackage{subcaption}

\DeclareMathOperator*{\argmax}{arg\,max}
\DeclareMathOperator*{\argmin}{arg\,min}

\theoremstyle{definition}
\newtheorem{remark}{Remark}
\newtheorem{assumption}{Assumption}
\newtheorem{definition}{Definition}[section]

\newtheorem{theorem}{Theorem}[section]
\newtheorem{lemma}{Lemma}[section] 

\newtheorem{corollary}{Corollary}[section]

\usepackage{algorithm}
\usepackage{algpseudocode}
\usepackage{tikz}

\usepackage{graphicx}

\usepackage{caption}
\usepackage[labelfont=sf,justification=centering]{subcaption}
\captionsetup{labelfont=sf,subrefformat=parens,font=footnotesize}

\usepackage{hyperref}       
\hypersetup{
 colorlinks=true,
 linkcolor=red,
 urlcolor=blue,
 citecolor=blue
}
\usepackage{microtype}      
\usepackage{xcolor}         
\usepackage{multirow}
\usepackage{parskip}
\usepackage{tcolorbox}

\newcommand{\KL}{\mathrm{KL}}
\newcommand{\ML}{\mathrm{ML}}
\newcommand{\pess}{\mathfrak{p}}

\newcommand{\cD}{\mathcal{D}}
\newcommand{\cR}{\mathcal{R}}
\newcommand{\cV}{\mathcal{V}}

\newcommand{\cE}{\mathcal{E}}
\newcommand{\cF}{\mathcal{F}}
\newcommand{\cN}{\mathcal{N}}

\newcommand{\cP}{\mathcal{P}}   
\newcommand{\cY}{\mathcal{Y}} 
\newcommand{\cA}{\mathcal{A}} 
\newcommand{\cB}{\mathcal{B}} 
\newcommand{\cL}{\mathcal{L}} 
\newcommand{\cI}{\mathcal{I}}

\newcommand{\bbE}{\mathbb{E}}
\newcommand{\bbR}{\mathbb{R}}
\newcommand{\bbS}{\mathbb{S}}
\newcommand{\bbP}{\mathbb{P}}

\newcommand{\bbN}{\mathbb{N}}

\newcommand{\rd}{\mathrm{d}}

\usepackage{bbm}

\begin{document}

\title{PASTA: A Unified Framework for Offline Assortment Learning\thanks{This is an extended version of 
\href{https://arxiv.org/abs/2302.03821}{arXiv:2302.03821}.}}
\author{Juncheng Dong$^{\S 1}$ \and Weibin Mo$^{\S 2}$ \and Zhengling Qi$^3$ \and Cong Shi$^4$ \and Ethan X. Fang$^{1}$ \and Vahid Tarokh$^{1}$}
\date{%
    $^\S$Equal Contribution\\%
    $^1$Duke University\\%
    $^2$Purdue University\\%
    $^3$George Washington University\\%
    $^4$University of Miami
}

\maketitle

\begin{abstract}
We study a broad class of assortment optimization problems in an offline and data-driven setting. In such problems, a firm lacks prior knowledge of the underlying choice model, and aims to determine an optimal assortment based on historical customer choice data. The combinatorial nature of assortment optimization often results in insufficient data coverage, posing a significant challenge in designing provably effective solutions. To address this, we introduce a novel Pessimistic Assortment Optimization (PASTA) framework that leverages the principle of pessimism to achieve optimal expected revenue under general choice models. Notably, PASTA requires only that the offline data distribution contains an optimal assortment, rather than providing the full coverage of all feasible assortments. Theoretically, we establish the first finite-sample regret bounds for offline assortment optimization across several widely used choice models, including the multinomial logit and nested logit models. Additionally, we derive a minimax regret lower bound, proving that PASTA is minimax optimal in terms of sample and model complexity. Numerical experiments further demonstrate that our method outperforms existing baseline approaches.
\end{abstract}

\section{Introduction}\label{sec:Intro}
In business operations, a key challenge is selecting a product assortment that maximizes the seller's expected revenue, a problem commonly known as assortment optimization. Solving this problem requires understanding customer choice behavior for a given assortment, often modeled using established frameworks such as the widely studied multinomial logit (MNL) model~\citep{mcfadden1973conditional}. 
This work tackles the offline assortment optimization problem by leveraging historical observational data. We propose a novel algorithm that efficiently identifies the optimal assortment under a broad class of choice models, without requiring prior knowledge of the model parameters.

Offline assortment optimization is of particular interest in the era of Big Data, as many companies have access to extensive customer data, and it is in their best interest to leverage this existing asset to inform better decisions. 
In contrast, online learning for assortment optimization can be costly, time-consuming, and sometimes even infeasible due to various constraints~\citep{ditchfield2021ethical}.  
However, existing works on data-driven assortment optimization mainly study the dynamic/online setting~\citep{bertsimas2015data}, where decision-makers adaptively learn the underlying choice model and optimize assortments in a trial-and-error fashion.
Online algorithms often build upon the principle of optimism, prioritizing exploration of assortments with the potential to generate large revenues~\citep{chen2021optimal_dynamic}. 
Such an exploration strategy ensures sufficient coverage of assortments, preventing the suboptimal ones from being mistakenly selected due to the estimation error of choice models. 
In comparison, the offline data distribution can hardly cover all potential assortments. For example, some assortments with obvious suboptimal performance are unlikely to be offered to customers and, therefore, are never observed in the offline data. 
Then, the revenues of these assortments are often poorly estimated, leading to likely suboptimal decisions.
One cannot address this challenge through active assortment exploration because it is infeasible in the offline setting. 
This challenge, unique to the offline setting, goes by the name of \emph{insufficient data coverage}. 
Due to this challenge, offline assortment optimization requires a different treatment compared with the optimism-based online algorithms, whose success hinges on sufficient assortment exploration.

To this end, we adopt the principle of pessimism, an approach that finds great success in offline learning problems~\citep{jin2021pessimism}. 
Building on this principle, we propose a novel \textit{Pessimistic ASsorTment leArning} (PASTA for short) framework for offline assortment optimization under \emph{general} choice models. To identify an optimal assortment under insufficient data coverage, PASTA 
first constructs an uncertainty set for the underlying choice model based on the likelihood ratio test~\citep{owen1990empirical}.  
It then evaluates each assortment based on its worst-case revenue across all models within the uncertainty set 
and selects the assortment that maximizes the worst-case revenue by solving a max-min problem.
To shed some light, we note that insufficient data coverage can potentially result in large estimation errors of the underlying choice model, where some suboptimal assortments' corresponding revenues can be over-estimated.
PASTA addresses this challenge by leveraging robust optimization to prevent the selection of suboptimal assortments.
In particular, when the uncertainty set contains the true choice model, the worst-case optimization objective serves as a valid lower bound for the true revenue function. 
If a suboptimal assortment is under-explored, the potential estimation error can cause significant underestimation of its revenue. 
Conversely, when an optimal assortment is sufficiently explored, the risk of underestimation is negligible.
As a result, PASTA guarantees to rule out under-explored suboptimal assortments and recover an optimal assortment,  without requiring sufficient exploration of all feasible assortments but only an optimal one. 
We note that assuming sufficient exploration of an optimal assortment is reasonable, as many historical assortments in the offline data are from sales experts with the goal of maximizing revenue. 

From a technical perspective, the success of PASTA depends on the precise control of the size of its uncertainty set, governed by a scalar parameter $\alpha > 0$. Note that we construct the uncertainty set from a likelihood ratio test, where $\alpha$ serves as the test's critical value, guiding the selection of choice models likely to include the true underlying one.
By selecting $\alpha$ appropriately, we show that PASTA is  effective across a broad class of choice models. In particular, we demonstrate that when $\alpha$ is sufficiently large relative to the choice model of interest (e.g., the MNL model), it provides an upper bound on PASTA's regret,  which we aim to minimize. 
We characterize $\alpha$ following empirical process theory for a general model class \citep{van2023weak}, encompassing the celebrated MNL~\citep{mcfadden1981econometric}, latent class logit~\citep{latent-class-logit}, and nested logit~\citep{nested-logit} models as special cases. Specifically, we derive an efficient choice of $\alpha$ that scales with $D/n$, where $n$ denotes the sample size of the offline data, and $D$ is the effective dimension of the choice model. Under this selection, PASTA achieves a regret upper bound of the order $\sqrt{\alpha} = \sqrt{D/n}$.  
We further show that this rate is minimax optimal, as it matches the lower bound under the MNL model. 

We highlight that our theoretical guarantees of PASTA hold even in the challenging setting of insufficient data coverage. Rather than assuming the offline data includes all feasible assortments, PASTA only requires that an optimal assortment appears in the data with positive probability. For a model-agnostic framework with general regret guarantees, this is a necessary condition to identify an optimal assortment. The positivity assumption is indeed realistic, as logged assortments often include expert-selected, high-revenue sets. We note that this requirement can be weakened under additional model structure, but that would narrow PASTA’s generality. 

\noindent{\bf Major Contributions.} Our contribution is threefold: (1) We introduce PASTA, a unified framework for offline assortment optimization under a broad class of choice models. To illustrate its generality, we apply PASTA to several widely used choice models, including the MNL, latent class logit, and nested logit (NL) models. For these models, we establish finite-sample regret bounds in the offline setting, which, to the best of our knowledge, are novel to the literature. 
(2) Under the MNL model, we derive the first minimax regret lower bound for assortment optimization in the offline setting, which justifies the minimax optimality of the proposed framework.
(3) We thoroughly validate our framework with simulations that allow exact computation of the optimal assortment and regret. 

\noindent{\bf Paper Organization.} We first briefly discuss related works in Section~\ref{sec:related_work}. In Section~\ref{sec:pre}, we formalize the offline assortment optimization problem and discuss its unique challenges. In Section~\ref{sec:pess}, we elaborate on our PASTA framework and establish its general regret bounds. In Section~\ref{sec:application}, we instantiate PASTA on various choice models and establish regret guarantees for PASTA under these particular models. After presenting simulation studies in Section~\ref{sec:exp}, we conclude the paper. 

\section{Related Work}\label{sec:related_work}
\noindent{\bf Assortment Optimization.} The assortment optimization problem is a fundamental decision-making challenge in business operations, owing to its broad applicability~\citep{assortment_review}. Consequently, extensive works aim to address this problem under various choice models, including the MNL model without constraints~\citep{AO_MNL, wang2012capacitated}, MNL model with unimodular constraints~\citep{AO_LP}, NL model~\citep{davis2014assortment}, ranking-based preference model~\citep{feldman2019assortment}, and consider-then-choose model~\citep{aouad2021assortment}. 
These works assume known choice models and focus on the combinatorial optimization challenges associated with assortment planning~\citep[e.g.,][]{ass_opt_2015,ass_opt_2019,ass_opt_2020,liu2020ass_opt,wang2021consumer}, whereas we consider the settings with unknown model parameters that require estimation based on the offline data. 

\noindent{\bf Data-driven Assortment Optimization.} On the other hand, in the current era of Big Data, we observe an increasing volume of works regarding data-driven assortment optimization~\citep{bertsimas2015data}. The existing literature mainly focuses on dynamic assortment optimization (DAO)~\citep{DAO} where
we repeatedly update the assortment based on the observed choices of customers, with the goal that the offered assortment converges to an optimal one. 
A wealth of research addresses this problem under a range of application scenarios. 
In particular,~\cite{rusmevichientong2010dynamic} study DAO under MNL model with capacity constraints;~\cite{chen2021optimal_dynamic} propose an optimal algorithm for DAO under MNL model without constrains, achieving cumulative regrets independent of the total number of items that matches the minimax regret lower bound in terms of the horizon;~\cite{DAO_nest_MNL} consider DAO under NL model; for more complex scenarios, ~\cite{kallus2020dynamic} and~\cite{chen2020dynamic} study DAO under MNL with high-dimensional context information and changing context information;~\cite{cao2024tiered} investigate the tiered DAO problems.
In comparison with the aforementioned works, our work is the first to focus on the offline setting where the seller selects an assortment relying on pre-collected historical data, and does not actively explore new assortments. 
While we use ``offline" to describe the source of data for data-driven decisions, some existing works in assortment optimization use ``offline" to refer to either physical locations, i.e., brick-and-mortar stores~\citep{dzyabura2018offline} or decision-making problems without data~\citep{wang2024optimizing}. These works are orthogonal to ours. 

\noindent{\bf Pessimism.} 
Our proposed PASTA framework builds upon the principle of pessimism, which has been successfully applied in offline reinforcement learning (RL) to find optimal policies using historical datasets. On the empirical side, it improves the performance of both the model-based and value-based approaches in the offline setting~\citep[e.g.,][]{model_based_RL_pess_2020,model_based_RL_pess_2_2020,value_based_RL_pess_2020}. 
The effectiveness of pessimism has also been analyzed and verified theoretically in the setting of RL~\citep{jin2021pessimism,fu2022offline}. 
Notably, while the principle of pessimism has proven effective for offline learning, its exact implementations for different problems require careful instance-specific designs. 
To this end, the main contribution of our work is to take a pessimistic approach to offline assortment optimization problems and demonstrate its empirical and theoretical values. Moreover, our work differs from the above works by focusing on a decision-making problem with actions of combinatorial structures. 

\noindent{\bf Distributionally Robust Optimization (DRO) and Robust Assortment Optimization (RAO).} 
As detailed in Section~\ref{sec:pess}, PASTA tackles the insufficient data coverage in offline learning by solving max–min programs over data-driven uncertainty sets. While this closely relates to work in DRO and RAO~\citep{ben2013robust,wang2016likelihood,bertsimas2018robust,duchi2021statistics,rusmevichientong2012robust,desir2024robust,sturt2025value}, our aim is fundamentally different: instead of robustness, PASTA seeks to identify the optimal assortment with regret guarantees using large-scale historical data, whereas, to the best of our knowledge, such optimality cannot be guaranteed by existing DRO and RAO approaches.
Although DRO approaches also adopt max-min formulations and require the uncertainty set to contain the true choice model, PASTA further requires the uncertainty set to satisfy a finite-sample concentration condition (see Assumption~\ref{ass: technical assumptions}).  
RAO likewise uses max–min programs, but its main focus is to efficiently solve specific robust optimization problems and find an assortment that achieves robustness. In comparison, our contribution is statistical: we use pessimism to address limited coverage and prove that the worst-case solution recovers the optimal assortment and achieves efficient finite-sample regret.


\section{Offline Assortment Optimization under Insufficient Data Coverage}\label{sec:pre}
In Section~\ref{sec:offline-ass-intro}, we introduce and formally define the offline assortment optimization problem. In Section~\ref{sec:offline-ass-challenges}, we explore the unique challenges rising from insufficient data coverage and illustrate them using a motivating example.  

\subsection{Offline Assortment Optimization}\label{sec:offline-ass-intro}
Let $[N]=\{1,2,..,N\}$ denote the set of $N$ distinct items. 
Denote the collection of assortments under consideration by $\mathbb{S} \subseteq 2^{[N]}\setminus\{\emptyset\}$. In the assortment optimization problem, when presented with an assortment $S \in \bbS$ offered by the seller, a customer makes a purchase $A \in S \cup \{0\}$ with $A = 0$ if no purchase is made. The seller subsequently receives a revenue $R$. 
The ultimate goal of assortment optimization is to find an optimal assortment $s^{\star} \in \bbS$ to maximize the expected revenue. Specifically, let $p_0(a|s)$ denote an unknown choice model which represents the probability of the customer purchasing $a \in s\cup\{0\}$ when $s$ is offered, with $p_0(a|s)=0$ for all $a \notin s$. The assortment optimization problem is to find an optimal assortment $s^\star$ that
\begin{align}
	s^\star \in \argmax_{s \in \bbS}\left\{ \cV_{0}(s) = \sum_{a\in s}p_{0}(a|s)r(s, a) \right\},
	\label{eq:clairvoynant}
\end{align}
where $r(s,a)=\bbE[R|S=s,A=a]$ represents the expected revenue when the customer is presented with the assortment $s \in \bbS$ and purchases the item $a \in s$.
Here, \(\cV_0(s)\) in~\eqref{eq:clairvoynant} represents the true expected revenue of assortment $s$. 
For optimization tractability, it is often assumed that the revenue $r(s,a) = r(a)$ only depends on the purchased item but not on the offered assortment~\citep{AO_MNL,AO_LP,ass_opt_2019}. We assume that the optimization problem in the form of \eqref{eq:clairvoynant} can be tractably solved. Our goal is to develop an offline learning method that further enjoys  performance guarantees under the challenge of insufficient data coverage. 
We also assume that $r(s,a)$ is a known function to the seller, which incorporates the special case that the revenue $R$ is a deterministic and known function of $(s,a)$. 


In the offline data-driven context, we choose assortments based on an offline dataset $\cD=\{S_i,A_i\}^n_{i=1}$, consisting of $n$ pairs of historically offered assortments and corresponding customer choices. We assume that they are independent and identically distributed (i.i.d.) samples of the random tuple $(S, A)$, following the probability mass function $(s,a) \mapsto \pi_S(s)p_0(a|s)$ where $\pi_S(s)$ represents the probability of observing $s$ in the offline data. 
When presented with the offline dataset, a common approach is to first estimate the true choice model $p_0$ with an estimator $\widehat{p}_n$, such as the maximum likelihood estimator (MLE), and then to solve~\eqref{eq:clairvoynant} by replacing $p_0$ with $\widehat{p}_n$. 
However, this \textit{as-if} optimization approach may fail due to the insufficient data coverage problem, which is the most critical challenge for offline learning~\citep{jin2021pessimism,shi2022pessimistic}.
Specifically, insufficient data coverage implies that certain feasible assortments are entirely absent from the offline dataset, meaning $\pi_S(s) = 0$ for some $s \in \mathbb{S}$.
In the next subsection, we present a simple example 
to demonstrate how insufficient data coverage poses a unique challenge for offline assortment optimization.

\subsection{An Example of Insufficient Data Coverage}\label{sec:offline-ass-challenges}
In the following example, we show that if an assortment never appears in the data due to insufficient data coverage, we cannot recover its expected revenue, and the optimal assortment may therefore remain ambiguous.

Consider a simple assortment optimization problem with two items, denoted as $1$ and $2$, and all possible assortments are $\{1\},\{2\},\{1,2\}$. 
Suppose that a customer makes the purchasing choice according to the following model, which is a special case of the latent class logit model. In particular, for $j \in \{1,2\}$, we assume
\begin{align}
    \begin{aligned}
    p_{0}(j|\{j\}) &= {1 \over 2}\left\{ {v_{j} \over 1 + v_{j}} + {\widetilde{v}_{j} \over 1 + \widetilde{v}_{j}} \right\}, \\ 
    p_{0}(0|\{j\}) &= {1 \over 2}\left\{ {1 \over 1 + v_{j}} + {1 \over 1 + \widetilde{v}_{j}} \right\}, \\
    p_{0}(j|\{1,2\}) &= {1 \over 2}\left\{ {v_{j} \over 1 + v_{1} + v_{2}} + {\widetilde{v}_{j} \over 1 + \widetilde{v}_{1} + \widetilde{v}_{2}} \right\}, \\
    p_{0}(0|\{1,2\}) &= {1 \over 2}\left\{ {1 \over 1 + v_{1} + v_{2}} + {1 \over 1 + \widetilde{v}_{1} + \widetilde{v}_{2}} \right\},
\end{aligned}
    \label{eq:eg_prob}
\end{align}
where the choice of $0$ means that the customer does not make a purchase,
and the item-specific parameters $v_{1},v_{2},\widetilde{v}_{1},\widetilde{v}_{2} > 0$ are unknown to the seller. Then given the item-specific revenues $r(1),r(2)>0$, 
the seller aims to choose from assortments $\{1\},\{2\},\{1,2\}$ to maximize the expected revenue by utilizing the offline data. 

Suppose that in the historical dataset, the seller can only observe the samples of singleton assortments $\{1\}$ and $\{2\}$, i.e., $\pi_{S}(\{1\}), \pi_{S}(\{2\}) > 0$, while their union assortment $\{1,2\}$ is never observed, i.e., $\pi_{S}(\{1,2\}) = 0$. 
To better understand the challenge, suppose that we can precisely recover the customer choice probability for the two singleton assortments $\{1\}$ and $\{2\}$ and have 
$p_{0}(1|\{1\}) = p_{0}(0|\{1\}) = p_{0}(2|\{2\}) = p_{0}(0|\{2\}) = 1/2$, i.e., any singleton assortment yields a 50\%  purchase rate.
These equalities imply that the unknown parameters $v_{1},v_{2},\widetilde{v}_{1},\widetilde{v}_{2}$ satisfy  
\[ p(j|\{j\}) = {1 \over 2}\left\{{v_{j} \over 1 + v_{j}} + {\widetilde{v}_{j} \over 1+\widetilde{v}_{j}}\right\} = {1 \over 2} \quad \Leftrightarrow \quad \widetilde{v}_{j} = {1 \over v_{j}}; \quad j = 1,2. \]
In other words, all the parameters in the parameter space $\{(v_1,v_2,\widetilde{v}_1,\widetilde{v}_2 ) = (v_{1},v_{2},v_{1}^{-1},v_{2}^{-1}): v_{1},v_{2} > 0 \}$ satisfy the data equally well. For example, both of the parameters $(1,1,1,1)$ and $(10,1/10,1/10,10)$ are feasible parameters to yield a purchase rate of 50\% for singleton assortments.
But these different parameters give different customer choice probabilities for $\{1,2\}$, leading to different expected revenues. As a consequence of this ambiguity, it is challenging for the seller to determine which assortment is optimal. 
 
For example, consider a scenario where item \( 1 \) has revenue \( r(1) = \frac{1}{5} \), and item \( 2 \) has revenue \( r(2) = 1 \). In this case, the revenues of the singleton assortments are given by \( \cV_{0}(\{1\}) = \frac{1}{10} \) and \( \cV_{0}(\{2\}) = \frac{1}{2} \). To determine the optimal assortment, the seller needs to compare assortment $\{1,2\}$'s expected revenue $\cV_{0}(\{1,2\})$ with \( \cV_{0}(\{1\}) \) and \( \cV_{0}(\{2\})\). However, the seller cannot determine the optimal assortment: if the true parameter is \( (1,1,1,1) \), then \(\cV_{0}(\{1,2\}) = \frac{2}{5} < \cV_{0}(\{2\})\), 
implying that \( \{2\} \) is the optimal assortment; conversely, if the true parameter is \( (10,\frac{1}{10},\frac{1}{10},10) \), then \(\cV_{0}(\{1,2\}) = \frac{303}{555} > \cV_{0}(\{2\})\),
making \( \{1,2\} \) the optimal assortment.

This motivating example illustrates that, when certain assortments are missing from the offline data due to insufficient coverage, their revenues cannot be recovered, and thus it is challenging to determine the optimal assortment.  
This challenge also makes it difficult to establish performance guarantees for the as-if optimization approach. For instance, in the example above, even with an infinite amount of offline data, the MLE is only guaranteed to converge to some point within the parameter set \( \{ (v_{1},v_{2},v_{1}^{-1},v_{2}^{-1}): v_{1},v_{2} > 0 \} \) \citep{redner1981note}, based on which the seller cannot guarantee finding an optimal assortment. It remains unclear how to perform assortment optimization in this case.

\section{Pessimistic Assortment Learning}\label{sec:pess}
Despite the aforementioned challenges of insufficient coverage, our insight is that finding an optimal assortment may not necessarily require $\pi_{S}(s) > 0$ everywhere but only at one optimal assortment~$s^{\star}$. In particular, when computing for an optimal assortment $s^\star$ following~\eqref{eq:clairvoynant},
it is unnecessary to estimate $p_0(a|s)$ well for all $s \neq s^{\star}$, as long as we can rule out those suboptimal assortments. 
Building on this insight, we propose a novel PASTA framework, where we only require the coverage at optimum, i.e., $\pi_S(s^{\star}) > 0$. This is a much weaker and more realistic assumption than that of uniform coverage required by the as-if approach, and much more likely to hold in practice.

\subsection{PASTA Framework}
Let $\cP$ be a set of conditional mass functions $p(a|s)$ for $A|S$, representing a family of models that includes the true choice model $p_0(a|s)$. For example, $\cP$ can be a family of MNL models with different parameters. 
Following the principle of pessimism, PASTA first constructs an uncertainty set for $p_0$, which serves as a set estimate of $p_{0}$. 
Specifically, given an offline dataset $\cD = \{S_i, A_i\}_{i=1}^{n}$ where $n$ is the sample size, we let the likelihood-based loss function $\widehat L_n(p)$ (smaller the better) be
\[ \widehat L_n(p) := -\frac{1}{n}\sum_{i=1}^{n}\log p(A_i|S_i); \quad p \in\cP.\]
Let $\widehat{p}_n \in \argmin_{p \in \cP}\widehat{L}_n(p)$ be the maximum likelihood estimator (MLE). 
For a set estimate of the true choice model $p_{0}$, we consider the set of $\alpha_{n}$-minimizers of the loss function $\widehat{L}_{n}(\cdot)$ defined as
\begin{equation}\label{eqn:uncertainty-set}
\Omega_{n}(\alpha_n) := \bigg\{ p \in \cP: \widehat{L}_{n}(p) - \widehat{L}_{n}(\widehat{p}_n) \leq \alpha_n\bigg\}, 
\end{equation}
where $\alpha_n >0$ is the critical value to control the size of the uncertainty set \(\Omega_{n}(\alpha_n)\). 
Note that $\Omega_{n}(\alpha_{n})$ is essentially a confidence region induced by the likelihood ratio test. 
Without identification assumptions,
\citet{manski2002inference} point out that the consistency of MLE $\widehat{p}_{n}$ is ambiguous, 
while the set estimate $\Omega_{n}(\alpha_{n})$ is consistent with a proper choice of $\alpha_{n}$ that $\bbP\{\Omega_{n}(p_{0}) \in \alpha_n\} \to 1$.
We provide the details of how to determine $\alpha_{n}$ in Section \ref{sec:theory} later.
In the sequel, we  drop $\alpha_n$ in~$\Omega_{n}(\alpha_n)$ when there is no confusion. 

After constructing  uncertainty set $\Omega_{n}$, PASTA  solves the robust optimization problem 
\begin{equation}\label{eqn: pessimistic algorithm}
    \widehat s_{\pess,n} \in \argmax_{s \in \bbS} \min_{p \in \Omega_n} \left\{\mathcal{V}(s;p):=\sum_{a\in s}p(a|s)r(s,a)\right\}.
\end{equation} 
Here, for any $p \in \cP$, we let $\mathcal{V}(s;p)$ be the value function of assortment $s$ under  choice model $p$. 
For a given assortment $s \in \bbS$, the inner layer of minimization computes the worst-case revenue among all possible choice models within the uncertainty set. In particular, when the uncertainty set contains the true model, i.e., $ p_{0}\in \Omega_{n}(\alpha_n)$ , 
the optimal value $\min_{p \in \Omega_n} \mathcal{V}(s;p)$ is a proper lower bound of the true value $\cV_0(s)$ that $\cV_0(s)=\cV(s;p_0) \ge \min_{p \in \Omega_n} \mathcal{V}(s;p)$. 
In this way, the max-min value attained by \eqref{eqn: pessimistic algorithm} is also a valid lower bound of the optimal revenue $\max_{s \in \bbS}\cV_{0}(s)$ attained by the true optimal assortment. To shed more intuition of PASTA, we compare it with the commonly used as-if approach that
\begin{equation}\label{eqn:as-if-opt}
    \widehat s_{\ML,n} \in \argmax_{s \in \bbS}\left\{\cV(s;\widehat p_n)\right\},
\end{equation}
where we replace $p_0$ with an estimate $\widehat p_n$ as if $\widehat p_n$ is the true model. 
Note that  a less observed or never observed assortment $s$ in the data often incurs some larger  estimation error for $\widehat{p}_{n}(\cdot|s)$. Consequently, this potentially leads to some overestimation of its true expected revenue \(\cV(s;p_0)\). That is, the estimated revenue $\cV(s;\widehat p_n)$ can be much larger than its true value $\cV(s;p_0)$. 
Such estimation errors increase the risk of the as-if approach mistakenly selecting a suboptimal assortment.
In contrast, PASTA quantifies the estimation error via the uncertainty set $\Omega_n$, and robustifies assortment optimization against the estimation error. 
Specifically, when the estimated revenue $\mathcal{V}(s; \widehat{p}_n)$ for an assortment $s$ is highly uncertain, the worst-case revenue $\min_{p \in \Omega_{n}} \mathcal{V}(s; p)$ substantially underestimates $\mathcal{V}_0(s)$. Consequently, the outer layer of \eqref{eqn: pessimistic algorithm} selects an alternative assortment with a relatively higher worst-case revenue.
In this way, PASTA prevents the suboptimal assortments from being mistakenly selected due to their highly uncertain estimated revenues. To be successful, PASTA only needs that the underestimation of the revenue at an optimal assortment is mild. This explains why the PASTA algorithm only requires some coverage at the optimum, a much weaker and thus more realistic assumption than the uniform coverage assumption. 

In what follows, we establish the theoretical guarantees for our proposed framework.

\subsection{General Regret Guarantees}\label{sec:general-regret}
We show that the proposed PASTA framework~\eqref{eqn: pessimistic algorithm} achieves a general regret guarantee under the weak assumption of coverage at optimum, where we only require that  $\pi_{S}(s^\star) > 0$ for an optimal assortment~$s^\star$ that solves~\eqref{eq:clairvoynant}.  
We defer the detailed proofs in this subsection to the Appendix\ref{sec:app-proofs}. 

\paragraph{Notations} For any vector $x$, let $x^\intercal$  and $\|x\|_2$ respectively denote the transpose and $\ell_2$-norm of $x$. For any finite set $A$, let $|A|$ denote the cardinal number of $A$. For any two sequences $\{\varpi(n)\}_{n \geq 1}$ and $\{\gamma(n)\}_{n \geq 1}$, we write $\varpi(n) \gtrsim  \gamma(n)$ (respectively $\varpi(N) \lesssim \gamma(n)$) whenever there exist constants $c_1 > 0$
(resp. $c_2 > 0$) such that $\varpi(n) \geq c_1 \gamma(n)$ (respectively $\varpi(n) \leq c_2 \gamma(n)$). Moreover, we write $\varpi(n) \eqsim  \gamma(n)$ whenever $\varpi(n) \gtrsim  \gamma(n)$ and $\varpi(n) \lesssim  \gamma(n)$. 

Our analysis involves the likelihood ratio-based uncertainty quantification, and we adopt the Hellinger distance~\citep{wong1995probability,bilodeau2023minimax} for such quantification. In particular, given an assortment $s$ and two choice models $p_1(a|s),p_2(a|s)$, let 
\[h^2(p_1,p_2,s)={1 \over 2}\sum_{a \in s\cup\{0\}}\left( \sqrt{p_1(a|s)} - \sqrt{p_2(a|s)} \right)^{2}\]
be their squared Hellinger distance. 
Meanwhile, note that the offline assortment $S$ is stochastic. We further consider the
\emph{generalized squared Hellinger distance} between two choice models $p_1$ and $p_2$ by taking the expectation over $S$ with respect to the data distribution $\pi_S$ that
\begin{equation}\label{eq:H}
    \begin{aligned}
          H^2(p_1,p_2) = \bbE_S[h^2(p_1,p_2,S)].
    \end{aligned}
\end{equation}

To evaluate offline assortment optimization algorithms, we adopt the following regret as the performance metric. In particular, for any estimated optimal assortment $\widehat{s}$, we let its regret be
\[ \cR( \widehat{s} ) = \cV_0(s^{\star}) - \cV_0(\widehat{s}),\] 
which represents the gap between the expected revenues of an optimal assortment and the estimated one $\widehat{s}$. Note that by definition~\eqref{eqn: pessimistic algorithm}, $ \cV(s;p_0) = \cV_0(s)$ for all $s \in \bbS$.

We then establish an upper bound  for $\cR(\widehat{s}_{\pess,n})$, where $\widehat{s}_{\pess,n}$ is the solution from PASTA in~\eqref{eqn: pessimistic algorithm}. 
We begin with the following lemma that sheds light on our proof techniques 
via pessimistic optimization.
\begin{lemma}\label{lemma:general_pasta_regret}
If $\pi_{S}(s^\star) > 0$ for an optimal assortment $s^\star$, 
and the uncertainty set \eqref{eqn:uncertainty-set} includes the true choice model, i.e., $p_{0} \in \Omega_{n}$, then
\[ \cR(\widehat{s}_{\pess,n}) \le \max_{p \in \Omega_n} \big\{\cV(s^{\star};p_0) -  \cV(s^{\star};p)\big\} \le \mathcal{C}\max_{p \in \Omega_n}\sqrt{H^2(p,p_0)}, \]
where $\mathcal{C} = r_{s^\star}\sqrt{1/\pi_S(s^\star)}$ is a constant independent of the sample size $n$, and $r_{s^{\star}} = \max_{j \in s^\star}r(s^\star,j)$ is the maximal revenue among the items in $s^\star$.
\end{lemma}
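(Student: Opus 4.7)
I would split the claim into two halves and prove each separately. The first inequality is a direct consequence of the max--min structure together with the premise $p_0 \in \Omega_n$. Write $\cR(\widehat{s}_{\pess,n}) = \cV(s^\star;p_0) - \cV(\widehat{s}_{\pess,n};p_0)$. Since $p_0 \in \Omega_n$, I can lower bound $\cV(\widehat{s}_{\pess,n};p_0) \ge \min_{p \in \Omega_n}\cV(\widehat{s}_{\pess,n};p)$, and by the max--min optimality of $\widehat{s}_{\pess,n}$ in \eqref{eqn: pessimistic algorithm}, $\min_{p \in \Omega_n}\cV(\widehat{s}_{\pess,n};p) \ge \min_{p \in \Omega_n}\cV(s^\star;p)$. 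Chaining these bounds and rewriting $\cV(s^\star;p_0) - \min_{p \in \Omega_n}\cV(s^\star;p)$ as $\max_{p \in \Omega_n}\{\cV(s^\star;p_0) - \cV(s^\star;p)\}$ yields the first half.

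For the second inequality, fix any $p \in \Omega_n$ and expand
\begin{align*}
\cV(s^\star;p_0) - \cV(s^\star;p) = \sum_{a \in s^\star} r(s^\star,a)\,\big(p_0(a|s^\star) - p(a|s^\star)\big).
\end{align*}
I would factor $p_0(a|s^\star) - p(a|s^\star) = \big(\sqrt{p_0(a|s^\star)} - \sqrt{p(a|s^\star)}\big)\big(\sqrt{p_0(a|s^\star)} + \sqrt{p(a|s^\star)}\big)$ and apply the Cauchy--Schwarz inequality. Using $r(s^\star,a) \le r_{s^\star}$, the elementary bound $\sum_{a \in s^\star \cup \{0\}}\big(\sqrt{p_0(a|s^\star)} + \sqrt{p(a|s^\star)}\big)^2 \le 2\sum_{a}\big(p_0(a|s^\star) + p(a|s^\star)\big) \le 4$, and the identity $\sum_{a}\big(\sqrt{p_0(a|s^\star)} - \sqrt{p(a|s^\star)}\big)^2 = 2h^2(p_0,p,s^\star)$, I obtain a bound of the form $C_0\, r_{s^\star}\, h(p_0,p,s^\star)$ for a numerical constant $C_0$.

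The remaining step is to pass from the local Hellinger distance $h(p_0,p,s^\star)$ at the single assortment $s^\star$ to the generalized distance $H$ defined in \eqref{eq:H}. Since each squared Hellinger term is nonnegative,
\begin{align*}
H^2(p_0,p) = \sum_{s \in \bbS} \pi_S(s)\, h^2(p_0,p,s) \ge \pi_S(s^\star)\, h^2(p_0,p,s^\star),
\end{align*}
so $h(p_0,p,s^\star) \le \sqrt{H^2(p_0,p)/\pi_S(s^\star)}$. Taking the maximum over $p \in \Omega_n$ yields the desired bound, with the numerical constant $C_0$ absorbed into $\mathcal{C} = r_{s^\star}/\sqrt{\pi_S(s^\star)}$. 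No step is technically delicate; the conceptual point, rather than any obstacle, is that the regret inherits the approximation quality of $p$ only at $s^\star$, which is precisely what allows PASTA to succeed under the weak coverage condition $\pi_S(s^\star) > 0$ in contrast to approaches that require uniform coverage.
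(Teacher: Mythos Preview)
Your proof is correct. The first inequality is argued exactly as in the paper. For the second inequality you take a slightly more direct route than the paper does: the paper rewrites $\cV(s^\star;p_0)-\cV(s^\star;p)$ via a change of measure as $\bbE_S[f(S)g(S)]$ with $f(S)=\ind(S=s^\star)/\pi_S(S)$, applies H\"older to split off $\sqrt{\bbE_S[f^2]}=1/\sqrt{\pi_S(s^\star)}$, and then bounds $\bbE_S[\|p_0(\cdot|S)-p(\cdot|S)\|_1^2]\le 8H^2(p_0,p)$ by the TV--Hellinger inequality. You instead stay at the single assortment $s^\star$, bound the value gap by $C_0\,r_{s^\star}\,h(p_0,p,s^\star)$ via Cauchy--Schwarz (which is the same TV--Hellinger step), and then use the trivial pointwise inequality $\pi_S(s^\star)h^2(p_0,p,s^\star)\le H^2(p_0,p)$. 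Both routes produce the same numerical constant $C_0=2\sqrt{2}$, so neither matches the stated $\mathcal{C}$ exactly; this is a harmless slip shared by the paper's own argument. Your version is a bit more elementary and makes the dependence on $\pi_S(s^\star)$ (and the irrelevance of coverage at $s\ne s^\star$) more transparent.
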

\begin{proof}{Proof of Lemma~\ref{lemma:general_pasta_regret}} To prove the first inequality, we have 
\[\begin{aligned}
    &\cV(s^{\star};p_0)-\cV(\widehat{s}_{\pess,n};p_0)\leq \cV(s^{\star};p_0) - \min_{p \in \Omega_n}\cV(\widehat{s}_{\pess,n};p)\\
&\qquad\leq \cV(s^{\star};p_0) - \min_{p \in \Omega_n}\cV(s^{\star};p)=\max_{p \in \Omega_n} \big\{\cV(s^{\star};p_0) -  \cV(s^{\star};p)\big\},
\end{aligned}\]
where the first inequality above holds since $p_0 \in \Omega_n$, and the second inequality holds because $\widehat{s}_{\pess,n}$ solves~\eqref{eqn: pessimistic algorithm}. 
In the Appendix~\ref{sec:proof-thm-regret}, by direct applications of Lemmas~\ref{lemma:l1_distance_ub} and~\ref{lemma:l1_H_ub}, we prove that for any $p \in \cP$,
\[
\left|\cV(s^{\star};p_0) -  \cV(s^{\star};p) \right| \le \mathcal{C}\sqrt{H^2(p,p_0)},
\]
which concludes the proof.
\end{proof}

We compare Lemma \ref{lemma:general_pasta_regret} with the typical theoretical justifications for the as-if approaches \citep{kitagawa2018should}.
Specifically, in the regret analysis for the as-if optimization, the regret is typically upper bounded by a uniform estimation error $\sup_{s \in \bbS}\big|\cV(s;\widehat{p}_n) - \cV(s;p_0)\big| $ for values across all $ s \in \bbS $~\citep{qian2011performance}. This is to control the error caused by the greedy policy, i.e., $\big|\cV(\widehat{s}_{\ML,n},\widehat{p}_n) - \cV(\widehat{s}_{\ML,n},p_0)\big|$, where \(\widehat{s}_{\ML,n}\) is the solution of the as-if optimization in~\eqref{eqn:as-if-opt}. Such an upper bound typically entails that we estimate the value function $\mathcal{V}(s;p)$ uniformly well for all $s \in \bbS$. It further explains why an estimate-then-optimize approach requires the strong assumption that $\pi_{S}(s) > 0$ for all $s \in \bbS$.
In contrast,  Lemma \ref{lemma:general_pasta_regret} suggests that we only need to control the value function's estimation error at $s^\star$, i.e., \(\max_{p \in \Omega_n} \big\{\cV(s^{\star};p_0)-  \cV(s^{\star};p)\big\}\), as long as the uncertainty set $\Omega_n$ contains the true model $p_0$. 
This explains why PASTA only requires the coverage $\pi_{S}(s^{\star}) > 0$ at the optimal assortment $s^{\star}$.  

To leverage  Lemma~\ref{lemma:general_pasta_regret} for establishing theoretical guarantees of PASTA, we further need
\textit{(i)} the validity of the uncertainty set $\Omega_{n}$  that  $p_0\in\Omega_{n}$ with high probability, and 
\textit{(ii)} an upper bound of the set estimation error $\max_{p \in \Omega_n}H(p,p_0)$, measured by the maximum generalized Hellinger distance in~\eqref{eq:H} between the uncertain models within $\Omega_{n}$ and the true model $p_0$.
We summarize the assumptions and conditions in Assumption~\ref{ass: technical assumptions}, which serve as sufficient conditions for establishing the finite-sample regret guarantees of PASTA.
Without loss of generality, we assume that the revenue function is non-negative, i.e., $r(a) \ge 0 $ for all $a \in [N]$, and $r(0) = 0$ that no purchase incurs zero revenue.
\begin{assumption}\label{ass: technical assumptions}%
\mbox{ }\\
(I) The true choice model is correctly specified by the family $\cP$, i.e., $p_0 \in \cP$. \\
(II) {\sc [Coverage at Optimum]} 
For the offline data generating process, there exists $s^\star$ as an optimal assortment solving~\eqref{eq:clairvoynant}, 
such that $\pi_S(s^\star)>0$.\\
(III) {\sc [Uncertainty Set Validity]} For any $0 < \delta < 1$, with probability at least $1-\delta$, there exists a constant $\alpha_n(\delta)$ depending on $\delta$ such that $p_0 \in \Omega_n(\alpha_n(\delta))$, and $\sup_{p \in \Omega_n(\alpha_n(\delta))}H^{2}(p, p_0) \lesssim {\alpha_n(\delta)}$.
\end{assumption}

In Assumption~\ref{ass: technical assumptions} (I), we focus on a correctly specified and fixed model family $\cP$, although our arguments can be extended to a family growing in $n$ that approximates $p_{0}$ via sieve methods \citep{wong1995probability}.
In Assumption~\ref{ass: technical assumptions} (II),
we highlight that PASTA only requires the coverage at optimum, a much weaker condition compared to the uniform coverage that $\inf_{s \in \bbS}\pi_{S}(s) > 0$. 
As PASTA is a unified framework for general choice models, and our goal is to establish a generic regret guarantee without relying on the specific choice model assumptions, we believe that this is the minimal requirement on the offline dataset to identify an optimal assortment.  

In Assumption~\ref{ass: technical assumptions} (III),
we present the uncertainty quantification validity required by PASTA. In Section~\ref{sec:application},  we show that  Assumption \ref{ass: technical assumptions} (III) holds under different  choice model families.
In particular, our likelihood-based uncertainty set $\Omega_{n}(\alpha_{n})$ depends on a critical value $\alpha_{n}$, which needs to be selected in an \emph{efficient} manner.
On one hand, $\alpha_{n}$ cannot be too small, such that $\Omega_{n}(\alpha_{n})$ contains $p_{0}$ with high probability. In fact, $\alpha_{n}$ needs to be at least as large as the difference $\widehat{L}_{n}(p_{0})-\widehat{L}_{n}(\widehat{p}_{n})$, which is associated with the convergence rate of the likelihood ratio test statistic.
On the other hand, $\alpha_{n}$ cannot be too large, such that the entire set $\Omega_{n}(\alpha_{n})$ remains close to the truth $p_{0}$. 
In Section~\ref{sec:theory}, based on the empirical process theory, we derive an efficient $\alpha_n$ for any choice model family $\cP$. 


Notably,  the distance $H$ defined in \eqref{eq:H} only depends on the distance between $p_{1}(\cdot|s)$ and $p_{2}(\cdot|s)$ at  the assortment $s$ with positive coverage $\pi_{S}(s) > 0$. As a result, the last part of
Assumption~\ref{ass: technical assumptions} (III) only needs that for any $p \in \Omega_{n}(\alpha_{n})$, $p(\cdot|s)$ is close to $p_{0}(\cdot|s)$ for $s$ with $\pi_{S}(s) > 0$.
This eliminates the need to identify or accurately estimate $p_0(\cdot | s)$ for those assortments $s$ with no coverage, thus effectively addressing the issue of insufficient data coverage.


We then establish the generic regret bound for PASTA under Assumption~\ref{ass: technical assumptions}.

\begin{theorem}\label{thm:regret}
Let $r_{s^\star}$ be defined as in Lemma~\ref{lemma:general_pasta_regret}.
Under Assumption~\ref{ass: technical assumptions}, for any $0 < \delta < 1$, with probability at least $1-\delta$, we have $\cV(s^{\star};p_0) -  \cV(s^{\star};p) \lesssim r_{s^{\star}}\sqrt{\alpha_n(\delta)/\pi_S(s^\star)}$ uniformly for all $p \in\Omega_{n}(\alpha_n(\delta))$.
Following Lemma \ref{lemma:general_pasta_regret}, it further implies that, with probability at least $1 - \delta$, the regret of PASTA with the uncertainty set $\Omega_{n}(\alpha_{n}(\delta))$ satisfies
\[\cR(\widehat{s}_{\pess,n}) \lesssim  r_{s^{\star}}\sqrt{\frac{\alpha_n(\delta)}{\pi_S(s^\star)}}.\]
\end{theorem}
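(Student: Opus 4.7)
The plan is to chain together the intermediate Hellinger bound established inside the proof of Lemma~\ref{lemma:general_pasta_regret} with the uncertainty-set concentration statement in Assumption~\ref{ass: technical assumptions}(III), and then feed the result into Lemma~\ref{lemma:general_pasta_regret} itself. Specifically, the key pointwise inequality I would invoke is
\[
\bigl|\cV(s^{\star};p_0) -  \cV(s^{\star};p) \bigr| \;\le\; r_{s^{\star}}\sqrt{\frac{H^{2}(p,p_0)}{\pi_S(s^\star)}}\qquad \text{for all } p \in \cP,
\]
which is exactly what the proof of Lemma~\ref{lemma:general_pasta_regret} derives (by combining Lemmas~\ref{lemma:l1_distance_ub} and~\ref{lemma:l1_H_ub}) and which crucially uses coverage-at-optimum, Assumption~\ref{ass: technical assumptions}(II), to pull $1/\pi_S(s^\star)$ out of the generalized Hellinger distance.

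Next I would invoke Assumption~\ref{ass: technical assumptions}(III): fix $\delta \in (0,1)$ and let $\mathcal{E}_\delta$ be the event on which simultaneously $p_0 \in \Omega_n(\alpha_n(\delta))$ and $\sup_{p \in \Omega_n(\alpha_n(\delta))} H^{2}(p,p_0) \lesssim \alpha_n(\delta)$. By Assumption~\ref{ass: technical assumptions}(III), $\bbP(\mathcal{E}_\delta) \ge 1-\delta$. On $\mathcal{E}_\delta$, substituting the bound on $H^{2}(p,p_0)$ into the pointwise inequality above and taking the supremum over $p \in \Omega_n(\alpha_n(\delta))$ yields
\[
\sup_{p \in \Omega_n(\alpha_n(\delta))}\bigl\{\cV(s^{\star};p_0) -  \cV(s^{\star};p)\bigr\} \;\lesssim\; r_{s^{\star}}\sqrt{\frac{\alpha_n(\delta)}{\pi_S(s^\star)}},
\]
which is the first claim of the theorem.

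For the regret bound, I would observe that Lemma~\ref{lemma:general_pasta_regret} already gives
\[
\cR(\widehat{s}_{\pess,n}) \;\le\; \max_{p \in \Omega_n(\alpha_n(\delta))} \bigl\{\cV(s^{\star};p_0) -  \cV(s^{\star};p)\bigr\},
\]
provided $p_0 \in \Omega_n(\alpha_n(\delta))$ and $\pi_S(s^\star) > 0$, both of which hold on $\mathcal{E}_\delta$ under Assumption~\ref{ass: technical assumptions}(II)--(III). Combining the display above with the supremum bound just obtained delivers $\cR(\widehat{s}_{\pess,n}) \lesssim r_{s^{\star}}\sqrt{\alpha_n(\delta)/\pi_S(s^\star)}$ on the event $\mathcal{E}_\delta$, which occurs with probability at least $1-\delta$.

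There is no real obstacle once Lemma~\ref{lemma:general_pasta_regret} and Assumption~\ref{ass: technical assumptions}(III) are in hand — the argument is a direct composition. The only subtlety worth flagging carefully in writing is that the Hellinger-based pointwise bound is uniform in $p \in \cP$, so taking a supremum over $p \in \Omega_n(\alpha_n(\delta))$ is legitimate without any additional measurability or union-bound work; the randomness enters exclusively through the single high-probability event $\mathcal{E}_\delta$ that controls the uncertainty set.
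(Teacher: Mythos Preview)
Your proposal is correct and follows essentially the same route as the paper: combine the pointwise Hellinger bound from the proof of Lemma~\ref{lemma:general_pasta_regret} (via Lemmas~\ref{lemma:l1_distance_ub} and~\ref{lemma:l1_H_ub}) with Assumption~\ref{ass: technical assumptions}(III), then invoke Lemma~\ref{lemma:general_pasta_regret} for the regret. The only cosmetic difference is that the paper's written proof inserts a triangle-inequality detour through $\widehat{p}_n$ (bounding $\cV(s^\star;p_0)-\cV(s^\star;p)$ by $2\max_{p\in\Omega_n}|\cV(s^\star;p)-\cV(s^\star;\widehat{p}_n)|$ and then controlling $H^2(p,\widehat{p}_n)$), whereas you bound $H^2(p,p_0)$ directly; your version is slightly cleaner and avoids an extra appeal to $\widehat{p}_n\in\Omega_n$.
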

We point out that the two constants in the regret bound, $\pi_S(s^\star)$ and $r_{s^\star}$, only depend on the optimal $s^\star$. In particular, the regret decreases as  $\pi_S(s^\star)$ increases, that is, when $s^\star$ is observed more frequently. 
In addition, we note that the choice of  critical value $\alpha_n$ is crucial for obtaining a valid and sharp regret bound. Specifically, the smallest $\alpha_{n}$ such that Assumption \ref{ass: technical assumptions} (III) is fulfilled leads to the tightest regret bound in Theorem \ref{thm:regret}. 
In the next subsection, we derive an efficient choice of the critical value $\alpha_n$, which minimizes the regret while satisfying Assumption \ref{ass: technical assumptions} (III).

\subsection{Validity of Uncertainty Set}\label{sec:theory}
We present an efficient approach to determining the critical value such that  Assumption \ref{ass: technical assumptions} (III) holds for general choice model family $\cP$. 
Building upon the empirical process theory~\citep{van2023weak},
we first characterize the complexity of $\cP$ by calculating its bracketing number.
Next, we derive a balance equation to determine an efficient choice of the critical value $\alpha_{n}$ for the uncertainty set, which is also the learning rate of conditional density estimation of $\cP$ via MLE \citep{bilodeau2023minimax}.
Finally, we show that Assumption \ref{ass: technical assumptions} (III)  holds in Theorem \ref{thm:main-simple}.
Notably, our justifications only rely on the complexity of $\cP$ instead of the particular form of the choice model, making our analysis applicable to a wide class of choice models. 

We first introduce the definition of bracketing number, which is the tool we use to characterize the complexity of $\cP$.
Note that $\cP$ is essentially a class of measurable functions from  domain $\bbS\times [N]$ to $[0,1]$. In particular, $p\in\cP$ is a mapping from $(s,a)\in\bbS\times [N]$ to $p(a|s)\in [0,1]$.
\begin{definition}[Bracketing Number]
    A class of measurable functions $\cB\subseteq\{\bbS\times[N] \to [0,1]\}$ is an $H$-bracket-cover of $\cP$ at scale $\varepsilon > 0$, if for any $p \in \cP$, there exists $b_{1}, b_{2}\in\cB$, such that 
    $H(b_{1},b_{2}) \le \varepsilon$,
    and for any $(s,a) \in \bbS\times [N]$,
    $b_{1}(a|s) \le p(a|s) \le b_{2}(a|s)$.
    The $H$-bracketing number at scale $\varepsilon > 0$ is  the cardinality of the smallest $\cB$ satisfying the above, and is denoted as $\cN_{[]}(\varepsilon,\cP,H)$.
\end{definition}

In the literature, $\log\cN_{[]}(\varepsilon,\cP,H)$ is often referred to as the \textit{bracketing entropy}~\citep{van2023weak}. Its dependency on $\varepsilon$ as $\varepsilon \to 0^{+}$ typically characterizes the minimax learning rate (lower bound) on the function class $\cP$ \citep{yang1999information,bilodeau2023minimax},
and also determines the convergence rate (upper bound) of MLE \citep{wong1995probability}.
One typical example is 
the parametric entropy $\log\cN_{[]}(\varepsilon,\cP,H) \lesssim D\log(1/\varepsilon)$, 
which is achievable by a parametric class $\cP$ with $D$-dimensional unrestricted parameters,
or by a Vapnik–Chervonenkis (VC) function class with VC-dimension $D$.
Another example is the nonparametric entropy $\log\cN_{[]}(\varepsilon,\cP,H) \lesssim \varepsilon^{-\eta}$, which is achievable by an infinite-dimensional class of smooth functions on a compact domain differentiable up to a certain order, where $\eta > 0$ is a constant related to the smoothness of functions. 

For the assortment optimization problem of our interest, the domain of $\cP$ is $\bbS\times [N]$, which is finite. This implies that we can parameterize $\cP$ as a finite-dimensional family with dimension up to $N\cdot2^{N}$.
Thus, we consider parametric entropy in all of our applications in the next section. Meanwhile, to obtain sharp bounds, we devote most of our discussion to characterize the effective dimension $D$ that is potentially much smaller than $N\cdot 2^{N}$.
We note that our theoretical results are extendable to an infinite-dimensional customer choice model family $\cP$.
Such results are of interest if we consider a nonparametric contextual assortment optimization problem, where the customer choice depends on the presented assortment as well as some context-specific covariates. We leave this extension as a future work. 


Using the bracketing entropy $\log\cN_{[]}(\varepsilon,\cP,H)$ to measure the complexity of $\cP$, we now elaborate on our approach for an efficient choice of $\alpha_n > 0$ for any model family $\cP$. Specifically, we determine the critical value $\alpha_{n}$ based on the following \emph{balance equation}:
\begin{align}\label{eqn:main-simple-entropy}
\int_{C_1 \alpha_{n}}^{C_{2}\sqrt{\alpha_{n}}}\sqrt{\log\cN_{[]}\left(u/C_{3},\cP,H\right)}\rd u \le C_{4}\sqrt{n}\alpha_{n},
\end{align}
where $C_{1},C_{2},C_{3},C_{4} \in (0,+\infty)$ are some universal constants. 
In Theorem~\ref{thm:main-simple} below, we prove that any $\alpha_n$ satisfying~\eqref{eqn:main-simple-entropy} leads to an uncertainty set $\Omega_n(\alpha_n)$ satisfying Assumption~\ref{ass: technical assumptions} (III) with high probability. Moreover, Lemma~\ref{lem:monotonicty} shows that the solutions to~\eqref{eqn:main-simple-entropy} guarantee a monotonic property that allows us to find the most efficient choice of $\alpha_n$.

\begin{lemma}\label{lem:monotonicty}
    Suppose that $\alpha_{n} > 0$ satisfies \eqref{eqn:main-simple-entropy}. Then every $\alpha \ge \alpha_{n}$ also satisfies \eqref{eqn:main-simple-entropy}.
\end{lemma}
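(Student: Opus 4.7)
The plan is to reduce the statement to showing that the map $\alpha \mapsto F(\alpha)/\alpha$ is non-increasing, where
\[
F(\alpha) := \int_{C_1\alpha}^{C_2\sqrt{\alpha}} \sqrt{\log \cN_{[]}(u/C_3, \cP, H)}\,\rd u.
\]
Since the right-hand side of the balance equation can be written as $G(\alpha) = C_4\sqrt{n}\,\alpha$, dividing both sides by $\alpha$ shows that \eqref{eqn:main-simple-entropy} is equivalent to $F(\alpha)/\alpha \le C_4\sqrt{n}$, a constant. So if $F(\alpha)/\alpha$ is non-increasing on $(0,\infty)$, then $F(\alpha_n)/\alpha_n \le C_4\sqrt{n}$ immediately forces $F(\alpha)/\alpha \le C_4\sqrt{n}$ for every $\alpha \ge \alpha_n$, which is exactly the claim.

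To prove the monotonicity, I would rescale the integral. Setting $u = \alpha v$ gives
\[
\frac{F(\alpha)}{\alpha} = \int_{C_1}^{C_2/\sqrt{\alpha}} \psi(\alpha v)\,\rd v, \qquad \psi(\varepsilon) := \sqrt{\log \cN_{[]}(\varepsilon/C_3, \cP, H)}.
\]
The key structural fact is that $\cN_{[]}(\varepsilon, \cP, H)$ is non-increasing in $\varepsilon$ (a coarser scale needs no more brackets), so $\psi$ is non-increasing and non-negative. Fix $\alpha_1 \le \alpha_2$. Then $C_2/\sqrt{\alpha_2} \le C_2/\sqrt{\alpha_1}$, so the domain of the $\alpha_2$-integral is contained in that of the $\alpha_1$-integral, and on their common interval $[C_1, C_2/\sqrt{\alpha_2}]$ we have $\psi(\alpha_2 v) \le \psi(\alpha_1 v)$ because $\alpha_2 v \ge \alpha_1 v$. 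Chaining these two inequalities,
\[
\frac{F(\alpha_2)}{\alpha_2} = \int_{C_1}^{C_2/\sqrt{\alpha_2}} \psi(\alpha_2 v)\,\rd v \le \int_{C_1}^{C_2/\sqrt{\alpha_2}} \psi(\alpha_1 v)\,\rd v \le \int_{C_1}^{C_2/\sqrt{\alpha_1}} \psi(\alpha_1 v)\,\rd v = \frac{F(\alpha_1)}{\alpha_1},
\]
which delivers the required monotonicity.

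The one wrinkle is the boundary case $\alpha > (C_2/C_1)^2$, where the nominal upper limit $C_2\sqrt{\alpha}$ drops below the lower limit $C_1\alpha$ and the integrand region becomes empty. Under the natural convention $F(\alpha) := 0$ in that regime (or equivalently, interpreting the balance condition as vacuously true there), the argument above extends verbatim, since the rescaled integral also has an empty interval and equals zero. I do not expect this to be a serious obstacle; the only substantive step is the substitution $u = \alpha v$ that makes both the integrand's monotonicity in $\alpha$ and the shrinking of the interval in $\alpha$ visible at once.
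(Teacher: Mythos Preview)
Your proof is correct and arguably cleaner than the paper's. Both arguments reduce the claim to a monotonicity statement, but they differ in what quantity they track. You divide \eqref{eqn:main-simple-entropy} by $\alpha$, making the right-hand side constant, and then show directly via the substitution $u=\alpha v$ that $F(\alpha)/\alpha$ is non-increasing; both the shrinking interval $[C_1,C_2/\sqrt{\alpha}]$ and the pointwise decrease of $\psi(\alpha v)$ work in the same direction. The paper instead divides by $\sqrt{\alpha}$, first enlarges the integration domain by replacing the lower limit $C_1\alpha$ with the fixed $C_1\alpha_n$, and then defines $J(t)=\int_{C_1\alpha_n}^{C_2 t}\psi(u)\,\rd u$; since $J$ is concave (its derivative is $C_2\psi(C_2 t)$, non-increasing), the ratio $J(\sqrt{\alpha})/\sqrt{\alpha}$ is non-increasing, and the chain concludes with $\sqrt{\alpha_n}\le\sqrt{\alpha}$. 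Your route is more self-contained---it proves the stronger fact that $F(\alpha)/\alpha$ itself is monotone and needs no auxiliary function with a frozen lower limit---while the paper's concavity argument is a more classical entropy-integral manipulation. Your handling of the degenerate regime $\alpha>(C_2/C_1)^2$ is also fine under the natural convention that an integral over an empty interval vanishes.
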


By Lemma \ref{lem:monotonicty},
it is natural to identify a smallest $\alpha_{n} > 0$
satisfying  \eqref{eqn:main-simple-entropy}, such that the same relationship holds for every $\alpha \ge \alpha_{n}$. This approach results in the smallest critical value for the uncertainty set and thus leads to the tightest regret upper bound for PASTA, as implied by Theorem~\ref{thm:regret}.
To present a concrete example, if we have a parametric entropy $\log\cN_{[]}(\varepsilon,\cP,H) \lesssim D\log(1/\varepsilon)$, then a critical value solving~\eqref{eqn:main-simple-entropy} is $\alpha_{n} \eqsim (D/n)\log(n/D)$, which is also the typical parametric estimation rate.

Note that \eqref{eqn:main-simple-entropy} also characterizes the convergence rate of the MLE for both unconditional \citep{wong1995probability} and conditional \citep{bilodeau2023minimax} density estimation. In particular, \citet[Theorem 5]{bilodeau2023minimax} established that 
\begin{equation}\label{eqn:mle-con}
    H^{2}(\widehat{p}_{n},p_{0}) \lesssim \alpha_{n}
\end{equation}
for MLE $\widehat{p}_{n}$.
However, this is not yet sufficient for any performance guarantee for the as-if optimization, since $\widehat{p}_{n}(\cdot|s)$ and $p_{0}(\cdot|s)$ can exhibit arbitrary relationships at $s$ with no coverage, i.e., when $\pi_{S}(s) = 0$. 
Consequently, although the MLE $\widehat{p}_{n}$ converges to $p_{0}$ in the metric $H$, the challenge of insufficient data coverage, as demonstrated by the motivating example in Section~\ref{sec:offline-ass-challenges}, remains unsolved for the as-if optimization approach.

In comparison, we establish theoretical guarantees of PASTA under the challenge of insufficient data coverage, only requiring the conditions in Assumption \ref{ass: technical assumptions}.
It is important to note that the existing results for the convergence rates of MLE \citep{bilodeau2023minimax} are not sufficient for the statements required by Assumption \ref{ass: technical assumptions} (III). 
Specifically, we note that by the definition of \(\Omega_{n}(\alpha_{n})\), MLE~$\widehat{p}_{n}$ belongs to $\Omega_{n}(\alpha_{n})$.
Then the requirement \(\max_{p \in \Omega_n(\alpha_n)}H^2(p,p_0) \lesssim \alpha_n \) in Assumption~\ref{ass: technical assumptions} (III)
is a stronger condition than~\eqref{eqn:mle-con}. 
Furthermore, the condition $p_{0} \in \Omega_{n}(\alpha_{n})$ in Assumption~\ref{ass: technical assumptions} (III) is equivalent to \(\widehat{L}_n(p_0) - \widehat{L}_n(\widehat{p}_n) \lesssim \alpha_n\), which does not follow from \eqref{eqn:mle-con} immediately.
To this end, we establish these conditions below 
under \eqref{eqn:main-simple-entropy}.
For simplicity, we present below a simplified version of the theorem, while the full statement can be found in Theorem~\ref{thm:main} in Appendix~\ref{sec:app-main}.

\begin{theorem}\label{thm:main-simple}
Consider the set estimate $\Omega_{n}(\alpha)$ in \eqref{eqn:uncertainty-set}.
Suppose that the critical value $\alpha_{n} > 0$ satisfies \eqref{eqn:main-simple-entropy}.
Then for every $\alpha \ge \alpha_{n}$, with probability at least $1 - C_{5}e^{-C_{6}n\alpha}$, we have
\[ p_{0} \in \Omega_{n}(\alpha), \quad  \text{and} \quad \sup_{p\in\Omega_{n}(\alpha)}H^{2}(p,p_{0}) \le C_{7}\alpha. \]
Here, $\{C_{j}\}_{j=1}^{7}$ are universal constants characterized in Theorem \ref{thm:main}.
\end{theorem}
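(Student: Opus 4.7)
The plan is to mimic the classical empirical-process proof of MLE Hellinger rates (in the spirit of Wong--Shen, van de Geer, and \citet{bilodeau2023minimax}), but to extend the conclusion from the point estimate $\widehat p_n$ to the entire sub-level set $\Omega_{n}(\alpha)$. The central device is the \emph{midpoint trick}: for any $p \in \cP$, set $\bar p := (p + p_0)/2$ and exploit two facts. First, the log-ratio $\log[\bar p(a|s)/p_0(a|s)]$ satisfies $p_0/\bar p \le 2$, so it is bounded below by $-\log 2$ and its second moment (under $p_0$) is controlled by $H^2(p,p_0)$, making its empirical process amenable to Bernstein-type concentration with variance proxy $H^2$. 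Second, the bounded ratio makes KL and squared Hellinger distance equivalent on the midpoint, so $\bbE_S[\mathrm{KL}(p_0(\cdot|S)\,\|\,\bar p(\cdot|S))] \eqsim H^2(p,p_0)$ in both directions. Together these convert likelihood-ratio statements (the currency of $\Omega_n$) into $H^2$-statements (the currency of the conclusion).

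To bound $\sup_{p\in\Omega_n(\alpha)} H^2(p,p_0)$, observe that MLE optimality $\widehat L_n(\widehat p_n) \le \widehat L_n(p_0)$ (which uses $p_0 \in \cP$, i.e., Assumption~\ref{ass: technical assumptions}(I)) forces every $p \in \Omega_n(\alpha)$ to obey $\widehat L_n(p) - \widehat L_n(p_0) \le \alpha$. The concavity inequality $-\log(\bar p/p_0) \le -\tfrac{1}{2}\log(p/p_0)$ then yields $-\tfrac{1}{n}\sum_i\log[\bar p(A_i|S_i)/p_0(A_i|S_i)] \le \alpha/2$, and centering gives
\[ \bbE_S\!\left[\mathrm{KL}\bigl(p_0(\cdot|S)\,\|\,\bar p(\cdot|S)\bigr)\right] \;\le\; \tfrac{\alpha}{2} + \Delta_n(p), \]
where $\Delta_n(p)$ is the centered empirical process of $-\log[\bar p(A|S)/p_0(A|S)]$. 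By the equivalence above, it suffices to dominate $\Delta_n(p)$ uniformly over $\Omega_n(\alpha)$ by a small multiple of $H^2(p,p_0) + \alpha$. Peeling $\Omega_n(\alpha)$ into Hellinger shells $\{p: H(p,p_0)\in[2^{k-1}\sqrt\alpha,\,2^k\sqrt\alpha]\}$ and applying a Bernstein-type bracketing chaining bound on each shell gives
\[ \sup_{p:\,H(p,p_0)\le r}\Delta_n(p) \;\lesssim\; \frac{1}{\sqrt n}\int_{0}^{r}\sqrt{\log\cN_{[]}(u,\cP,H)}\,du + \frac{1}{n}, \]
and reading this with $r = \sqrt\alpha$ the balance equation~\eqref{eqn:main-simple-entropy} forces the right-hand side to be $O(\alpha)$, which closes the loop and delivers $H^2(p,p_0) \lesssim \alpha$ uniformly.

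To verify the remaining claim $p_0 \in \Omega_n(\alpha)$, the same chaining machinery applied to $p = \widehat p_n$ alone gives the standard MLE rate $H^2(\widehat p_n, p_0) \lesssim \alpha$, and a parallel ``excess-risk'' bound (itself a direct byproduct of the same chaining argument, as in \citet{bilodeau2023minimax}) yields $\widehat L_n(p_0) - \widehat L_n(\widehat p_n) \lesssim \alpha$ on the same event, i.e., $p_0 \in \Omega_n(\alpha)$ up to absorbing a universal constant into $C_7$.

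The hard part is the peeling/chaining step, because the oscillation bound must hold \emph{simultaneously} at every shell of $\Omega_n(\alpha)$---whose effective radius is unknown a priori---rather than at a single scale. The two cutoffs in~\eqref{eqn:main-simple-entropy} are engineered precisely for this: the upper cutoff $C_{2}\sqrt{\alpha_n}$ sets the target radius, while the lower cutoff $C_{1}\alpha_n$ truncates the bracketing integral at the resolution below which variance dominates and additional brackets no longer improve the bound. A dyadic union bound across shells, combined with Lemma~\ref{lem:monotonicty} to propagate the estimate from $\alpha_n$ to every $\alpha \ge \alpha_n$, produces the advertised tail $C_5 e^{-C_6 n\alpha}$, where $-C_6 n\alpha$ is the Bernstein exponent at Hellinger radius of order $\sqrt\alpha$. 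Matching the seven universal constants cleanly requires delicate bookkeeping but is otherwise standard.
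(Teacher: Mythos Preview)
Your proposal is correct and follows essentially the same route as the paper: the paper's Theorem~\ref{thm:main} uses the smoothed family $\cP_\tau = \{(1-e^{-\tau})p + e^{-\tau}p_0 : p \in \cP\}$ to bound the log-likelihood ratio, establishes a Bernstein condition with variance proxy $H^2(p,p_0)$ (Lemma~\ref{lem:exp}), and then runs bracketing chaining plus peeling over Hellinger shells---exactly your midpoint trick, Bernstein/chaining, and peeling, with your midpoint $(p+p_0)/2$ being the special case $\tau = \log 2$ of their parametrized smoothing. The only substantive difference is that the paper keeps $\tau$ free and later optimizes it (choosing $\tau = 2$ for Theorem~\ref{thm:main-simple}, or $\tau \approx 1.70$ to maximize $c(\tau)$), which sharpens the constants $C_5,C_6,C_7$ but does not change the mechanism.
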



We remark that our critical value determination is tight in terms of satisfying Assumption~\ref{ass: technical assumptions}~(III) for various families $\mathcal{P}$. In particular, \citet[Theorem 2]{bilodeau2023minimax} has established that for $\cP$ with a parametric entropy $\log\cN_{[]}(\epsilon,\cP,H) \eqsim D\log(1/\epsilon)$, any estimate $\widetilde{p}_{n}$ of $p_0$ satisfies $H^{2}(\widetilde{p}_{n},p_{0}) \gtrsim D/(n\log n)$ in the worst-case problem instance. This further implies that for any set estimate $\widetilde{\Omega}_{n}$, we have $\sup_{p\in\widetilde{\Omega}_{n}}H^{2}(p,p_{0}) \gtrsim D/(n\log n)$ in the worst-case. Here, the minimax rate $D/(n\log n)$ matches (up to logarithmic factors) with the upper bound $\sup_{p\in\Omega_{n}(\alpha)}H^{2}(p,p_{0}) \lesssim (D/n)\log(n/D)$ obtained in Theorem~\ref{thm:main-simple} from \eqref{eqn:main-simple-entropy}, demonstrating that our critical value determination through~\eqref{eqn:main-simple-entropy} is tight.
A similar minimax rate optimality also holds for a nonparametric entropy $\log\cN_{[]}(\epsilon,\cP,H) \eqsim \epsilon^{-\eta}$ with $0 < \eta \le 2$, corresponding to a sufficiently smooth family $\cP$.

Based on Theorem~\ref{thm:main-simple}, we conclude the regret guarantee for PASTA based on the critical value $\alpha_{n}$ from \eqref{eqn:main-simple-entropy} as in Corollary~\ref{corol:pasta-regret} below. 
To obtain a high-probability statement as in Assumption~\ref{ass: technical assumptions} (III), for every fixed $\delta > 0$, we further need $\alpha \ge {1 \over C_{6}n}\log{C_{5} \over \delta}$ in Theorem \ref{thm:main-simple}. 
Therefore, we inflate the critical value from \eqref{eqn:main-simple-entropy} by such an amount, which is mild since $\alpha_{n}$ satisfying \eqref{eqn:main-simple-entropy} is at least in the order of $n^{-1}$ even in the parametric entropy case.

\begin{corollary}\label{corol:pasta-regret}
Suppose that the assumptions of Theorem \ref{thm:main-simple} hold,
and let $\alpha_{n}^{\circ} > 0$ satisfy~\eqref{eqn:main-simple-entropy}.  
Fix $0 < \delta < 1$,
and let $\alpha_{n}(\delta) = \alpha_{n}^{\circ} + {1 \over C_{6}n}\log{C_{5} \over \delta}$.
Under Assumption \ref{ass: technical assumptions} (I) $p_0 \in \cP$ and (II) $\pi_S(s^\star) > 0$, with probability at least $1-\delta$, the regret of PASTA with the uncertainty set $\Omega_{n}(\alpha_{n}(\delta))$ satisfies
$$
\cR(\widehat{s}_{\pess,n}) \lesssim r_{s^{\star}}\sqrt{\frac{\alpha_{n}(\delta)}{\pi_S(s^\star)}}.
$$
\end{corollary}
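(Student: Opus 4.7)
The plan is to view Corollary~\ref{corol:pasta-regret} as a direct consequence of Theorem~\ref{thm:regret} once we verify that Assumption~\ref{ass: technical assumptions}~(III) holds with the prescribed critical value $\alpha_{n}(\delta)$. Assumptions (I) and (II) are given, so the only substantive step is uncertainty-set validity, which is supplied by Theorem~\ref{thm:main-simple}. The chosen inflation $\alpha_{n}(\delta) = \alpha_{n}^{\circ} + \frac{1}{C_{6}n}\log\frac{C_{5}}{\delta}$ is tailored precisely so that the exponential tail from Theorem~\ref{thm:main-simple} is controlled by $\delta$.

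First, I would argue that $\alpha = \alpha_{n}(\delta)$ is an admissible input to Theorem~\ref{thm:main-simple}. Since $\alpha_{n}^{\circ}$ satisfies the balance equation~\eqref{eqn:main-simple-entropy}, and $\alpha_{n}(\delta) \ge \alpha_{n}^{\circ}$ by construction, Lemma~\ref{lem:monotonicty} ensures $\alpha_{n}(\delta)$ also satisfies \eqref{eqn:main-simple-entropy}. Applying Theorem~\ref{thm:main-simple} with $\alpha = \alpha_{n}(\delta)$, we obtain that with probability at least $1 - C_{5}e^{-C_{6}n\alpha_{n}(\delta)}$, both $p_{0} \in \Omega_{n}(\alpha_{n}(\delta))$ and $\sup_{p\in\Omega_{n}(\alpha_{n}(\delta))}H^{2}(p,p_{0}) \le C_{7}\alpha_{n}(\delta)$ hold.

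Next, I would control the failure probability. By the definition of $\alpha_{n}(\delta)$, we have $C_{6}n\alpha_{n}(\delta) \ge C_{6}n \cdot \frac{1}{C_{6}n}\log\frac{C_{5}}{\delta} = \log\frac{C_{5}}{\delta}$, so $C_{5}e^{-C_{6}n\alpha_{n}(\delta)} \le \delta$. Thus on an event $\mathcal{E}_{\delta}$ of probability at least $1 - \delta$, the two conclusions of Theorem~\ref{thm:main-simple} hold simultaneously, which is exactly the high-probability statement required by Assumption~\ref{ass: technical assumptions}~(III) with critical value $\alpha_{n}(\delta)$.

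Finally, on $\mathcal{E}_{\delta}$, I invoke Theorem~\ref{thm:regret}: since Assumption~\ref{ass: technical assumptions} (I)–(III) are all satisfied with $\alpha_{n}(\delta)$ as the uncertainty-set radius, the regret bound
\[
\cR(\widehat{s}_{\pess,n}) \lesssim r_{s^{\star}}\sqrt{\frac{\alpha_{n}(\delta)}{\pi_{S}(s^{\star})}}
\]
follows immediately. Since this all holds on an event of probability at least $1 - \delta$, the corollary is established. The proof is essentially a bookkeeping exercise — the main (minor) obstacle is verifying that the particular inflation $\frac{1}{C_{6}n}\log(C_{5}/\delta)$ matches the exponential rate $e^{-C_{6}n\alpha}$ from Theorem~\ref{thm:main-simple} and preserves the monotonicity property of Lemma~\ref{lem:monotonicty}, both of which are immediate. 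All heavy lifting — the concentration of the likelihood ratio, the Hellinger set-diameter bound, and the translation of set-diameter to value-function error — has already been done in Theorems~\ref{thm:regret} and~\ref{thm:main-simple}.
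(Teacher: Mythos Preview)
Your proposal is correct and follows essentially the same route as the paper: verify Assumption~\ref{ass: technical assumptions}~(III) via Theorem~\ref{thm:main-simple} with $\alpha = \alpha_{n}(\delta)$, check that the exponential tail $C_{5}e^{-C_{6}n\alpha_{n}(\delta)}$ is at most $\delta$ by the choice of inflation, and then invoke Theorem~\ref{thm:regret}. One small remark: your appeal to Lemma~\ref{lem:monotonicty} is harmless but unnecessary, since Theorem~\ref{thm:main-simple} already applies to every $\alpha \ge \alpha_{n}^{\circ}$ directly, so $\alpha_{n}(\delta) \ge \alpha_{n}^{\circ}$ suffices without reverifying the balance equation.
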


Corollary \ref{corol:pasta-regret} concludes the performance guarantee of PASTA with three required ingredients: the correct specification of the choice model family $\cP$, the coverage at optimum, as well as the critical value determination via \eqref{eqn:main-simple-entropy}. 
For the applications in specific choice models, it remains to determine the critical value $\alpha_{n}$ in \eqref{eqn:main-simple-entropy}, which involves the proper characterization of the complexity of $\cP$ via the bracketing entropy $\log\cN_{[]}\left(\varepsilon,\cP,H\right)$.
To this end, we instantiate on several of the most widely used choice model families $\cP$, and establish the corresponding regret guarantees in the next section. 


\section{Applications}\label{sec:application}

We apply Corollary~\ref{corol:pasta-regret} to several widely used choice models and establish the regret of PASTA under these models. We consider three of the most commonly used choice models -- the MNL~\citep{mcfadden1981econometric}, latent logit~\citep{latent-class-logit}, and nested logit~\citep{nested-logit} models. 
Throughout this section, we  assume that Assumption~\ref{ass: technical assumptions}~(I) $p_0 \in \cP$ and (II) $\pi_S(s^\star) > 0$ hold. 
For each of the choice model family $\cP$ to consider, we first characterize its complexity via the bracketing entropy $\log\cN_{[]}\left(\varepsilon,\cP,H\right)$,
then we determine the critical value from \eqref{eqn:main-simple-entropy},
and finally apply Corollary~\ref{corol:pasta-regret} to establish PASTA's regret guarantee.
All the proofs are provided in the Appendix~\ref{sec:app-application-proofs}. 

For a given family $\cP$, we let $\theta \in \Theta$ denote the underlying parameter, and $\Theta$ is a pre-specified parameter space. 
The form of a customer choice model $p(a|s;\theta)$ is known up to the unknown parameter $\theta$.
Hence, the family of choice models is \(\cP = \{p(a|s;\theta): \theta \in \Theta\}\).
For each model discussed in this section, there exist $\cP$-dependent constants $D_{\cP},C_{\cP} < +\infty$,
where
$D_{\cP}$ is the effective dimension of the family $\cP$,
and $C_{\cP}$ is based on the boundedness of the problem. 
Let $n$ be the sample size of the offline dataset. 
For fixed $0 < \delta < 1$,
with probability at least $1-\delta$, the regret of PASTA with the uncertainty set $\Omega_{n}(\alpha_{n}(\delta))$ takes the form
\begin{equation}\label{eqn:formula-param-regret}
\begin{aligned}
    &\cR(\widehat{s}_{\pess,n}) \lesssim  r_{s^{\star}}\sqrt{{\alpha_{n}(\delta) \over \pi_{S}(s^{\star})}};\\ 
    &\alpha_{n}(\delta) \eqsim \frac{D_{\cP}}{n}\log\frac{C_{\cP}n}{D_{\cP}}+\frac{1}{n}\log\frac{1}{\delta}.
\end{aligned}
\end{equation}
Next, we investigate specific choice models.

\subsection{Multinomial Logit (MNL) Model}
The MNL model is arguably the most widely used discrete choice model in assortment optimization literature \citep{feng2022consumer}. Given the item-specific features $ \{ x_{j} \}_{j \in [N]} $ where $x_j \in \bbR^d$, an MNL model with parameter $\theta \in \Theta \subset \bbR^{d}$ assumes that customer's preference for the $j$-th item is proportional to $\exp(x_{j}^\intercal\theta)$. 
Specifically, the customer choice probability under MNL with parameter $\theta$ is 
\begin{align}\label{eqn:mnl_ass}
    p(a|s;\theta)= 
    \begin{cases}
    \frac{\exp(x_a^\intercal\theta)}{1+\sum_{j \in s}\exp(x_j^\intercal\theta)} & a \in s;\\
    \frac{1}{1+\sum_{j \in s}\exp(x_j^\intercal\theta)} & a = 0.
    \end{cases}
\end{align}
We impose the following mild boundedness assumptions. 
In particular, we assume that the item-specific feature vectors $\{x_{j}\}_{j\in[N]} \subseteq \bbR^{d}$ are deterministic and uniformly bounded, and the parameter space $\Theta$ is also bounded.
\begin{assumption}\label{asm:bdd}
In the MNL model \eqref{eqn:mnl_ass}, there exist constants $C_{x}, C_{\Theta} < +\infty$, such that
$\max_{j\in[N]}\|x_{j}\|_{2} \le C_{x}$ and $\sup_{\theta\in\Theta}\|\theta\|_{2} \le C_{\Theta}$.
\end{assumption}
We are ready to establish the regret of PASTA under the MNL model. 
\begin{theorem}\label{thm:mnl-regret}
Consider the MNL model \eqref{eqn:mnl_ass} satisifying Assumption \ref{asm:bdd}. Let $D_{\cP}=d$ and $C_{\cP}=C_x$. 
Under Assumptions~\ref{ass: technical assumptions} (I) and (II), for $n \gtrsim D_{\cP}\log(n/D_{\cP})$ and every $\delta \in (0,1)$, with probability at least $1 - \delta$, the regret of PASTA with the uncertainty set $\Omega_{n}(\alpha_{n}(\delta))$
satisfies \eqref{eqn:formula-param-regret}. 
\end{theorem}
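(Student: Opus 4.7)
The plan is to invoke Corollary~\ref{corol:pasta-regret} with $\cP$ being the MNL family, so the work reduces to characterizing the bracketing entropy $\log\cN_{[]}(\varepsilon,\cP,H)$ and then verifying that the quoted critical value $\alpha_{n}(\delta) \eqsim (d/n)\log(C_x n/d) + (1/n)\log(1/\delta)$ solves the balance equation~\eqref{eqn:main-simple-entropy}. Since Assumptions~\ref{ass: technical assumptions}~(I) and (II) are given, the only substantive step is producing a parametric bracketing bound of the form $\log\cN_{[]}(\varepsilon,\cP,H) \lesssim d\log(C/\varepsilon)$ for some constant $C$ depending on $(C_x,C_\Theta)$.

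First, I would establish a Lipschitz-in-$\theta$ control on the log-likelihoods. Writing $\log p(a|s;\theta) = x_a^\intercal\theta - \log\bigl(1 + \sum_{j\in s}\exp(x_j^\intercal\theta)\bigr)$ for $a\in s$ and similarly for $a=0$, a direct differentiation yields $\nabla_\theta\log p(a|s;\theta) = x_a - \sum_{j\in s}x_j\,p(j|s;\theta)$ for $a\in s$ (and the analogous expression for $a=0$). Under Assumption~\ref{asm:bdd} this gradient has $\ell_2$-norm at most $2C_x$, so $\bigl|\log p(a|s;\theta_1) - \log p(a|s;\theta_2)\bigr| \leq 2C_x\|\theta_1-\theta_2\|_2$ uniformly in $(s,a)$, which is equivalent to the ratio bound $p(a|s;\theta_1)/p(a|s;\theta_2) \in [e^{-2C_x\eta},e^{2C_x\eta}]$ whenever $\|\theta_1-\theta_2\|_2\le\eta$.

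Next, I would build the $H$-brackets. For any $\eta$-cover $\{\theta_i\}$ of $\Theta$ in $\ell_2$ (of cardinality $(3C_\Theta/\eta)^d$ by standard volume arguments), set $b_1^{i}(a|s) := e^{-2C_x\eta}p(a|s;\theta_i)$ and $b_2^{i}(a|s) := e^{2C_x\eta}p(a|s;\theta_i)$. The ratio bound gives $b_1^i \leq p_\theta \leq b_2^i$ pointwise for every $\theta$ with $\|\theta-\theta_i\|_2\le\eta$. A direct computation yields
\[
h^2(b_1^i,b_2^i,s) = \tfrac{1}{2}\bigl(e^{C_x\eta}-e^{-C_x\eta}\bigr)^{2}\sum_{a\in s\cup\{0\}} p(a|s;\theta_i) \leq 2\sinh^2(C_x\eta)\lesssim (C_x\eta)^2
\]
for $C_x\eta$ bounded, hence $H(b_1^i,b_2^i) \lesssim C_x\eta$ after taking expectation over $S$. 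Choosing $\eta \eqsim \varepsilon/C_x$ produces the desired $\varepsilon$-brackets, giving $\log\cN_{[]}(\varepsilon,\cP,H) \lesssim d\log(C_xC_\Theta/\varepsilon)$, so we may identify $D_{\cP}=d$ and absorb the constant dependence into $C_{\cP}\eqsim C_x$ (taking $C_\Theta$ as part of the universal constants).

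Finally, I would plug this parametric entropy into the balance equation~\eqref{eqn:main-simple-entropy}. The integrand $\sqrt{d\log(C/u)}$ integrates to roughly $\sqrt{d\alpha_n\log(1/\alpha_n)}$ on the relevant range, so~\eqref{eqn:main-simple-entropy} holds for $\alpha_n \eqsim (d/n)\log(n/d)$ under the sample-size condition $n\gtrsim d\log(n/d)$; applying Corollary~\ref{corol:pasta-regret} with the inflation $(1/C_6 n)\log(C_5/\delta)$ yields~\eqref{eqn:formula-param-regret}. The main obstacle is the Lipschitz/bracketing step: care is needed to verify that the $\ell_2$ gradient bound survives the normalizing denominator uniformly in the assortment size $|s|\le N$ (which it does, because the gradient is a difference of a feature and a convex combination of features), and that the exponential bracket construction produces $H$-valid brackets whose widths do not carry a hidden dependence on $N$. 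Everything else — entropy integral, balance equation, Corollary~\ref{corol:pasta-regret} application — is essentially bookkeeping.
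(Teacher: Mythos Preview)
Your proposal is correct and follows essentially the same architecture as the paper: establish a Lipschitz-in-$\theta$ control, convert it to a parametric bracketing bound $\log\cN_{[]}(\varepsilon,\cP,H)\lesssim d\log(C/\varepsilon)$, solve the balance equation~\eqref{eqn:main-simple-entropy} for $\alpha_n\eqsim(d/n)\log(n/d)$, and invoke Corollary~\ref{corol:pasta-regret}. The only technical difference is that the paper bounds $\|\nabla_\theta\sqrt{p(a|s;\theta)}\|_2\le\tfrac{3}{2}C_x\sqrt{p(a|s;\theta)}$ and passes directly to a Hellinger Lipschitz bound $H^2(\theta_1,\theta_2)\le\tfrac{9}{8}C_x^2\|\theta_1-\theta_2\|_2^2$, whereas you bound $\|\nabla_\theta\log p(a|s;\theta)\|_2\le 2C_x$ and build explicit multiplicative brackets $e^{\pm 2C_x\eta}p(\cdot|\cdot;\theta_i)$; both routes yield the same entropy estimate and the remainder of the argument is identical.
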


\subsection{Latent Class Logit (LCL) Model}\label{sec:LCL}
The LCL model is closely related to the MNL model, as it assumes $K$ distinct MNL models \eqref{eqn:mnl_ass} as components, and the LCL model is the mixture of them~\citep{latent-class-logit}. Specifically, let $\Delta(K) = \{(\lambda_{1},\cdots,\lambda_{K}):\lambda_{1},\cdots,\lambda_{K} \ge 0, \lambda_{1}+\cdots+\lambda_{K}=1\}$ represent the probability simplex in $\bbR^K$. 
An LCL model is parametrized by $\theta = (\theta_{1},\cdots,\theta_{K};\lambda_{1},\cdots,\lambda_{K})$, where $\theta_k \in \widetilde{\Theta} \subset \bbR^d$ is the parameter in the $k$-th MNL model for $k \in [K]$, and $(\lambda_1,\dots,\lambda_K)\in \Delta(K)$ consists of the mixture probabilities. 
The parameter space can be written as $\Theta = \widetilde{\Theta}^{K} \times \Delta(K) \subset \bbR^{Kd+K}$. With item features $x_j \in \bbR^d$ for $j \in [N]$, the  custsomer choice probability under LCL with parameter $\theta$ is
\begin{equation}\label{eqn:lcl_ass}
 p(a|s;\theta) = 
 \begin{cases}
     \sum_{k=1}^K \frac{\lambda_k\exp(x_a^\intercal\theta_k)}{1+\sum_{j \in s}\exp(x_j^\intercal\theta_k)} &a \in s;\\
     \sum_{k=1}^K \frac{\lambda_k}{1+\sum_{j \in s}\exp(x_j^\intercal\theta_k)} &a=0.
 \end{cases}
\end{equation}

\begin{theorem}\label{thm:lcl-regret}
Consider the LCL model~\eqref{eqn:lcl_ass} with the component MNL models satisfying Assumption~\ref{asm:bdd}. Let $D_{\cP}=Kd+K-1$ and $C_{\cP}=KN\exp(C_\Theta C_x)$. Under Assumptions~\ref{ass: technical assumptions} (I) and (II), for $n \gtrsim D_{\cP}\log(n/D_{\cP})$ and every $\delta \in (0,1)$, 
with probability at least $1 - \delta$, the regret of PASTA with the uncertainty set $\Omega_{n}(\alpha_{n}(\delta))$ satisfies~\eqref{eqn:formula-param-regret}. 
\end{theorem}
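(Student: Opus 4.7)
The plan is to apply Corollary~\ref{corol:pasta-regret} to the LCL family, which reduces the task to two ingredients: (i) a bracketing entropy bound for $\cP$ in the generalized Hellinger distance $H$, and (ii) solving the balance equation~\eqref{eqn:main-simple-entropy} to obtain the critical value $\alpha_n(\delta)$.

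First, I would bound the bracketing entropy of the LCL class by reducing it to a Euclidean covering of the parameter space $\Theta = \widetilde{\Theta}^K \times \Delta(K)$. Its effective dimension is $D_{\cP} = Kd + K - 1$ (the $-1$ comes from the simplex constraint on the mixture weights). The main step is a pointwise Lipschitz estimate: for any two LCL parameters $\theta = (\theta_k,\lambda_k)_{k=1}^K$ and $\theta' = (\theta_k',\lambda_k')_{k=1}^K$, and any $(s,a)$, I would show
\[
\bigl|p(a|s;\theta) - p(a|s;\theta')\bigr| \lesssim C_{\cP}\,\|\theta - \theta'\|_2,
\qquad C_{\cP} = KN\exp(C_\Theta C_x),
\]
by differentiating the mixture expression componentwise. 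The $\exp(C_\Theta C_x)$ factor comes from bounding $\exp(x_j^\intercal \theta_k)$ uniformly under Assumption~\ref{asm:bdd}; the $N$ factor arises from the normalizing denominator $1+\sum_{j\in s}\exp(x_j^\intercal \theta_k)$, whose gradient in $\theta_k$ involves a sum over $s$ with $|s| \le N$; and the $K$ factor arises from summing the component contributions and from the simplex perturbation. Since this bound is uniform in $(a,s)$, constructing brackets $[b_1,b_2]$ around each cover point by adding and subtracting the Lipschitz envelope $C_{\cP}\varepsilon$ yields an $H$-bracket of width $\lesssim C_{\cP}\varepsilon$ (using $h^2(p_1,p_2,s) \le \tfrac{1}{2}\sum_a |p_1(a|s)-p_2(a|s)|$ after standard manipulations, then taking expectation over $\pi_S$). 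Combined with the standard volumetric bound on the Euclidean covering number of $\Theta$, this gives
\[
\log \cN_{[]}(\varepsilon, \cP, H) \;\lesssim\; D_{\cP}\log(C_{\cP}/\varepsilon).
\]

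Second, I would plug this parametric entropy into the balance equation~\eqref{eqn:main-simple-entropy}. As already observed in the discussion following Theorem~\ref{thm:main-simple}, a parametric entropy of the form $D_{\cP}\log(C_{\cP}/\varepsilon)$ is satisfied by $\alpha_n \eqsim (D_{\cP}/n)\log(C_{\cP}n/D_{\cP})$, provided $n \gtrsim D_{\cP}\log(n/D_{\cP})$ so that the integral upper limit $C_2\sqrt{\alpha_n}$ dominates the lower limit $C_1\alpha_n$. Inflating by the probability correction $(1/(C_6 n))\log(C_5/\delta)$ as prescribed by Corollary~\ref{corol:pasta-regret} delivers the critical value $\alpha_n(\delta)$ in the form stated in~\eqref{eqn:formula-param-regret}, and Corollary~\ref{corol:pasta-regret} then yields the regret bound.

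The main obstacle is obtaining the Lipschitz estimate with the correct $C_{\cP} = KN\exp(C_\Theta C_x)$. In particular, one must track separately the sensitivities with respect to the MNL parameters $\theta_k$ (which propagate through nonlinear softmax-type normalizations whose derivatives scale with $N\exp(C_\Theta C_x)$) and with respect to the mixture weights $\lambda_k$ (which live on the $(K-1)$-dimensional simplex and must be parameterized in a way that gives clean Euclidean covers). Once this Lipschitz-to-bracketing reduction is handled cleanly, the remainder of the proof essentially mirrors the MNL case in Theorem~\ref{thm:mnl-regret}, with the dimension $d$ replaced by $Kd+K-1$ and the constant $C_x$ replaced by $KN\exp(C_\Theta C_x)$.
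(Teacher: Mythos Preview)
Your high-level plan (Lipschitz bound $\Rightarrow$ parametric bracketing entropy $\Rightarrow$ balance equation $\Rightarrow$ Corollary~\ref{corol:pasta-regret}) matches the paper's, but the key technical step differs. The paper does \emph{not} bound $|p(a|s;\theta)-p(a|s;\theta')|$; instead it bounds $\|\nabla_\theta\sqrt{p(a|s;\theta)}\|_2$ directly. For the LCL mixture this is nontrivial because $\nabla_{\lambda_k}\sqrt{p}=g_k/(2\sqrt{p})$ with $p=\sum_i\lambda_i g_i$, which blows up if $p$ can be small. The paper's proof therefore first establishes a uniform \emph{lower} bound on each MNL component, $g(a,s;\theta_k)\ge \Delta:=\exp(-C_\Theta C_x)/(1+N\exp(C_\Theta C_x))$, and uses it to show $\sum_k g_k/p\le K/\Delta$; this is where the factor $N\exp(C_\Theta C_x)$ in $C_\cP$ actually originates. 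The payoff is a linear bound $H(\theta,\theta')\lesssim\|\theta-\theta'\|_2$, hence $\log\cN_{[]}(u,\cP,H)\lesssim D_\cP\log(C_{LCL}/u)$ with a clean constant.

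Your route, by contrast, bounds $p$ rather than $\sqrt{p}$ and then passes through $h^2\le\mathrm{TV}$. This avoids the lower-bound argument entirely, which is a genuine simplification. However, your claim that the resulting $H$-bracket has width $\lesssim C_\cP\varepsilon$ is not what $h^2\le\tfrac12\sum_a|p_1-p_2|$ delivers: from a sup-norm bound $|p-p'|\le C\varepsilon$ you only get $H^2\lesssim (N+1)C\varepsilon$, i.e.\ $H\lesssim\sqrt{\varepsilon}$, not $H\lesssim\varepsilon$. This slip is harmless for the conclusion---it changes the entropy to $\log\cN_{[]}(u,\cP,H)\lesssim D_\cP\log(C/u^2)=2D_\cP\log(\sqrt{C}/u)$, still parametric, and the balance equation still yields $\alpha_n\eqsim(D_\cP/n)\log(C_\cP n/D_\cP)$---but you should correct the stated bracket width. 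In short: both arguments work; the paper's $\sqrt{p}$-Lipschitz route needs the component lower bound but gives the sharper linear Hellinger control, while your $p$-Lipschitz route is more elementary at the cost of a square-root loss that is absorbed by the logarithm.
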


\subsection{Nested Logit (NL) Model}
Despite its widespread use, the MNL model assumes the Independence of Irrelevant Alternatives (IIA)~\citep{iia}, which can limit its applicability in more complex assortment problems. To this end, the NL model~\citep{nested-logit} assumes a partition of all available items into $K$ exclusive baskets $\{B_k\}_{k=1}^K$ such that $B_k \subset [N]$, $B_i \bigcap B_j = \emptyset$ for $i\neq j$, and $\bigcup_{k \in [K]} B_k = [N]$~\citep{gallego2014constrained}. The NL model is parametrized by $\theta = (\widetilde{\theta};\lambda_1,\dots,\lambda_K)$ where $\widetilde{\theta} \in \widetilde{\Theta} \subset \bbR^d$ and $0 < \lambda_1,\cdots,\lambda_{k} \leq 1$. Thus, the parameter space is $\Theta = \widetilde{\Theta}\times(0,1]^K \subset \bbR^{d+K}$.
The seller offers an assortment $s = \{s_k\}_{k=1}^K$ consisting of the sub-assortments $s_k \subset B_k$ selected from the $K$ baskets. For $k \in [K]$, let $v_k = \sum_{i \in s_k}\exp(x_{i}^\intercal \widetilde{\theta}/\lambda_k)$ be the basket-level attraction parameter. 
With item features $x_j \in \bbR^d$ for $j \in [N]$, the  custsomer choice probability under LCL with parameter $\theta$ is
\begin{equation}\label{eqn:nl_ass}
\begin{aligned}
    &p(a|s;\theta) = \begin{cases}
        \frac{v_j^{\lambda_j-1}}{1+\sum_{k=1}^K v_k^{\lambda_k}}\exp\left({x_{a}^{\intercal}\widetilde{\theta}\over\lambda_j}\right) & a \in s_{j}, ~ j = 1,\cdots,K; \\
        \frac{1}{1+\sum_{k=1}^K v_k^{\lambda_k}} & a = 0.
    \end{cases}
\end{aligned}
\end{equation}
In order to establish the regret guarantee, the required boundedness assumption becomes the following. 
\begin{assumption}\label{as:nl}
In the NL model \eqref{eqn:nl_ass},
there exist constants $C_{x}, C_{\Theta},C_{\lambda} < +\infty$, such that
$\max_{j\in[N]}\|x_{j}\|_{2} \le C_{x}$, $\sup_{\theta\in\widetilde{\Theta}}\|\widetilde{\theta}\|_{2} \le C_{\Theta}$, and
$1/\lambda_k \le C_\lambda$ for $k \in [K]$.
\end{assumption}
Note that $\lambda_{1},\cdots,\lambda_{K}$ are used to parametrize the within-basket \emph{dissimilarity} among items~\citep{nested-logit}. The assumption on their boundedness away from zero implies that the within-baskets items should be strictly distinguishable from each other. 
\begin{theorem}\label{thm:nl-regret}
Consider the NL model~\eqref{eqn:nl_ass} satisfying Assumption \ref{as:nl}. Let $D_{\cP}=d+K$ and $C_{\cP}=C_xC_\Theta C_\lambda$. Under Assumptions~\ref{ass: technical assumptions} (I) and (II), for $n \gtrsim D_{\cP}\log(n/D_{\cP})$ and every $\delta \in (0,1)$, 
with probability at least $1-\delta$, the regret of PASTA with the uncertainty set $\Omega_{n}(\alpha_{n}(\delta))$ satisfies~\eqref{eqn:formula-param-regret}.
\end{theorem}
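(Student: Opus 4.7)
The proof plan is to apply Corollary~\ref{corol:pasta-regret}, which reduces the whole regret bound to determining an efficient critical value $\alpha_n^{\circ}$ via the balance equation~\eqref{eqn:main-simple-entropy}. Since the NL family is parametrized by $\theta = (\widetilde{\theta};\lambda_{1},\dots,\lambda_{K}) \in \Theta \subset \bbR^{d+K}$, I expect a parametric bracketing entropy of the form $\log\cN_{[]}(\varepsilon,\cP,H) \lesssim (d+K)\log(C_{\cP}/\varepsilon)$. Plugging this into~\eqref{eqn:main-simple-entropy} and using the standard parametric calculation (as used after Lemma~\ref{lem:monotonicty}) yields $\alpha_n^{\circ} \eqsim \frac{D_{\cP}}{n}\log(C_{\cP}n/D_{\cP})$ with $D_{\cP} = d+K$, and Corollary~\ref{corol:pasta-regret} then delivers~\eqref{eqn:formula-param-regret}.

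The substantive part is therefore the entropy bound, which I would obtain through the usual two-step reduction. First, build an $\ell_{2}$-net of $\Theta$: since $\Theta = \widetilde{\Theta}\times(0,1]^{K} \subset \bbR^{d+K}$ is bounded (with $\|\widetilde{\theta}\|_{2}\le C_{\Theta}$ and $\lambda_{k} \in [1/C_{\lambda},1]$), it admits an $\eta$-net of cardinality $(C/\eta)^{d+K}$ for a constant $C$ depending on $C_{\Theta}$ and $C_{\lambda}$. Second, show the pointwise Lipschitz bound $|p(a|s;\theta_{1}) - p(a|s;\theta_{2})| \le L\|\theta_{1}-\theta_{2}\|_{2}$ for every $(s,a)$, with $L$ polynomial in $C_{\cP} = C_{x}C_{\Theta}C_{\lambda}$. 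Given such an $L$, the brackets $[p(\cdot|\cdot;\theta)-L\eta,\ p(\cdot|\cdot;\theta)+L\eta]$ over an $\eta$-net have $H$-width $\lesssim \sqrt{L\eta}$ (using $H^{2} \le \|p_{1}-p_{2}\|_{\infty}$ on a finite alphabet), so choosing $\eta \eqsim \varepsilon^{2}/L$ produces an $H$-bracket-cover at scale $\varepsilon$ of log-cardinality $(d+K)\log(LC/\varepsilon^{2}) \lesssim D_{\cP}\log(C_{\cP}/\varepsilon)$, which is what we need.

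The hard part is the Lipschitz estimate, because compared to the MNL case the NL choice probability~\eqref{eqn:nl_ass} involves the parameters $\lambda_{k}$ in three coupled places: as exponent divisors inside $v_{k}=\sum_{i\in s_{k}}\exp(x_{i}^{\intercal}\widetilde{\theta}/\lambda_{k})$, as outer exponents in $v_{k}^{\lambda_{k}}$, and through the factor $v_{j}^{\lambda_{j}-1}$ in the purchase probability. I plan to control the gradient component by component. Under Assumption~\ref{as:nl}, each $x_{i}^{\intercal}\widetilde{\theta}/\lambda_{k}$ lies in a bounded interval of size $O(C_{x}C_{\Theta}C_{\lambda})$, so each $v_{k}$ lies in $[v_{\min},v_{\max}]$ with $v_{\min},v_{\max}$ positive constants depending only on $C_{\cP}$. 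This lets me bound $\partial_{\widetilde{\theta}}v_{k}$ (straightforward), $\partial_{\lambda_{k}}v_{k}$ (using $\partial_{\lambda}\exp(c/\lambda) = -c\lambda^{-2}\exp(c/\lambda)$, bounded by $C_{x}C_{\Theta}C_{\lambda}^{2}\exp(C_{x}C_{\Theta}C_{\lambda})$), and $\partial_{\lambda_{k}}v_{k}^{\lambda_{k}} = v_{k}^{\lambda_{k}}(\log v_{k} + \lambda_{k}v_{k}^{-1}\partial_{\lambda_{k}}v_{k})$ (logarithms stay bounded because $v_{k}$ is bounded away from $0$ and $\infty$). Putting these together by the quotient rule on~\eqref{eqn:nl_ass} produces a uniform Lipschitz constant $L$ polynomial in $C_{\cP}$.

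Once $L$ is established, the rest is mechanical: plug the parametric entropy into the balance equation, verify $\alpha_{n}^{\circ} \eqsim (D_{\cP}/n)\log(C_{\cP}n/D_{\cP})$ under the mild sample-size condition $n \gtrsim D_{\cP}\log(n/D_{\cP})$ (so that $C_{1}\alpha_{n} < C_{2}\sqrt{\alpha_{n}}$ and the integral in~\eqref{eqn:main-simple-entropy} is well-defined), add the $(1/n)\log(1/\delta)$ inflation from Corollary~\ref{corol:pasta-regret}, and conclude. Verifying Assumption~\ref{ass: technical assumptions}~(III) is then immediate from Theorem~\ref{thm:main-simple}, and the final regret expression in~\eqref{eqn:formula-param-regret} follows directly.
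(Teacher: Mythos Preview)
Your plan is correct and follows the same template as the paper: establish a parametric bracketing entropy $\log\cN_{[]}(\varepsilon,\cP,H)\lesssim (d+K)\log(C/\varepsilon)$ via a Lipschitz argument on the NL likelihood, solve the balance equation~\eqref{eqn:main-simple-entropy} for $\alpha_n^{\circ}\eqsim (D_{\cP}/n)\log(C_{\cP}n/D_{\cP})$, and invoke Corollary~\ref{corol:pasta-regret}. The only substantive difference is in how the Lipschitz step is executed. The paper factorizes $p(a\in s_{j}|s;\theta)=g_{A}(a,s_{j};\widetilde{\theta},\lambda)\cdot g_{B}(s_{j},s;\widetilde{\theta},\lambda)$ (within-nest times across-nest), differentiates each factor, and obtains the \emph{self-bounding} estimate $\|\nabla_{\theta}\sqrt{p}\|\lesssim \sqrt{p}$; summing $(\sqrt{p(\theta_{1})}-\sqrt{p(\theta_{2})})^{2}$ over $a$ then uses $\sum_{a}p=1$ to get $H^{2}(\theta_{1},\theta_{2})\lesssim\|\theta_{1}-\theta_{2}\|_{2}^{2}$ with no alphabet-size factor. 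Your route---uniformly bounding $|\nabla_{\theta}p|$ via two-sided control of $v_{k}$ and then passing through $(\sqrt{a}-\sqrt{b})^{2}\le|a-b|$---also works, but it picks up an extra $|s|$ (hence $N$) factor in the Hellinger bracket width and forces the quadratic rescaling $\eta\eqsim\varepsilon^{2}/L$; these only inflate constants inside the logarithm, so the rate and the conclusion~\eqref{eqn:formula-param-regret} are unaffected.
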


\subsection{Minimax Optimality}
To understand the tightness of the aforementioned regret guarantees, we establish the first minimax regret lower bound for the offline uncapacitated assortment optimization problem. 
Specifically, we consider the scenarios where the choice model is an MNL model \eqref{eqn:mnl_ass} with the true parameter $\theta_0 \in \bbR^d$ and item features $\{x_{j}\}_{j\in[N]} \subset \bbR^d$. 
Here, $N$ is the total number of items.  

\begin{theorem}\label{thm:minimax_d}
Assume the offline data sample size $n \geq \min(d,N)$. There exists a universal constant~$c$ such that, for any measurable function  $\widehat s$ of the offline dataset $\cD_{n}$, there exists a worst-case offline assortment optimization problem instance with bounded item features $\{x_j\}_{j \in [N]}$, an MNL parameter $\theta_0$, an item-only reward function $r:[N]\to [0,+\infty)$, and an offline assortment data distribution $\pi_S$ with $\pi_S(s^\star) > 0$, such that the expected regret of $\widehat s$ is lower bounded by $c\cdot\sqrt{{\min(d,N)}/{n}}$. 
\end{theorem}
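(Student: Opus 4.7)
Proof proposal.
The plan is to apply a Le Cam two-point argument tailored so that the statistical difficulty concentrates in a single Fisher-information-poor direction of the MNL parameter. Let $d^{\star} := \min(d,N)$. I will exhibit one problem instance and two parameters $\theta^{\pm}$ whose $n$-sample laws $P_\pm^{\otimes n}$ are statistically indistinguishable ($d_{\mathrm{TV}} \le 1/2$), yet whose unique optimal assortments differ and are separated in revenue by order $\sqrt{d^{\star}/n}$. Since the ambiguous direction will lie in the span of the first $d^{\star}$ standard basis vectors, the case $d > N$ reduces to $d^{\star} = N$ with the last $d-N$ coordinates of $\theta$ statistically unidentifiable and irrelevant to $\cV$.

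For the construction, take items $1,\ldots,d^{\star}$ with orthonormal features $x_j = e_j \in \bbR^d$, and set the rewards $r(1) = 2$, $r(2) = 1$, and $r(j) = 0$ for $j \ge 3$ (assume $d^{\star} \ge 2$; the $d^{\star} = 1$ case is the standard one-dimensional Le Cam bound). At $\theta = 0$ direct computation gives $\cV(\{1\};0) = \cV(\{1,2\};0) = 1$ while every other nonempty assortment has $\cV(\cdot;0) \le 3/4$ (for instance $\cV(\{1,2,j\};0) = 3/4$, $\cV(\{1,j\};0) = 2/3$ for $j \ge 3$, $\cV(\{2\};0) = 1/2$, and $\cV([d^{\star}];0) = 3/(d^{\star}+1)$). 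This eliminates the near-optimal competitors that could otherwise shrink the Le Cam gap. Take the offline design $\pi_S([d^{\star}]) = 1 - 2/d^{\star}$ and $\pi_S(\{1\}) = \pi_S(\{1,2\}) = 1/d^{\star}$, so that $\pi_S(s^{\star}) > 0$ under either hypothesis below. Finally, set $\theta^{\pm} := \pm\epsilon\,e_1$ with $\epsilon = c_0 \sqrt{d^{\star}/n}$ for a sufficiently small universal $c_0 > 0$. A Taylor expansion at $\theta = 0$ yields $\cV(\{1\};t\,e_1) = 1 + t/2 + O(t^2)$ and $\cV(\{1,2\};t\,e_1) = 1 + t/3 + O(t^2)$, so $s^{\star,+} = \{1\}$, $s^{\star,-} = \{1,2\}$, any wrong choice within $\{\{1\},\{1,2\}\}$ incurs regret at least $\epsilon/6 - O(\epsilon^2)$, and any $\hat{s}$ outside this pair has regret bounded below by a universal constant.

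To bound the KL, I compute the per-assortment Fisher information at $\theta = 0$. Under the full assortment $[d^{\star}]$ all items are equiprobable with $p(a|[d^{\star}];0) = 1/(d^{\star}+1)$, so $I_{[d^{\star}]}(0) = (d^{\star}+1)^{-1}\,I_{d^{\star}} - (d^{\star}+1)^{-2}\,J_{d^{\star}}$ and $e_1^{\intercal} I_{[d^{\star}]}(0)\,e_1 = d^{\star}/(d^{\star}+1)^2 = \Theta(1/d^{\star})$. The singletons contribute $O(1)$ information each, but carry $\pi_S$-weight only $O(1/d^{\star})$, so the expected per-sample KL is of order $\epsilon^2/d^{\star}$. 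Summing, $\KL(P_-^{\otimes n}\,\|\,P_+^{\otimes n}) \lesssim n\epsilon^2/d^{\star} = c_0^2$, and for $c_0$ small enough Pinsker gives $d_{\mathrm{TV}}(P_-^{\otimes n}, P_+^{\otimes n}) \le 1/2$. The Le Cam two-point lemma then produces
\[ \inf_{\hat{s}} \max_{i \in \{-,+\}} \bbE_i[\cR(\hat{s})] \;\ge\; \tfrac{1}{2}\cdot\tfrac{\epsilon}{6}\bigl(1 - \tfrac{1}{2}\bigr) \;\gtrsim\; c_0\sqrt{d^{\star}/n}, \]
which is the claimed minimax bound after relabeling constants. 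The hypothesis $n \ge d^{\star}$ ensures $\epsilon \le c_0 \le 1$, keeping Taylor remainders negligible in both the regret and KL expansions.

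The main technical obstacle is controlling the higher-order remainder in the KL expansion on $[d^{\star}]$ uniformly in $d^{\star}$, i.e., showing $\KL(p_{\theta^-}(\cdot|[d^{\star}])\,\|\,p_{\theta^+}(\cdot|[d^{\star}])) \le C\epsilon^2/d^{\star}$ without a $d^{\star}$-dependent blow-up. A clean workaround is to bypass KL via the squared Hellinger distance on the $(d^{\star}+1)$-outcome multinomial and to invoke the standard tensorization $d_{\mathrm{TV}}(P^{\otimes n}, Q^{\otimes n}) \le \sqrt{n\,H^2(P,Q)}$; the uniform baseline $p(\cdot|[d^{\star}];0) \equiv 1/(d^{\star}+1)$ makes the Hellinger computation elementary and eliminates the need for explicit remainder bookkeeping.
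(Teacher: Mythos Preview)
Your proposal is correct but takes a genuinely different route from the paper. The paper applies Assouad's Lemma over the hypercube $\{\pm\delta\}^d$ of MNL parameters: it averages regret over $d$ different reward functions $I_i$ (each with $r_i=1$, $r_j=0$ for $j\neq i$), takes $\pi_S$ uniform over the singletons $\{\{1\},\dots,\{d\}\}$, and extracts the $\sqrt{d/n}$ rate from the $d$ coordinates of the hypercube together with a per-coordinate KL bound on singleton assortments. Your construction, by contrast, is a pure Le Cam two-point argument with a \emph{single} fixed reward function and perturbation in only one coordinate; the $d^\star$-dependence enters through information geometry rather than through multiple hypotheses---you put most of $\pi_S$ on the full assortment $[d^\star]$, where each item has probability $\Theta(1/d^\star)$ and the Fisher information in direction $e_1$ is $\Theta(1/d^\star)$, so the per-sample KL/Hellinger between $\theta^\pm$ is $O(\epsilon^2/d^\star)$.

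What each approach buys: the paper's Assouad route is the textbook mechanism for $\sqrt{d/n}$ lower bounds and makes the role of $d$ transparent as the hypercube dimension; it needs only elementary KL bounds on two-outcome (singleton) MNLs. Your route is more elementary in the testing step (two points, no Assouad), uses a single problem instance rather than averaging over $d$ reward functions, and arguably isolates the statistical source of hardness (information dilution in a large assortment) more directly. The price is the remainder control you flag---showing $\KL$ or $H^2$ on $[d^\star]$ is $O(\epsilon^2/d^\star)$ uniformly in $d^\star$---and some edge cases (your $\pi_S([d^\star])=1-2/d^\star$ needs $d^\star\ge 3$; for $d^\star\in\{1,2\}$ you would drop the full-assortment atom and the bound degenerates to $\sqrt{1/n}$, which is still the right order). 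The Hellinger workaround you suggest handles the remainder cleanly.
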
 
Notably, the minimax regret in Theorem~\ref{thm:minimax_d} matches the regret guarantee of PASTA in Theorem~\ref{thm:mnl-regret} in terms of the sample size $n$ up to logarithmic factors, demonstrating that PASTA is provably efficient in terms of sample complexity for offline assortment optimization. Moreover, in the common scenarios where $d \le N$, PASTA is also provably efficient in terms of the choice model complexity.

\section{Empirical Studies}\label{sec:exp}
In this section, we conduct experiments on simulation datasets to empirically demonstrate the effectiveness of PASTA. 
We consider offline assortment optimization under two of the most popular choice models:  the MNL and NL models. 
To facilitate empirical studies, in Section~\ref{sec:algo}, we first propose an iterative algorithm for heuristically solving the max-min problem given by PASTA in~\eqref{eqn: pessimistic algorithm}. We next discuss the data generating process and baselines in Section~\ref{sec:data-generation}, and subsequently present all the empirical results in Section~\ref{sec:perf-comparison}. 

\subsection{PASTA Algorithm}\label{sec:algo}
Our algorithm PASTA is applicable for scenarios where the choice model is a parametric model $p(a|s;\theta)$ with parameters $\theta \in \Theta \subset \bbR^d$ and $p(a|s;\theta)$ is differentiable with respect to $\theta$ for all $s$ and $a \in s\cup\{0\}$. 
These scenarios include many practical settings, such as the MNL and NL models. 
Notably, even when $\cP$ is nonparametric, it is standard to approximate $\cP$ with a parametric family so that the optimization is tractable. Hence, an algorithm for parametric models is sufficient and without loss of generality. Moreover, we follow the convention of assortment optimization literature to assume that the revenue of a selected item remains the same across distinct assortments, i.e., there exists $r_j$ for $j \in [N]$ such that for any $s \in \bbS$, $r(s,j) = r_j$ if $j \in s$ and $r(s,j)=0$ if $j \notin s$.

Specifically, let $\widehat{L}_n(\theta) = -(1/n)\sum^n_{i=1}\log p(a_i|s_i;\theta)$ and $\widehat{\theta}_n \in \argmin_{\theta \in \Theta}\widehat{L}_n(\theta)$ be the MLE. 
We aim to solve the following optimization problem
\begin{equation*}
    \begin{aligned}
        &\widehat s_{\pess,n} \in \argmax_{s \in \bbS} \min_{\theta \in \Omega_{n}(\alpha_n)} \mathcal{V}(s;\theta),
    \end{aligned}
\end{equation*}
where $\Omega_{n}(\alpha_n) := \bigg\{ \theta \in \Theta: \widehat{L}_{n}(\theta) - \widehat{L}_{n}(\widehat{\theta}_n) \leq \alpha_n\bigg\}$ and $\mathcal{V}(s;\theta) = \cV(s;p(\cdot|s;\theta))=\sum_{j\in s}r_jp(j|s;\theta)$.
The proposed algorithm PASTA is executed for a maximum of $T$ iterations. For initialization, set $\theta^{(0)} = \widehat{\theta}_{n}$ and randomly choose $s^{(0)} \in \bbS$. At the $t$-th iteration, given $s^{(t-1)}$ and $\theta^{(t-1)}$ from the previous iteration, we consecutively execute the following two steps:
\begin{itemize}
    \item[\mbox{}] $\mathcal{A}_1$. Compute the optimal assortment $s^{(t)}$ given $\theta^{(t-1)}$: $s^{(t)} \in \argmax_{s \in \bbS}\cV(s,\theta^{(t-1)})$;
    \item[\mbox{}]  $\mathcal{A}_2$. Compute the optimal $\theta^{(t)}$ using $s^{(t)}$: $ \theta^{(t)} \in \argmin_{\theta \in \Omega_n}\cV(s^{(t)},\theta)$.
\end{itemize}

In particular, the step $\mathcal{A}_1$ corresponds to finding an optimal assortment for a known choice model with parameter $\theta^{(t-1)}$. As discussed in Section~\ref{sec:related_work}, there exists a streamline of works that address this problem efficiently. 
In our experiments, we use two existing linear programming(LP)-based algorithms that respectively solve step $\mathcal{A}_1$ under the MNL and NL models~\citep{AO_LP,davis2014assortment}.
Due to the space constraint and their tangential connection to our contribution, 
we summarize them in Appendix~\ref{sec:app-lp}.

We next propose a procedure for the step $\mathcal{A}_2$. For a given optimized assortment $s^{(t)}$, we aim to search for the worst-case model parameter $\theta^{(t)}$ from the uncertainty set $\Omega_{n}$ that minimizes the expected revenue. In particular, we employ a gradient descent with line search (GDLS) method to compute $\theta^{(t)}$ by solving the following problem.
\begin{equation}
    \theta^{(t)} \in \argmin_{\theta \in \Omega_n} \mathcal{V}(s^{(t)};\theta).
\end{equation}
Given a feasible initial parameter $\theta^{(t,0)}\in \Omega_n$, we run at most $M$ 
gradient descent steps. Note that feasible \(\theta^{(t,0)}\) is easy to obtain as $\widehat{\theta}_{n}$ and $\theta^{(t-1)}$ are guaranteed to be feasible by definition. Suppose $\beta_m$ is the step size for gradient descent in the $m$-th 
step. At each step $m \in \{1,2,\cdots,M\}$, we perform a line search to maintain the feasibility. In 
particular, given $\theta^{(t,m-1)}\in\Omega_n$, we first evaluate the gradient as $\xi_m = \nabla_{\theta} \mathcal{V}(s^{(t)};\theta^{(t,m-1)})$.
Then we initiate $\beta_m$ with a pre-specified step size $\beta_m = \widetilde{\beta}$, 
and check whether $\theta^{(t,m)} = \theta^{(t,m-1)} - \beta_m \xi_m$ is feasible, i.e. 
$\theta^{(t,m-1)} \in\Omega_{n}$. If not, we set $\beta_m \leftarrow c\beta_m$ for some pre-specified $c \in (0,1)$, and recompute 
$\theta^{(t,m)} = \theta^{(t,m-1)} - \beta_m \xi_m$. Such a search is repeated until 
$\theta^{(t,m)}$ is feasible. After $M$ steps, we set \(\theta^{(t)}=\theta^{(t,M)}\) and the step $\cA_2$ finishes. We provide the pseudocode in Algorithm~\ref{algo:GDLS} for the overall process. In all of our numerical studies, we set $M=3$, $\widetilde{\beta}=0.01$ and $c=\frac{1}{5}$, which perform well.

\begin{algorithm}[h!]
   \caption{Gradient Descent with Line Search (GDLS)}
   \label{algo:GDLS}
\begin{algorithmic}[1]
\State {\bfseries Input:} assortment $s^{(t)}$; uncertainty set hyperparameter $\alpha_n$;
initial parameter $\theta^{(t,0)}$; 
initial step size $\tilde{\beta}$; 
step shrinkage constant $c$; 
number of descent steps $M$
\State {\bfseries Output:} the optimized parameter $\theta^{(t)}$
\State{$\widehat{L}_{n}(\theta)= -\frac{1}{n}\sum_{i=1}^{n}\log p(A_{i}|S_i;\theta)$}
\State{$\widehat\theta_{\ML,n} \gets \argmin_{\theta \in \Theta}\widehat{L}_{n}(\theta)$; $\widehat L_{\ML} \gets \widehat L_n(\widehat\theta_{\ML,n})$}
\For{$m=1$ {\bfseries to} $M$}
\State{$\xi_m = \nabla_{\theta} \mathcal{V}(s^{(t)};\theta^{(t,m-1)})$ \quad {\fontfamily{pcr}\selectfont/*compute the gradient*/} }
\State{$\beta_m \gets \widetilde{\beta}$}
\State{$\theta^{(t,m)}\gets\theta^{(t,m-1)}-\beta_m\xi_m$}
\While{$n\cdot\left( \widehat L_n(\theta^{(t,m)})  - \widehat L_{\ML}\right) > \alpha_n$}\quad{\fontfamily{pcr}\selectfont/* check model feasibility */}
\State{$\beta_m \gets c\beta_m$ \quad{\fontfamily{pcr}\selectfont/* decrease the step size */} }
\State{$\theta^{(t,m)}\gets\theta^{(t,m-1)}-\beta_m\xi_m$}
\EndWhile
\EndFor
\State{$\theta^{(t)}\gets\theta^{(t,M)}$}
\end{algorithmic}
\end{algorithm}

\subsection{Data Generation Process and Baselines}\label{sec:data-generation}
We compare PASTA to assortment optimization without pessimism, which in the sequel goes by the name baseline method. 
Our method and the baseline method are evaluated on synthetic data for which the optimal assortment $s^\star$ and true choice model parameter $\theta_{0}$ are known so that we can compute the regret. We study two popular choice models, the MNL and NL models. We describe the data generation process and the baseline method in details below.

\paragraph{The MNL Model} For the MNL model, we consider the assortment optimization scenarios described by $N$, $K$, $d$, $n$ and $p$, where $N$ is the total number of available products; $K$ is the cardinality constraint of the assortments, i.e., $\mathbb S = \{s: |s| \leq K\}$; $d$ is the dimension of choice model parameter $\theta_{0}$ and item features $\{x_j\}_{j=1}^N$; $n$ is the sample size of the offline dataset; $p$ is the probability of observing the optimal assortment $s^\star$. Similar to \cite{chen2020dynamic}, we first generate the true parameter $\theta_{0}$ as a uniformly random unit $d$-dim vector. Recall that $r_j$ and $x_j$ denote the reward and feature of product $j$, respectively.
For $j \in \{1,\dots,N\}$, we generate the $r_j$ uniformly from the range $[5, 8]$ and $x_j$ as uniformly random unit $d$-dim vector such that $\exp(x_j^{\intercal}\theta_{0})\leq \exp(-0.6)$ to avoid degenerate cases, where the optimal assortments include too few items. Given such information, the true optimal assortment $s^\star$ can be computed. Then, we generate an offline dataset $\mathcal{D}=\{(S_i,A_i)\}_{i=1}^{n}$ with $n$ samples. For $i \in \{1,\dots,n\}$, we generate $S_i$ following the distribution $\pi_S$ such that $\pi_S(s^\star)=p$ and $\pi_S(s)=  (1-p)/(|\mathbb{S}|-1)$ for $s \neq s^\star$, where $0<p<1$ is the probability of observing the optimal assortment $s^\star$. After the assortment $S_i$ is sampled, the customer choice $A_i$ is sampled according to the probability computed by the MNL model as in~\eqref{eqn:mnl_ass} with the true parameter $\theta_{0}$. 

\paragraph{The NL Model} For the NL model, we assume without loss of generality that items are partitioned into $K$ nests, each containing an equal number of $\kappa$ items~\citep{davis2014assortment}. Thus, the total number of items is $K\cdot\kappa$, and 
we characterize the assortment optimization problems by $K$, $\kappa$, $d$, $n$ and $p$, where $K$ is the total number of nests and $\kappa$ is the number of products in each nest; $n$ is the sample size of the offline dataset; $p$ is the probability for sampling the optimal assortment; $d$ is the dimension of item features. As presented in Section~\ref{sec:application}, the NL model has parameters $\theta_{0} = (\widetilde{\theta}_{0},\lambda_{0})$ with $\widetilde{\theta}_{0} \in \bbR^d$ and $\lambda_{0} \in (0,1)^M$ 
We consider unconstrained assortment optimization for the NL model, i.e., $\bbS = 2^{[N]}\setminus\{\emptyset\}$. The data generating process is similar to that of the MNL model case. We generate the true parameters $\widetilde{\theta}_{0}$ as a uniformly random unit $d$-dim vector and $\lambda_{0}$ uniformly within $(0,1)^M$. For notation simpilicity, let $r_{kj}$ and $x_{kj}$ respectively denote the reward and feature of the $j$-th item in the $k$-th nest. 
For $k \in \{1,\dots,K\}$ and $j \in \{1,\dots,\kappa\}$, we generate $r_{kj}$ uniformly from the range $[5, 8]$ and $x_{kj}$ as uniformly random unit $d$-dim vector such that $\exp(x_{kj}^{\intercal}\theta_{0})\leq \exp(-0.6)$. 
The generating process of the offline dataset $\cD$ is identical to that of the MNL model case.

\paragraph{Baseline Methods} For both the MNL and NL models, we use assortment optimization without pessimism, i.e., the as-if approach defined in~\eqref{eqn:as-if-opt}, as the baseline method. In our experiments, we use the gradient descent method to find the MLE $\widehat\theta_{\ML,n}$ that minimizes the empirical negative log-likelihood function $\widehat{L}_n$. Then given $\widehat\theta_{\ML,n}$, the baseline method solves the assortment optimization problem in~\eqref{eqn:as-if-opt}. 

\begin{figure}[H]
\begin{minipage}{\textwidth}
\centering
   \subcaptionbox*{}{\includegraphics[width=0.8\textwidth]{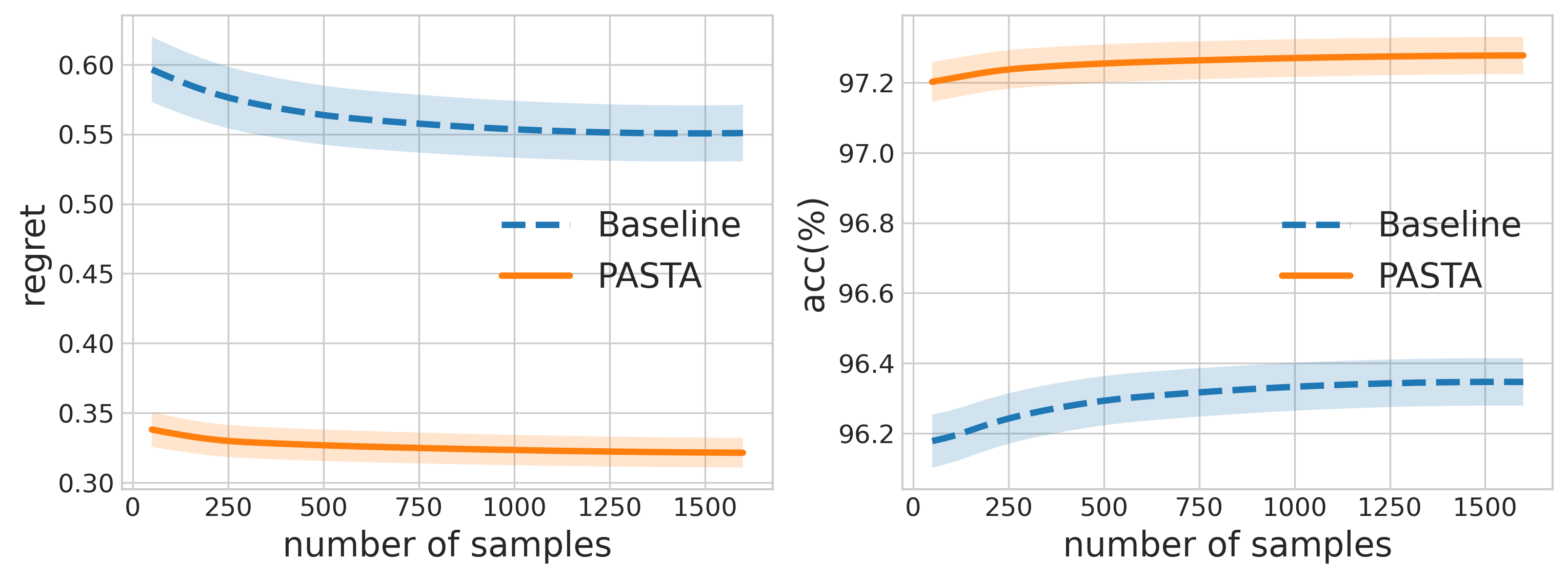}}\vspace{-1em}
\subcaptionbox*{}{\includegraphics[width=0.8\textwidth]{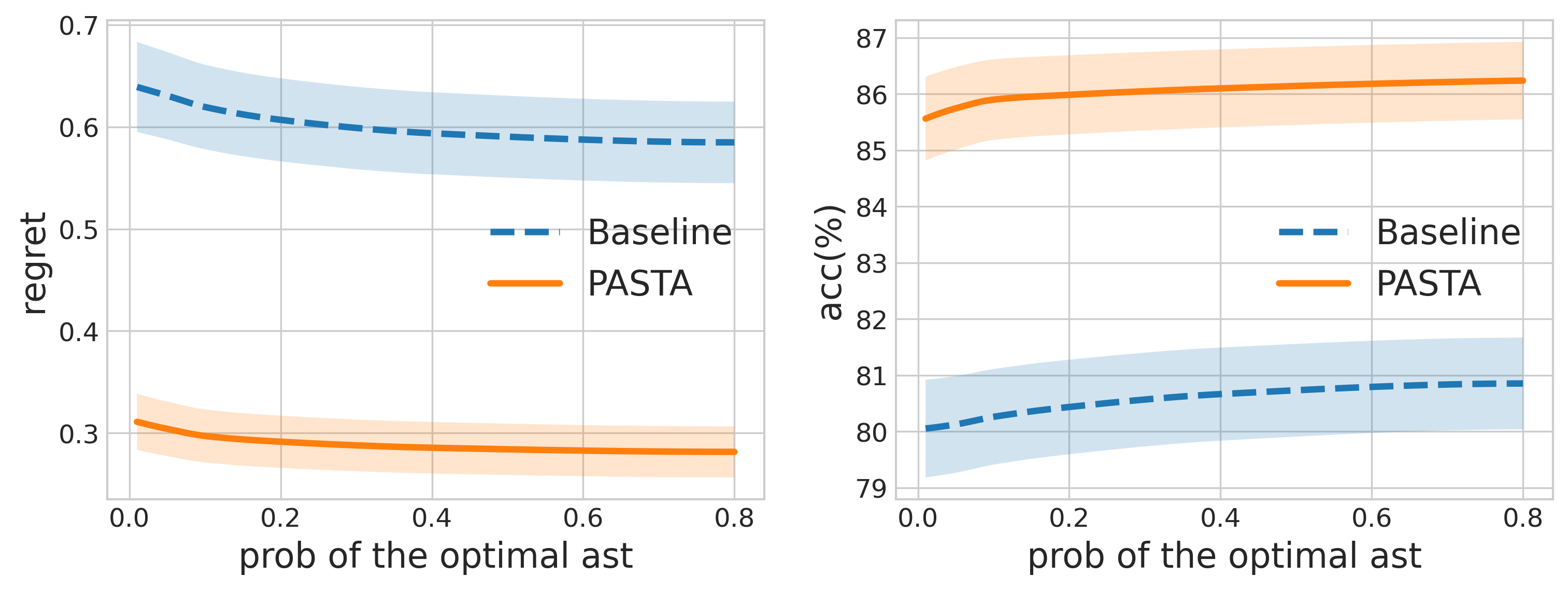}}\vspace{-1em}  
\subcaptionbox*{}{\includegraphics[width=0.8\textwidth]{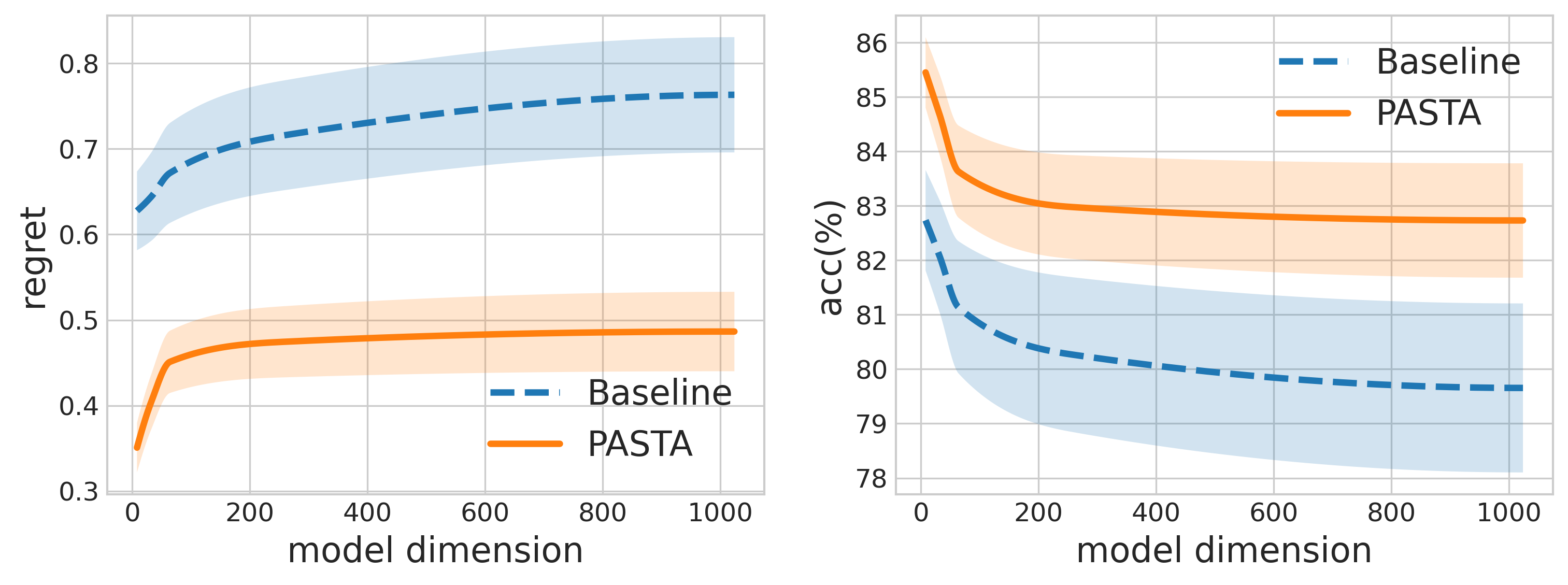}} 
\end{minipage}
\caption{Performance comparison between PASTA and the baseline method on the MNL model. The left panel shows average regret (lower is better), while the right panel presents assortment accuracy (higher is better).Our proposed method PASTA consistently outperforms the baseline method in both metrics. The performance gain remains consistent across varying probabilities of observing the optimal assortment (middle row) and varying model complexity, measured by the dimension of the model parameter (bottom row).}
\label{fig:mnl}
\end{figure}


\subsection{Performance Comparison}\label{sec:perf-comparison}
To measure the statistical variance of the results, we repeat the data generation process in Section~\ref{sec:data-generation} to randomly generate $50$ offline datasets. The solutions of PASTA and the baseline method are recorded in these experiments. 
We measure the performance with two metrics: (1) the \textit{regret} of the solutions which indicates how far the performance of the solutions is to that of the optimal performance, i.e., revenue of the optimal assortment $s^\star$; (2) the assortment \textit{accuracy} of the solutions with respect to the optimal assortment $s^\star$. The assortment accuracy of an assortment~$s$ is defined as the ratio of the number of correctly chosen products to the number of products in $s^\star$. 
For hyper-parameters, we set $\alpha_n = 2\widehat L_n(\widehat\theta_{\ML,n})$ and the maximum of iteration $T=30$.

\paragraph{The MNL Model} We first set $N=40$, $K=10$, $d=8$, $p=0.1$ and gradually increase the number of samples $n$. In Figure~\ref{fig:mnl}, we observe that PASTA consistently outperforms the baseline method in both regret and assortment accuracy. Other than the number of samples, another important parameter for the offline assortment optimization is the probability of observing the optimal assortment $\pi_S(s^\star)$. To this end,  Figure~\ref{fig:mnl} shows the performance of PASTA under varying $\pi_S(s^\star)$ from as low as $0.01$ to $0.8$. We observe that the gain of PASTA is consistent under a wide range of $\pi_S(s^\star)$.


\paragraph{The NL Model} We first set $K=3$, $\kappa=5$, $d=8$ and $p=0.1$, and gradually increase the number of samples $n$. In Figure~\ref{fig:nl}, we observe that PASTA consistently outperforms the baseline method. While the performance of the baseline method improves with increasing number of samples, PASTA maintains a regret that is less than $25\%$ of the baseline regret. Next, we study the robustness of PASTA against the probability of observing the optimal assortment $\pi_S(s^\star)$. In Figure~\ref{fig:nl}, we set $K=3$, $\kappa=5$, $d=8$, $n=200$ and increase $\pi_S(s^\star)$. As shown in Figure~\ref{fig:nl}, the gain of PASTA over the baseline method is consistent. Lastly, we study how the model complexity $d$ effects the performance. We set $K=3$, $\kappa=5$, $p=0.1$, $n=1000$ and increase $d$ from $8$ to $1024$. We present the results in Figure~\ref{fig:nl}. It can be observed that, while the regrets of PASTA and the baseline method both increase as $d$ increases, the gain of PASTA maintains.

\section{Conclusion}
We propose PASTA, a unified framework for offline assortment optimization under general choice models. Notably, the success of PASTA requires only that the offline data distribution includes an optimal assortment. We demonstrate the effectiveness of PASTA by applying it to several popular choice models, including the MNL, LCL and NL models, and establish the first finite-sample regret guarantees for the offline assortment optimization problem under these models. Furthermore, we derive the minimax regret lower bound under MNL, justifying the minimax optimality of PASTA in terms of the sample and model complexities. 
During the preparation of our manuscript, we notice a concurrent work~\citep{han2025learning} that explores assortment optimization for a specialized MNL model using the observational data.  
Their approach is ranking-based and requires only that the offline data contains the items in the optimal assortment, due to a stronger model assumption.
It remains an interesting research problem to investigate how to generalize the similar optimal item coverage for a broader class of choice models under a unified framework such as PASTA.

\begin{figure}[H]
\begin{minipage}{\textwidth}
\centering
   \subcaptionbox*{}{\includegraphics[width=0.40\textwidth]{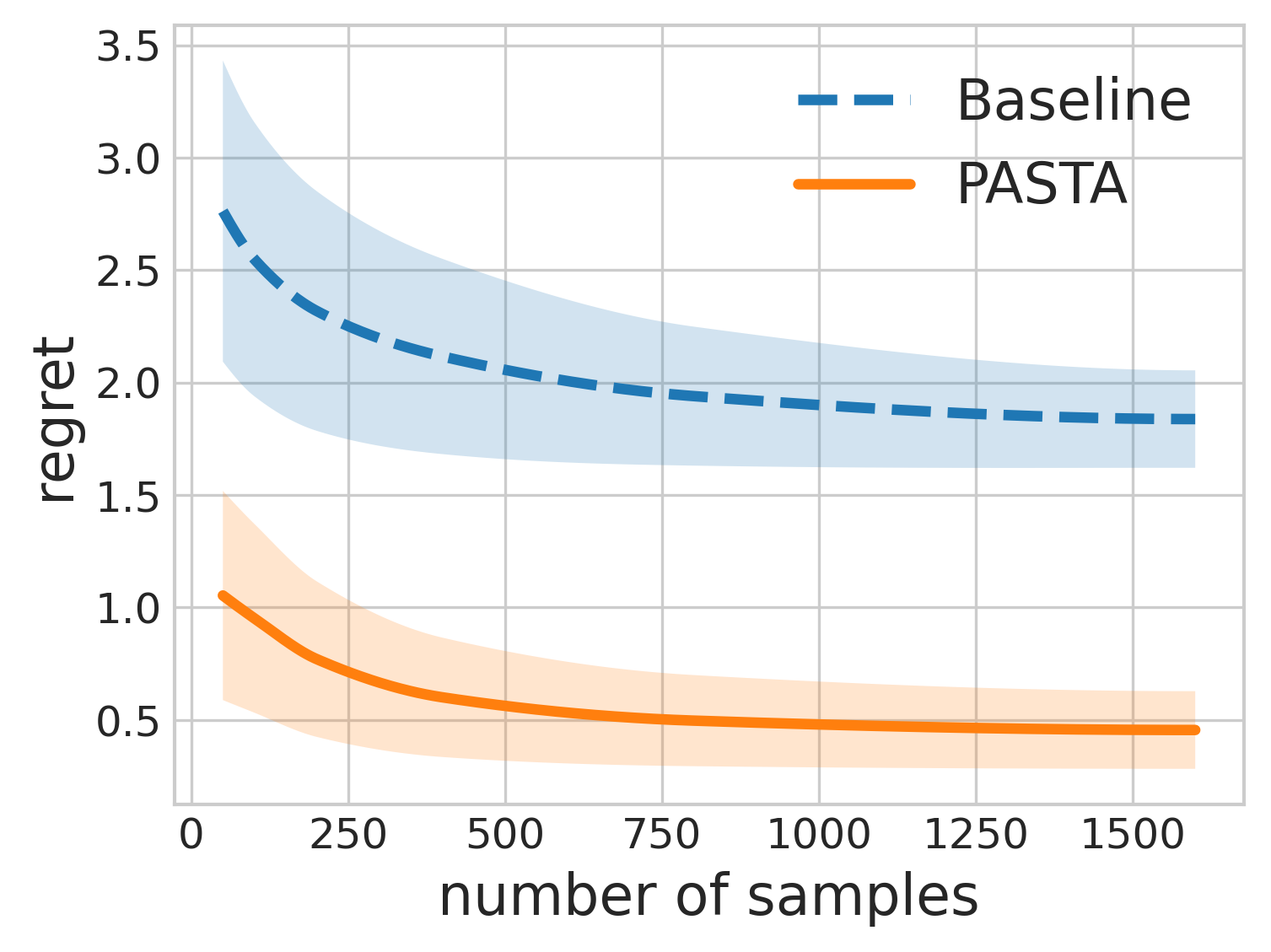}}
   \subcaptionbox*{}{\includegraphics[width=0.40\textwidth]{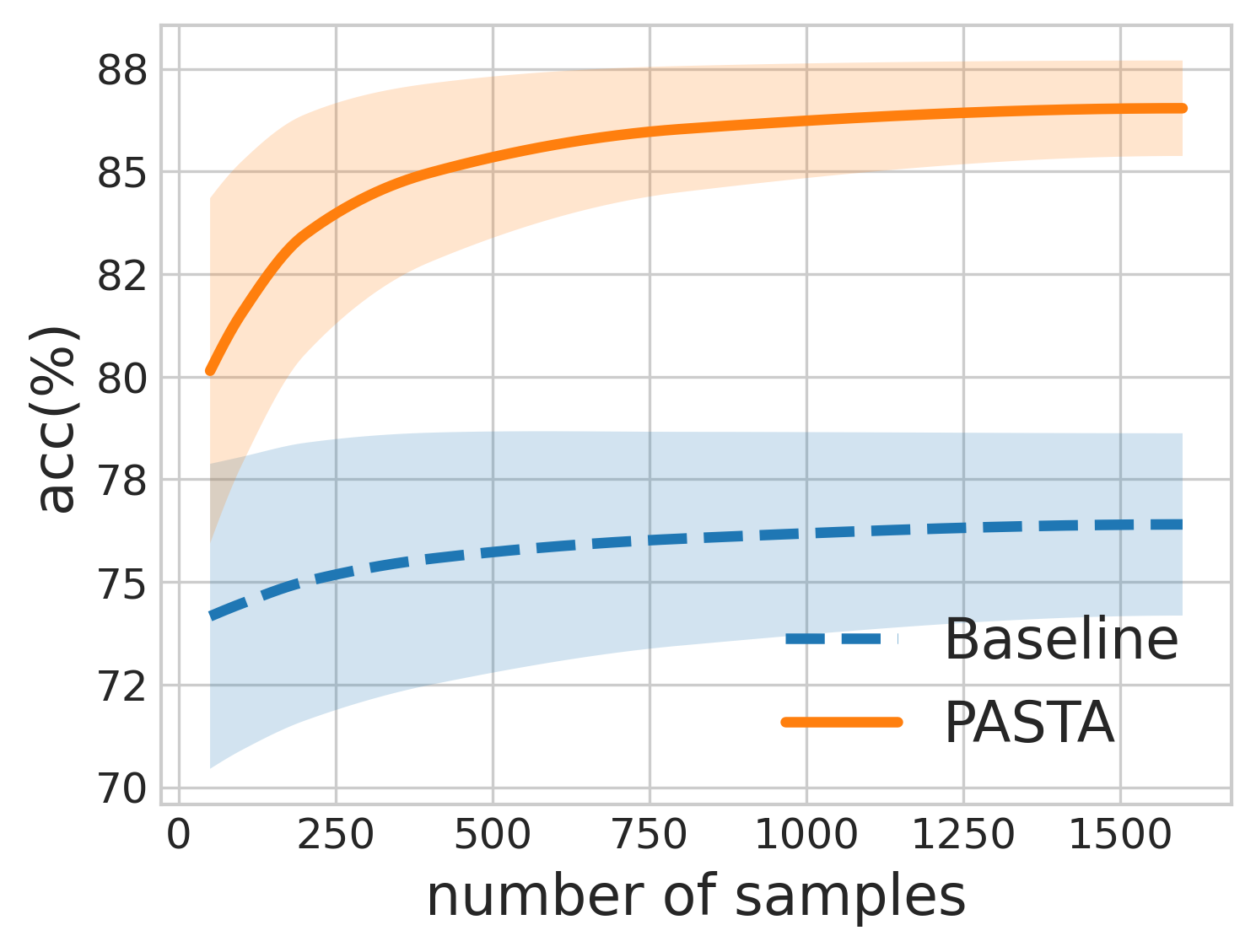}}\vspace{-1em}
\subcaptionbox*{}{\includegraphics[width=0.8\textwidth]{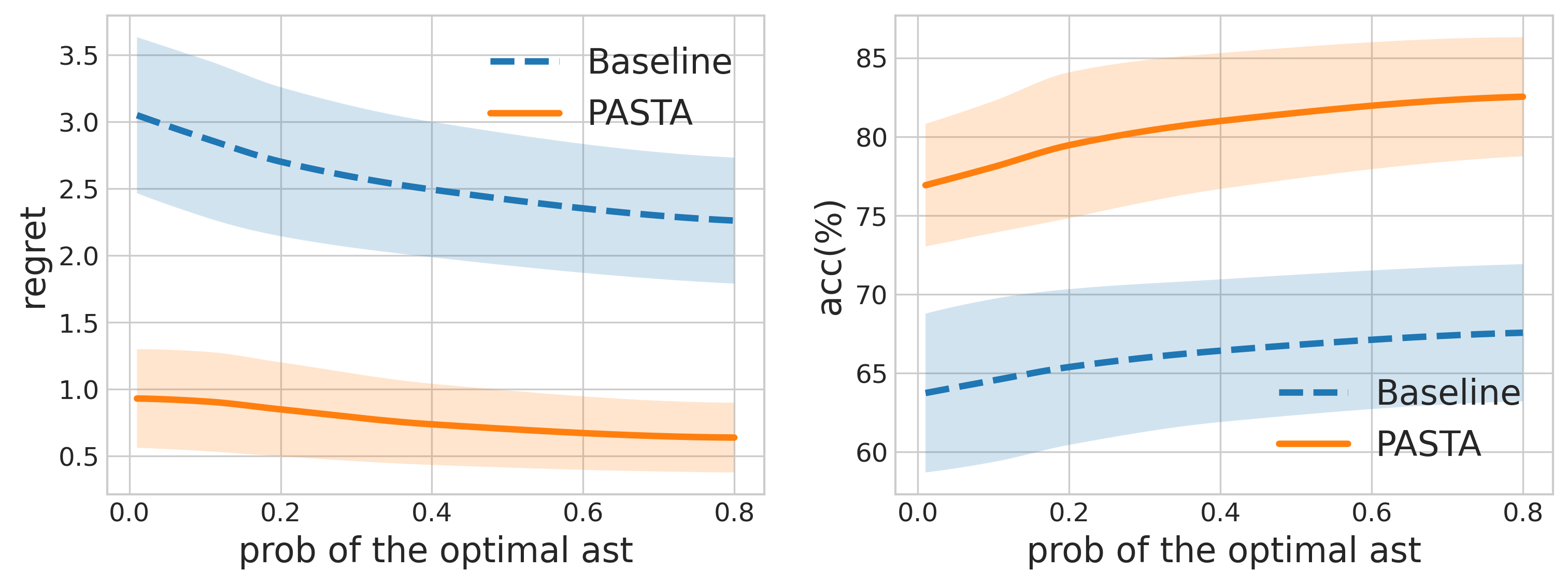}}\vspace{-1em}  
\subcaptionbox*{}{\includegraphics[width=0.40\textwidth]{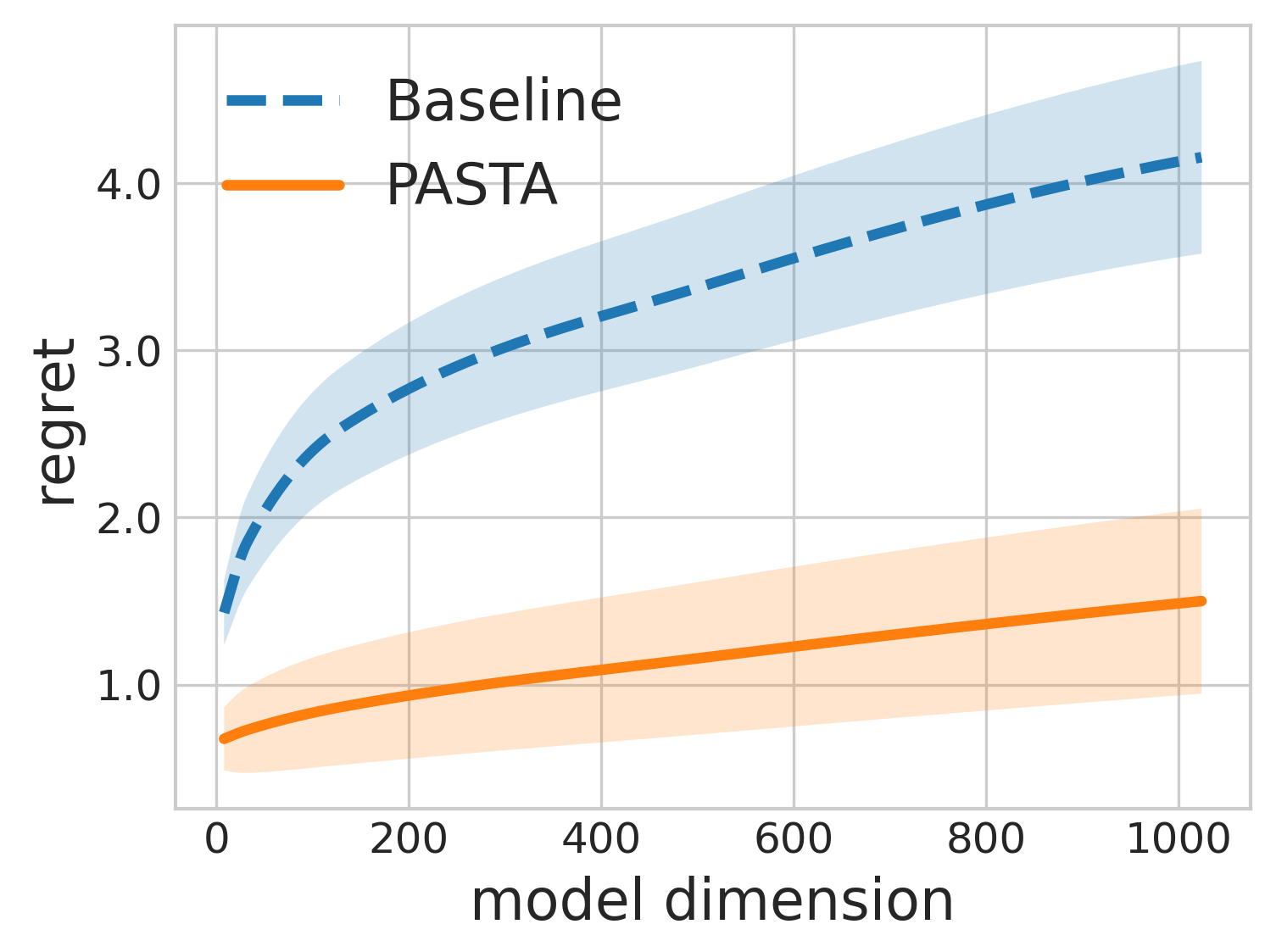}} 
\subcaptionbox*{}{\includegraphics[width=0.40\textwidth]{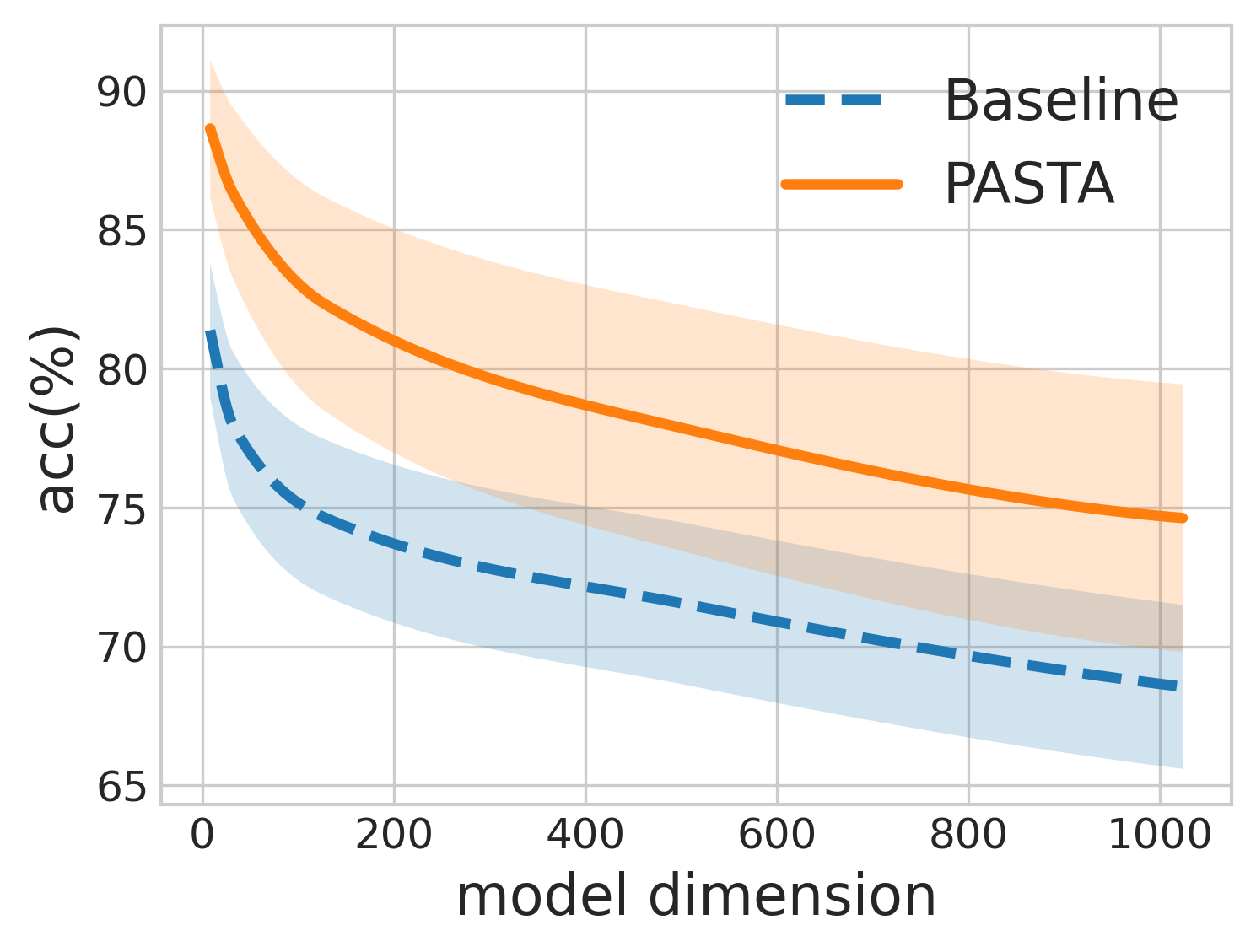}} 
\end{minipage}
\caption{PASTA also consistently outperforms the baseline method when the underlying choice model is an NL model. The performance gain in terms of the regret difference increases with increasing model complexity (bottom row).}
\label{fig:nl}
\end{figure}

\appendix

\section{Main Technical Theorem: Hellinger Concentration of MLE for Conditional Density}\label{sec:app-main}
We note that the results in this section are applicable to general scenarios that involve estimating conditional probability density function with MLE. Hence, in this section, we will use $Y$ and $X$ to respectively denote $A$ and $S$ to underscore that the results are generally applicable. 
Let $\cP$ be a family of conditional densities for $Y|X$. Assume a dataset $D$ consisting of $n$ independent and identically distributed samples of $(X,Y)$ following the probability density function (PDF) $(x,y)\mapsto \pi(x)p_0(y|x)$ where $p_0$ is the true data generating conditional PDF. The goal is to estimate $p_0$ with the observed data $D$. In particular, let the MLE $\widehat{p}_n$ of $p_0$ be defined as
\begin{equation}\label{eqn:mle}
    \widehat{p}_n \in \argmin_{p \in \cP}\left\{\widehat{L}_n(p) := -\frac{1}{n}\sum^n_{i=1}\log p(Y_i|X_i)\right\}.
\end{equation}
Next we present our main theorem.
\begin{theorem}\label{thm:main}
Define 
\[ \kappa(u) = \begin{cases}
			{2\left( e^{|u|/2} - 1 - |u|/2 \right) \over (1-e^{-u/2})^{2}}, & u \neq 0;\\
			1, & u = 0.
		\end{cases} \]
Let $\tau > 0$ be an arbitrary but fixed smoothing constant satisfying $\kappa(\tau) < 8(1-e^{-\tau})^{2}$, which is equivalent to  $0.5518 < \tau < 3.1926$.  
Let $n$ denote the number of samples. Suppose $\sigma_{n},\alpha_{n} > 0$ satisfy the following relationships: $ \alpha_{n} = {\sigma_{n}^{2}/8\kappa(\tau)}$, 
$\sigma_{n}^{2} \ge n^{-1}$, and
		\begin{align}\label{eqn:main-entropy}
\int_{\sigma_{n}^{2}/2^{10}}^{\sigma_{n}}\sqrt{\log\cN_{[]}\left({u \over 2\sqrt{2}e^{\tau/2}},\cP,H\right)}\rd u \le {1 \over 2^{27}}\sqrt{n}\sigma_{n}^{2}.
		\end{align}
		Let $c(\tau) = 2\big\{ {1-e^{-\tau} \over 4} - {\kappa(\tau) \over 32(1-e^{-\tau})} \big\} > 0$ and $c_{1} = 63/(257\times 2^{14})$.
        Consider the set estimate
		\[\Omega_{n}(\alpha) := \left\{ p \in \cP: \widehat{L}_{n}(p) - \widehat{L}_{n}(\widehat{p}_{n}) \le \alpha \right\}. \]
Then for every $\alpha \ge \alpha_{n}/4$, with probability at least $1 - c\cdot e^{-2c_{1}n\alpha}$, it holds that
		\[(i) p_0 \in\Omega_{n}(\alpha),\;\text{and}\;(ii)\sup_{p \in\Omega_{n}(\alpha)}H^2(p,p_0) \le {\alpha \over c(\tau)},\]
where $c$ is a universal constant. In particular, at $\tau = 1.7024$, $c(\tau)$ is maximized as $0.1802$.
\end{theorem}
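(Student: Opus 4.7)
The plan is to adapt the classical Wong--Shen MLE concentration argument (via Hellinger affinity, bracketing chaining, and an exponential Markov bound) to the conditional density setting, following the general approach used by \citet{bilodeau2023minimax} but with two modifications: the statement must give \emph{simultaneous} high-probability control of (i) $p_{0} \in \Omega_{n}(\alpha)$ and (ii) $\sup_{p \in \Omega_{n}(\alpha)} H^{2}(p,p_{0}) \le \alpha/c(\tau)$, rather than just the Hellinger risk of $\widehat p_{n}$ alone.

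First, I would reformulate everything through log-likelihood ratios. Set $\ell_{i}(p) = \log\{p(Y_{i}|X_{i})/p_{0}(Y_{i}|X_{i})\}$ and $M_{n}(p) = n^{-1}\sum_{i=1}^{n}\ell_{i}(p)$, so that $\widehat L_{n}(p) - \widehat L_{n}(\widehat p_{n}) \le \alpha$ is equivalent to $M_{n}(\widehat p_{n}) - M_{n}(p) \le \alpha$. Item (i) then amounts to $M_{n}(\widehat p_{n}) \le \alpha$, and, combined with optimality of $\widehat p_{n}$ (which gives $M_{n}(\widehat p_{n}) \ge M_{n}(p_{0}) = 0$), item (ii) reduces to showing that $M_{n}(p) \ge -\alpha'$ implies $H^{2}(p,p_{0}) \le \alpha'/c(\tau)$ uniformly over $p \in \cP$, for $\alpha'$ of the order $\alpha$. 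The key probabilistic identity is the Hellinger affinity
\[
\bbE\bigl[e^{\ell(p)/2}\bigr] = \bbE_{X}\bigl[1 - h^{2}(p,p_{0},X)\bigr] = 1 - H^{2}(p,p_{0}) \le \exp\bigl(-H^{2}(p,p_{0})\bigr),
\]
which, via Markov applied to the half-likelihood ratio, gives a pointwise exponential bound of the form $\bbP(M_{n}(p) \ge -2H^{2}(p,p_{0}) + \eta) \le e^{-n\eta/2}$.

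Second, to pass from a fixed $p$ to a supremum, I would set up a peeling and bracketing chaining argument. Partition $\cP$ into Hellinger shells $\cP_{k} = \{p \in \cP : 2^{k-1}\sigma_{n} < H(p,p_{0}) \le 2^{k}\sigma_{n}\}$ for $k \ge 0$, and within each shell perform Dudley-type chaining using an $H$-bracket cover at geometric scales, where the brackets are rescaled by the factor $1/(2\sqrt{2}e^{\tau/2})$ appearing in \eqref{eqn:main-entropy}. The truncation parameter $\tau$ enters precisely as a smoothing of $\ell(p)/2$: it bounds the exponential moments of the bracket increments through the function $\kappa(\tau)$, yielding a Bernstein-type tail whose variance proxy is of order $\kappa(\tau)\cdot H^{2}(p,p_{0})/(1-e^{-\tau})^{2}$. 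The hypothesis $\kappa(\tau) < 8(1-e^{-\tau})^{2}$, equivalently $\tau \in (0.5518, 3.1926)$, is exactly what makes the variance proxy strictly smaller than the linear-in-$H^{2}$ drift coming from the affinity identity, so that the net exponent $c(\tau)$ in the shell-level bound stays strictly positive. The entropy integral condition \eqref{eqn:main-entropy} is precisely the balance equation that allows the chaining remainder to be dominated by $n\sigma_{n}^{2}$, giving a uniform shell-level deviation of probability at most $c\,e^{-c_{1}n\sigma_{n}^{2}}$.

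Third, I would assemble the result. Applying the shell-level bound to the (random) MLE $\widehat p_{n}$ proves (i), since $M_{n}(\widehat p_{n}) \ge 0$ forces the shell containing $\widehat p_{n}$ to satisfy $H^{2}(\widehat p_{n},p_{0}) \le \alpha/c(\tau)$ and, a fortiori, $M_{n}(\widehat p_{n}) \le \alpha$. For (ii), on the same high-probability event, any $p \in \Omega_{n}(\alpha)$ with $H^{2}(p,p_{0}) > \alpha/c(\tau)$ would fall in a shell whose uniform deviation bound is violated, contradicting $M_{n}(\widehat p_{n}) - M_{n}(p) \le \alpha$. Finally, the threshold $\alpha \ge \alpha_{n}/4$ appears because the chaining produces the guarantee at the base scale $\sigma_{n}^{2} = 8\kappa(\tau)\alpha_{n}$ and then upgrades to larger $\alpha$ by monotonicity (the same argument already works at every level of the peeling).

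The hardest step will be the chaining bookkeeping: ensuring that the Bernstein-type MGF bound from the $\tau$-smoothing matches the geometric bracket scales in \eqref{eqn:main-entropy} with sharp enough constants to produce the explicit coefficients $c_{1} = 63/(257 \cdot 2^{14})$ and the optimum $c(\tau)$ attained at $\tau = 1.7024$. In particular, one must verify that the Hellinger-affinity drift $-H^{2}(p,p_{0})$ dominates both the chaining remainder and the variance proxy uniformly in the shell, since these three quantities compete and are each parametrized by $\tau$ through $\kappa(\tau)$ and $(1-e^{-\tau})$. Everything else---reducing to log-likelihood ratios, invoking bracketing, and closing out (i) and (ii)---is relatively routine once this calibration is in place.
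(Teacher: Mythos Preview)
Your proposal conflates two distinct mechanisms and has a real gap in the argument for (i).

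First, the role of $\tau$ is misidentified. In the paper it is \emph{not} a truncation of $\ell(p)/2$; it parametrizes the mixture-smoothed family $\cP_{\tau}=\{(1-e^{-\tau})p+e^{-\tau}p_{0}:p\in\cP\}$, for which the smoothed log-likelihood ratio automatically satisfies $\log\{p_{0}/\widetilde p\}\le\tau$. This upper bound is what produces the Bernstein moment control with constant $\kappa(\tau)$ (Lemma~\ref{lem:exp}), and the transfer back to the unsmoothed problem is via the convexity inequality $\widehat L_{n}(p_{0})-\widehat L_{n}(p)\le(1-e^{-\tau})^{-1}\{\widehat L_{\tau,n}(p_{0})-\widehat L_{\tau,n}(p)\}$ (Lemma~\ref{lem:smooth}). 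The factor $(1-e^{-\tau})^{-1}$ from that transfer, together with the population drift $L_{\tau}(p_{0})-L_{\tau}(p)\le -\tfrac{(1-e^{-\tau})^{2}}{2}H^{2}(p,p_{0})$ (Lemmas~\ref{lem:H_le_KL} and~\ref{lem:H_smooth}), is exactly where the $(1-e^{-\tau})$ terms in $c(\tau)$ come from. The Hellinger-affinity/exponential-Markov identity you invoke is the Wong--Shen route and does not use or produce these $\tau$-dependent constants; grafting $\kappa(\tau)$ onto it, as written, is incoherent.

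Second, and more concretely, your step ``$H^{2}(\widehat p_{n},p_{0})\le\alpha/c(\tau)$ and, a fortiori, $M_{n}(\widehat p_{n})\le\alpha$'' does not follow. The affinity argument gives only the one-sided implication (large empirical likelihood ratio $\Rightarrow$ small Hellinger), not the converse. Item (i) is precisely the converse direction: bounding $\widehat L_{n}(p_{0})-\widehat L_{n}(\widehat p_{n})$ once you know $\widehat p_{n}$ is Hellinger-close to $p_{0}$. The paper handles this with a \emph{separate} two-sided localized empirical-process event $\cA_{n}(\alpha)^{\complement}=\{\|\bbP_{n}-\bbP\|_{\cF_{\tau}^{\rm loc}(\alpha)}<\tfrac{\kappa(\tau)}{32}\alpha\}$: on $\cE_{n}(\alpha)^{\complement}\cap\cA_{n}(\alpha)^{\complement}$, once $\widehat p_{n}\in\cF_{\tau}^{\rm loc}(\alpha)$, one chains
\[
\widehat L_{n}(p_{0})-\widehat L_{n}(\widehat p_{n})\le\frac{1}{1-e^{-\tau}}\Big\{L_{\tau}(p_{0})-L_{\tau}(\widehat p_{n})+\|\bbP_{n}-\bbP\|_{\cF_{\tau}^{\rm loc}(\alpha)}\Big\}\le\frac{\kappa(\tau)}{32(1-e^{-\tau})}\alpha,
\]
using $L_{\tau}(p_{0})\le L_{\tau}(\widehat p_{n})$. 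Your sketch supplies nothing playing the role of $\cA_{n}(\alpha)$, so (i) is unproved. If you want to stay on the affinity route, you must add such a localized two-sided bound; otherwise, adopt the mixture-smoothing construction, which is what actually yields the explicit $\kappa(\tau)$, $c(\tau)$, and the condition $\kappa(\tau)<8(1-e^{-\tau})^{2}$ in the statement.
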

\begin{remark}
    To connect with the statement in Theorem \ref{thm:main-simple}, we have
    $C_{1} = 2^{-7}\kappa(\tau)$, $C_{2} = 2\sqrt{2\kappa(\tau)}$, 
    $C_{3}=2\sqrt{2}e^{\tau/2}$,
    $C_{4} = 2^{-24}\kappa(\tau)$,
    $C_{5} = c$,
    $C_{6} = 2c_{1}$,
    $C_{7} = 1/c(\tau)$.
\end{remark}
\begin{proof}[Proof Sketch of Theorem~\ref{thm:main}]
We defer the detailed proof to the Appendix~\ref{sec:app-proofs} and only present a sketch of proof  below. All the intermediate results referred  below are also deferred to the Appendix due to space constraint.

The key difficulty is that likelihood–ratio functions are not uniformly bounded, which obstructs direct empirical-process control. We therefore introduce a smoothed class $\cP_\tau=\{(1-e^{-\tau})p+e^{-\tau}p_0:\,p\in\cP\}$ and the smoothed risks $L_\tau,\widehat L_{\tau,n}$; this guarantees a pointwise bound on the log-likelihood ratio (Lemma~\ref{lem:smooth}) while preserving correct specification ($p_0\in\cP_\tau$). We localize the analysis to the Hellinger ball $\{p:\,H^2(p,p_0)\le \alpha\}$ and consider the corresponding class of smoothed log-likelihood ratios $\cF^{\mathrm{loc}}_\tau(\alpha)$. By Lemma~\ref{lem:exp}, this localized class satisfies a Bernstein condition, yielding sharp concentration for $\|(\bbP_n-\bbP)\|_{\cF^{\mathrm{loc}}_\tau(\alpha)}$ (Theorem~\ref{thm:concentration}). 
Next, we identify that a special empirical ``margin'' event implies that any $p$ with $H^2(p,p_0)\ge \alpha$ must have strictly larger empirical risk than $p_0$; since $\widehat p_n$ minimizes $\widehat L_n$, it must lie in the local region $H^2(\widehat p_n,p_0)\le \alpha$. On this event, an empirical smoothed risk inequality together with $L_\tau(p_0)\le L_\tau(\widehat p_n)$ bounds the empirical excess $\widehat L_n(p_0)-\widehat L_n(\widehat p_n)$ by the localized empirical-process norm. A standard peeling argument then transfers this pointwise control to the whole uncertainty set $\Omega_n(\alpha)$, showing that all empirically near-optimal $p$ remain within Hellinger radius $\alpha/c(\tau)$ of $p_0$. Under the numeric condition $\kappa(\tau)<8(1-e^{-\tau})^2$ and for $\alpha\ge \alpha_n/4$, the concentration bounds deliver $\bbP\Big\{\,p_0\in\Omega_n(\alpha)\ \text{and}\ \sup_{p\in\Omega_n(\alpha)}H^2(p,p_0)\le \alpha/c(\tau)\Big\}
\;\ge\; 1-ce^{-c'n\alpha}$, with universal constants $c,c'>0$. 

\end{proof}

\section{Linear Programming (LP) for Assortment Optimization Problems with Known Choice Models}\label{sec:app-lp}
Here we summarize two LP-based approaches for solving the assortment optimization problem under known choice models. Specifically, we solve the following optimization problem
\[
\argmax_{s \in \bbS} \cV(s;\theta) = \sum_{j \in s} r_j p(a|s;\theta),
\]
where $p(a|s;\theta)$ is a choice model parameterized by $\theta$. Note that here $\theta$ is fixed and known. Different choice models require different optimization methods. This work considers two popular choice models -- the MNL and NL models. In both scenarios, the assortment optimization problem can be solved with LP-based methods~\citep{AO_LP,davis2014assortment}.

\subsection{MNL Model}

Let an assortment $s$ be represented by an $N$-dimensional binary vector $\gamma\in \{0,1\}^N$ where $\gamma_j = 1$ if and only if $j \in s$. Suppose that $s \in \bbS$ corresponds to the following feasible set for $\gamma$ with $M$ linear inequality constraints:
\[ \Gamma=\left\{\gamma \in \{0,1\}^N: \sum_{j \in N}a_{ij}\gamma_j\leq b_i ~ \text{for}~ i \in [M]\right\}, \]
where the matrix of constraint coefficients $[a_{ij}]_{i\in [M], j\in[N]}$ is a totally unimodular matrix. In other words, based on the one-to-one correspondence between $s$ and $\gamma$, we have $s \in \bbS$ if and only if $\gamma \in \Gamma$. 

Next, we denote $v_i = \exp(x_{i}^\intercal \theta)$ as the preference score for the $i$-th item, with item feature $x_i$ and revenue $r_i$. The customer choice probability under the MNL model \eqref{eqn:mnl_ass} becomes $p(i|s;\theta)=\frac{v_i}{1+\sum_{j\in s}v_j}$. The assortment optimization can be formulated as
\begin{align} \label{eqn:opt_reward}
\max_{\gamma \in \Gamma}\frac{\sum_{i\in [N]}r_iv_i\gamma_i}{1+\sum_{i\in[N]}v_i\gamma_i},
\end{align}
which is equivalent to the following linear programming problem~\citep{AO_LP}:
\begin{equation} \label{eqn:opt_reward_lp}
\begin{aligned}
\max_{w_j: j \in [N] \cup \{0\}} \ & \sum_{j\in[N]}r_j w_j\\
\text{subject to} \quad & \sum_{j\in[N]}w_j + w_0 = 1\\
& \sum_{j\in [N]}a_{ij}\frac{w_j}{v_j}\leq b_i w_0 \quad \forall i \in [M] \\
& 0 \leq \frac{w_j}{v_j} \leq w_0 \quad \forall j \in [N].
\end{aligned}
\end{equation}
In particular, we can recover the optimal solution to  Problem~\eqref{eqn:opt_reward}, denoted as $\gamma^\star$, using the optimal solution to Problem~\eqref{eqn:opt_reward_lp}, denoted by $w^\star$, via the following formula $\gamma_j^\star = \frac{w_j^\star}{v_j w_0^*} \quad \forall j \in [N]$.
Finally, an optimal assortment $s^\star(\theta)$ for a fixed choice $p(a|s;\theta)$ is obtained by the correspondence $j \in s^{\star}$ if and only if $\gamma_{j}^\star = 1$.

\subsection{NL Model}
Without loss of generality, we consider NL models with $K$ baskets where each basket contains $M$ items. Let an assortment $s$ for the NL model be represented by an array of $K$ binary vectors $\gamma = \{\gamma_k\}^K_{k=1}$ where $\gamma_k \in \{0,1\}^{M}$ represents the choice for the $k$-th basket. In particular, for $m \in [M]$, $\gamma_{k,m} = 1$ if we choose the $m$-th item in the $k$-th basket, with item feature $x_{k,m}$ and revenue $r_{k,m}$. For the purpose of optimization and without loss of generality, we assume that the items in each basket are sorted with decreasing revenues, i.e., $r_{k,m} \ge r_{k,m'}$ for $k \in [K]$, $m,m' \in [M]$, and $m \ge m'$. Let
\begin{align*}
&V_k(\gamma_k)=\sum^M_{m=1}\gamma_{k,m}\exp(\tilde{\theta}^{\intercal}x_{k,m}/\lambda_k);\\
& R_k(\gamma_k)=\frac{\sum^M_{m=1}\gamma_{k,m}\exp(\tilde{\theta}^{\intercal}x_{k,m}/\lambda_k)r_{k,m}}{V_k(\gamma_k)}.   
\end{align*}
The assortment optimization problem for fixed $\tilde{\theta},\lambda,x,r$ is therefore defined as
\begin{equation}\label{eqn:nl-opt-obj}
    \max_{\gamma=\{\gamma_k\}^K_{k=1}} \sum^K_{k=1}\frac{V_k(\gamma_k)^{\lambda_k}}{1+\sum^K_{k'=1}V_{k'}(\gamma_{k'})^{\lambda_{k'}}}R_k(\gamma_k).
\end{equation}
We can efficiently solve~\eqref{eqn:nl-opt-obj} through the following LP problem~\citep{davis2014assortment} 
\begin{equation}\label{eqn:nl-lp}
\begin{aligned}
    &\min_{\eta,y_1,\dots,y_K} \eta \\
    &\quad\mathrm{s.t.}\quad \eta \ge \sum^K_{k=1}y_k\\
     &k \in [K],\;y_k \ge V_k(\gamma_k)^{\lambda_k}(R_k(\gamma_k)-\eta) \qquad \forall \gamma_k \in \Gamma,
\end{aligned}
\end{equation}
where $\Gamma=\{\widetilde{M}_1,\widetilde{M}_2,\dots,\widetilde{M}_M\}$ has the subsets of baskets containing first $m$ items, i.e., for $m \in [M]$ $\widetilde{M}_m = \{1,\dots,1,0,\dots,0\} \in \{0,1\}^M$ with the first $m$ elements as $1$ and the last $M-m$ elements as $0$. After solving~\eqref{eqn:nl-lp} with solutions $\eta^\star, y^\star_k$, the optimal assortment $\gamma^\star_k$ can be constructed as $\gamma_k^\star = \argmax_{\gamma_k \in \Gamma}V_k(\gamma_k)^{\lambda_k}(R_k(\gamma_k)-\eta^\star)$.

\section{Proofs of Theoretical Results}\label{sec:app-proofs}

\subsection{Proof of Theorem~\ref{thm:regret}}\label{sec:proof-thm-regret}
\vspace{1em}
\begin{proof}[Proof of Theorem~\ref{thm:regret}]
By triangle inequality, for any $p \in \cP$, we have $\cV(s^{\star};p_0)-\cV(s^{\star};p) \leq \left| \cV(s^{\star};p)-\cV(s^{\star};\widehat{p}_{n})\right| +\left|\cV(s^{\star};\widehat{p}_{n})-\cV(s^{\star};p_0) \right|$.
If $p_0 \in \Omega_n$, we further have for any $p \in \Omega_n(\alpha_n)$
\begin{equation}\label{eqn:regret-triangle}
\begin{aligned}
 &\cV(s^{\star};p_0)-\cV(s^{\star};p) \leq 2\max_{p \in \Omega_n}\left| \cV(s^{\star};p)-\cV(s^{\star};\widehat{p}_{n})\right|.   
\end{aligned}
\end{equation}

Lemma~\ref{lemma:l1_distance_ub} implies that for any $p \in \cP$, $\left| \cV(s^{\star};p)-\cV(s^{\star};\widehat{p}_{n})\right| \leq r_{s^{\star}}\sqrt{C_{s^{\star}}\mathbb{E}_{S}\left[\left\|p(\cdot|S)-\widehat{p}_{n}(\cdot|S)\right\|_1^2\right]}$,
where $\|\cdot\|_1$ is the $L_1$ norm, $r_{s^{\ast}}=\max_{j \in s^{\star}}r_{j}$ is the largest possible revenue among all items and $C_{s^{\star}}=1/\pi_S(s^{\star})$. In Lemma~\ref{lemma:l1_H_ub}, we establish that \(\mathbb{E}_{S}\left[\left\|p(\cdot|S)-\widehat{p}_{n}(\cdot|S)\right\|_1^2\right] \leq 8H^2(p, \widehat{p}_{n})\),
where $H^2$ is the generalized squared Hellinger distance defined in~\eqref{eq:H}. Combining the above two inequalities, we have that for any $p \in \cP$, $\bigg| \cV(s^{\star};p)-\cV(s^{\star};\widehat{p}_{n})\bigg| \leq r_{s^{\ast}}\sqrt{8C_{s^{\star}}H^2(p,\widehat{p}_{n})}$.
From Assumption~\ref{ass: technical assumptions}~(III), 
we have that with probability at least $1-\delta$ where $0 < \delta < 1$,
for any $p \in \Omega_n(\alpha_n)$, $H^2(p,\widehat{p}_{n}) \le \alpha_n$. With this upper bound for $H^2(p,\widehat{p}_{n})$, we further have for probability at least $1-\delta$, 
\begin{equation}\label{eqn:V-ub}
    \sup_{p \in \Omega_n}\bigg| \cV(s^{\star};p)-\cV(s^{\star};\widehat{p}_{\ML,n})\bigg| \lesssim r_{s^{\ast}}\sqrt{\alpha_n(\delta)/\pi_S(s^\star)}.
\end{equation}
Lastly, under Assumption~\ref{ass: technical assumptions}~(III), with probability at least $1-\delta$ we have $p_0 \in \Omega_n$. Then application of Lemma~\ref{lemma:general_pasta_regret} along with the inequalities in~\eqref{eqn:regret-triangle} and~\eqref{eqn:V-ub} conclude the proof.
\end{proof}
\subsection{Proof of Theorem~\ref{thm:main-simple}}
\begin{proof}[Proof of Theorem~\ref{thm:main-simple}]
This result is a simplified result of Theorem~\ref{thm:main}. In particular, with $\kappa(\tau)$ and $c(\tau)$ defined as in Theorem~\ref{thm:main}, choosing the smoothing constant $\tau=2$ directly leads to Theorem~\ref{thm:main-simple}. 
\end{proof}
\subsection{Proof of Theorem~\ref{thm:main}}
Our proof builds upon the empirical process on likelihood ratio functions. In particular, for a family of $\bbR$-valued functions $\cF$, denote $\| \bbP_{n} - \bbP \|_{\cF} = \sup_{f \in \cF}|\bbP_{n}(f) - \bbP(f)|$. Note that under the assumption of correct specification, i.e., $p_0 \in \cP$, $p_0$  minimizes the population negative log-likelihood:
	\begin{align}
		p_0 \in \argmin_{p \in \cP}\Big\{ L(p) := \bbE[- \log p(Y|X)] \Big\}.
		\label{eq:truth}
	\end{align}
To establish empirical process results without the assumption of uniform boundedness on the function class, our proof relies on a technique known as smoothing. To this end, for an arbitrary but fixed smoothing constant $\tau > 0$, consider the following \emph{smoothed family of conditional PDFs} $\cP_{\tau} := \left\{ (1-e^{-\tau})p + e^{-\tau}p_0: p \in \cP \right\}$. In particular, for $p = p_0$, we have $p_0 \in \cP_{\tau}$.
That is, $\cP_{\tau}$ is also a correctly specified family.
Moreover, the smoothed log-likelihood ratio is upper bounded as $\log{p_0(Y|X) \over (1-e^{-\tau})p(Y|X) + e^{-\tau} p_0(Y|X)}\le \tau$.
Define the corresponding smoothed negative log-likelihoods as 
\begin{equation}\label{eq:smooth}
    \begin{aligned}
        &L_{\tau}(p) := \bbE\Big\{ - \log[(1-e^{-\tau})p(Y|X)+ e^{-\tau}p_0(Y|X)] \Big\}; \\
        &\widehat{L}_{\tau,n}(p ) := \bbE_{n}\Big\{ - \log[(1-e^{-\tau})p(Y|X) + e^{-\tau}p_0(Y|X)] \Big\};
    \end{aligned} 		
\end{equation}
To facilitate proof, for any $\tau>0$ and $\alpha \ge 0$, we also define the local family of smoothed log-likelihood ratios $\cF_{\tau}^{\rm loc}(\alpha)$: for $p \in \cP$, let $f_{p,\tau}(X,Y)=\log{p_0(Y|X)} - \log[(1-e^{-\tau})p(Y|X) + e^{-\tau} p_0(Y|X)]$, and define $\cF_{\tau}^{\rm loc}(\alpha)$ as
\begin{align}
		&\cF_{\tau}^{\rm loc}(\alpha) := \left\{ f_{p,\tau}: H^{2}(p,p_0) \le \alpha \right\}.
		\label{eq:smooth_loc}
	\end{align}
Notably, per Lemma~\ref{lem:exp}, it satisfies the Bernstein's condition. That is, for any $f_{p,\tau} \in \cF_{\tau}^{\rm loc}(\alpha)$ and $k \ge 2$, we have $\bbE |f_{p,\tau}|^{k} \le {k! \over 2}2^{k-2} \times 8\kappa(\tau)H^2(p,p_0)$.
In particular, $\bbE(f_{p,\tau}^{2}) \le 8\kappa(\tau)H^2(p,p_0)$. 

\begin{proof}{Proof of Theorem~\ref{thm:main}}
Fix $\alpha \ge 0$ and consider the following event $$\mathcal{E}_{n}(\alpha) :=  \left\{ \inf_{p \in \cP:H^2(p,p_0) \ge \alpha}\left\{ \widehat{L}_{n}(p) - \widehat{L}_{n}(p_0) \right\} \le c(\tau)\alpha \right\}.$$
On $\mathcal{E}_{n}(\alpha)^{\complement}$, we have $$\inf_{p \in \cP:H^2(p,p_0)\ge\alpha}\Big\{ \widehat{L}_{n}(p ) - \widehat{L}_{n}(p_0) \Big\} > c(\tau)\alpha \ge 0,$$
while $\widehat{L}_{n}(\widehat{p }_{n}) - \widehat{L}_{n}(p_0) \le 0$ because $\widehat{p}_n$ minimizes $\widehat{L}_{n}$. 
Therefore,  we have $H^{2}(\widehat{p }_{n},p_0) \le \alpha $,
and hence $f_{\widehat{p }_{n}} \in \cF_{\tau}^{\rm loc}(\alpha)$.
The following inequalities hold 
\begin{align}
    &\widehat{L}_{n}(p _0) - \widehat{L}_{n}(\widehat{p }_{n}) \le {1 \over 1-e^{-\tau}}\left\{ \widehat{L}_{\tau,n}(p _0)  - \widehat{L}_{\tau,n}(\widehat{p }_{n}) \right\}\\
    &\le {1 \over 1-e^{-\tau}}\left\{ L_{\tau}(p _0) - L_{\tau}(\widehat{p }_{n}) + \| \bbP_{n} - \bbP \|_{\cF_{\tau}^{\rm loc}(\alpha)} \right\} \le {1 \over 1-e^{-\tau}}\| \bbP_{n} - \bbP \|_{\cF_{\tau}^{\rm loc}(\alpha)},
\end{align}
where the first inequality is due to Lemma~\ref{lem:smooth}, the second inequality holds because $f_{\widehat{p }_{n}} \in \cF_{\tau}^{\rm loc}(\alpha)$, and the last inequality is due to $L_{\tau}(p _0) = \min_{p  \in p }L_{\tau}(p) \le L_{\tau}(\widehat{p}_n)$.
Let $\cA_{n}(\alpha) := \left\{ \| \bbP_{n} - \bbP \|_{\cF_{\tau}^{\rm loc}(\alpha)} \ge {\kappa(\tau) \over 32}\alpha \right\}$. On $\cA_{n}(\alpha)^{\complement}$ , we further have \(\| \bbP_{n} - \bbP \|_{\cF_{\tau}^{\rm loc}(\alpha)} \le {\kappa(\tau) \over 32}\alpha\).
That is, on $\cE_{n}(\alpha)^{\complement}\cap\cA_{n}(\alpha)^{\complement}$ , we have that \(\widehat{L}_{n}(p _0) - \widehat{L}_{n}(\widehat{p }_{n}) \le {\kappa(\tau) \over 32(1-e^{-\tau})} \alpha\), which by definition of $\Omega_{n}$ is equivalent to $p _0 \in\Omega_{n}\left( {\kappa(\tau) \over 32(1-e^{-\tau})} \alpha\right)$. In other words, $p _0 \in\Omega_{n}(\alpha)$ under the event $\cE_{n}\left( {32(1-e^{-\tau})\over \kappa(\tau)}\alpha \right)^{\complement}\cap\cA_{n}\left( {32(1-e^{-\tau})\over \kappa(\tau)}\alpha \right)^{\complement}$.
Recall that on $\cE_{n}(\alpha)^{\complement}$, we have
\begin{align}
    \inf_{p \in \cP:H^2(p,p_0) \ge \alpha}\Big\{ \widehat{L}_{n}(p ) - \widehat{L}_{n}(p_0) \Big\} > c(\tau)\alpha.
    \label{eq:bad_event}
\end{align}
For any $p  \in\Omega_{n}(c(\tau)\alpha)$, we also have $\widehat{L}_{n}(p ) - \widehat{L}_{n}(p_0) \le \widehat{L}_{n}(p ) - \min_{\widetilde{p } \in p }\widehat{L}_{n}(\widetilde{p }) \le c(\tau)\alpha$.
Then for any $p  \in\Omega_{n}$, $H^2(p,p_0)$ cannot exceed $\alpha_{n}$, or otherwise violating \eqref{eq:bad_event}. This implies that $\sup_{p  \in\Omega_{n}(c(\tau)\alpha)}H^2(p,p_0) \le \alpha$.
Equivalently, $\sup_{p  \in\Omega_{n}(\alpha)}H^2(p,p_0) \le {\alpha/c(\tau)}$ on the event $\cE_{n}\left( {\alpha/ c(\tau)} \right)^{\complement}$.
Thus, we have 
$$\bbP\left\{ p _0 \in\Omega_{n}(\alpha), ~ \sup_{p  \in\Omega_{n}(\alpha)}H^{2}(p , p _0) \le {\alpha \over c(\tau)} \right\}$$ 
lower bounded by 
$$\bbP\left\{ \cE_{n}\left( {32(1-e^{-\tau})\over \kappa(\tau)}\alpha \right)^{\complement}\cap\cA_{n}\left( {32(1-e^{-\tau})\over \kappa(\tau)}\alpha \right)^{\complement} \cap \cE_{n}\left( {\alpha \over c(\tau)} \right)^{\complement} \right\},$$ 
which by the same calculation in~\eqref{eqn:en-and-an} is further lower bounded by 
$$1- \bbP\left\{\bigcup_{j=0}^{+\infty}\cA_{n}\left( {32(1-e^{-\tau})\over \kappa(\tau)}2^{j}\alpha \right) \right\} -\bbP\left\{ \bigcup_{j=1}^{+\infty}\cA_{n}\left( {2^{j}\alpha \over c(\tau)} \right) \right\}.$$

To continue, note that with the condition $\kappa(\tau) < 8(1-e^{-\tau})^{2}$, we have  
$$\max\left\{ {\kappa(\tau) \over 32(1-e^{-\tau})}, {c(\tau) \over 2} \right\} \le {1 - e^{-\tau} \over 4} \le {1 \over 4}.$$
With this fact, for any $\alpha \ge \alpha_n/4 \gtrsim n^{-1}$, by Theorem \ref{thm:concentration}, we have $$\bbP\left\{\bigcup_{j=0}^{+\infty}\cA_{n}\left( {32(1-e^{-\tau})\over \kappa(\tau)}2^{j}\alpha \right) \right\} \le c\cdot\exp(-4c_2n\alpha),$$
where $c$ is a universal constant.
Similarly, we have $\bbP\left\{ \bigcup_{j=1}^{+\infty}\cA_{n}\left( {2^{j}\alpha \over c(\tau)} \right) \right\} \le c\cdot\exp(-2c_2n\alpha)$,
where $c$ is also a universal constants.  
Hence, $\bbP\left\{ p _0 \in\Omega_{n}(\alpha), ~ \sup_{p  \in\Omega_{n}(\alpha)}H^{2}(p , p _0) \le {\alpha \over c(\tau)} \right\} \ge 1- c\cdot\exp(-2c_2n\alpha)$.
Note that all $c$'s represent different universal constants. Notably, Theorem \ref{thm:concentration} requires that ${32(1-e^{-\tau})\over \kappa(\tau)}\alpha \ge \alpha_{n};\; {2\alpha \over c(\tau)} \ge \alpha_{n}$,
which are satisfied due to the condition that $\alpha \ge {\alpha_{n} / 4} \ge \max\left\{ {\kappa(\tau) / 32(1-e^{-\tau})}, {c(\tau) /2} \right\}\alpha_{n}$.
\end{proof}

\begin{theorem}[Rate Theorem]
		\label{thm:rate}
Let $\tau, \alpha_{n}, c(\tau), c_2$ be defined as in Theorem~\ref{thm:main}. Then for every $\alpha \ge \alpha_{n}/2$, consider the following event $$\mathcal{E}_{n}(\alpha) := \left\{ \sup_{p \in \cP:H^2(p,p_0) \ge \alpha}\left\{ \prod_{i=1}^{n}{p(Y_{i}|X_{i}) \over p_0(Y_{i}|X_{i})} \right\} \ge e^{-c(\tau)n\alpha} \right\},$$ 
which is equivalent to the event $\left\{ \inf_{p \in \cP:H^2(p,p_0) \ge \alpha}\left\{ \widehat{L}_{n}(p ) - \widehat{L}_{n}(p_0) \right\} \le c(\tau)\alpha \right\}$, 
we have $\bbP\{\mathcal{E}_{n}(\alpha)\} \le c\cdot e^{-c_{2}n\alpha}$ where $c$ is a universal constant.
\end{theorem}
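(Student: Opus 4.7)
My plan is to extend the Wong--Shen-style empirical-process argument already driving the proof of Theorem~\ref{thm:main}: on the event $\mathcal{E}_n(\alpha)$ there is some $p\in\cP$ with $H^2(p,p_0)\ge\alpha$ whose empirical smoothed log-likelihood ratio is much smaller than its population counterpart, and this discrepancy is forbidden, up to an exponentially small probability, by the localized concentration bound of Theorem~\ref{thm:concentration} once one peels over Hellinger shells. The inputs are Lemma~\ref{lem:smooth} (to pass between the unsmoothed and smoothed empirical losses), Lemma~\ref{lem:exp} (which makes $\cF_\tau^{\rm loc}(\alpha)$ Bernstein), and Theorem~\ref{thm:concentration} (the uniform deviation bound on $\cF_\tau^{\rm loc}(\alpha)$).

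\textbf{Deterministic reduction.} Fix a $p$ witnessing $\mathcal{E}_n(\alpha)$. Concavity of $\log$ combined with $\widehat{L}_{\tau,n}(p_0)=\widehat{L}_n(p_0)$ gives $\widehat{L}_{\tau,n}(p)-\widehat{L}_{\tau,n}(p_0)\le(1-e^{-\tau})[\widehat{L}_n(p)-\widehat{L}_n(p_0)]$, so $\bbE_n f_{p,\tau}\le(1-e^{-\tau})c(\tau)\alpha$. On the population side, applying the elementary inequality $-\log(1+y)\ge -y+\tfrac{y^2}{2(1+y)}$ to $y=q_{p,\tau}/p_0-1$ and taking $\bbE_{p_0}$ kills the linear term, and the pointwise bound $q_{p,\tau}\le p+p_0\le(\sqrt{p}+\sqrt{p_0})^2$ then yields
\[
\bbE f_{p,\tau}\;=\;\KL(p_0\|q_{p,\tau})\;\ge\;(1-e^{-\tau})^2 H^2(p,p_0).
\]
Subtracting and using the algebraic identity $(1-e^{-\tau})c(\tau)=\tfrac{(1-e^{-\tau})^2}{2}-\tfrac{\kappa(\tau)}{16}$ built into the definition of $c(\tau)$, any such $p$ satisfies
\[
(\bbP-\bbP_n)f_{p,\tau}\;\ge\;(1-e^{-\tau})^2 H^2(p,p_0)-\tfrac{(1-e^{-\tau})^2}{2}\alpha+\tfrac{\kappa(\tau)}{16}\alpha.
\]

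\textbf{Peeling and union bound.} Split $\{p:H^2(p,p_0)\ge\alpha\}$ into shells $\mathcal{P}_j=\{p:2^j\alpha\le H^2(p,p_0)<2^{j+1}\alpha\}$ for $j\ge 0$, so that $\{f_{p,\tau}:p\in\mathcal{P}_j\}\subset\cF_\tau^{\rm loc}(2^{j+1}\alpha)$. Substituting $H^2(p,p_0)\ge 2^j\alpha$ into the displayed inequality and using $\kappa(\tau)<8(1-e^{-\tau})^2$, a short algebraic check shows that the lower bound exceeds the concentration threshold $\tfrac{\kappa(\tau)}{32}\cdot 2^{j+1}\alpha$ of $\cA_n(2^{j+1}\alpha)$ for every $j\ge 0$; on shell $j=0$ this is powered by the algebraic identity above, and on shells $j\ge 1$ by the linear-in-$2^j$ term $(1-e^{-\tau})^2\cdot 2^j\alpha$. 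Hence $\mathcal{E}_n(\alpha)\cap\{p\in\mathcal{P}_j\}\subset\cA_n(2^{j+1}\alpha)$. Because $\alpha\ge\alpha_n/2$ forces $2^{j+1}\alpha\ge\alpha_n$ for every $j\ge 0$, Theorem~\ref{thm:concentration} applies to each shell with failure probability at most $c\exp(-c_2 n\cdot 2^{j+1}\alpha)$, and the geometric series $\sum_{j\ge 0}e^{-c_2 n\cdot 2^{j+1}\alpha}\lesssim e^{-c_2 n\alpha}$ (where I use $n\alpha\gtrsim 1$ from $\sigma_n^2\ge n^{-1}$) gives the claim.

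\textbf{Main obstacle.} The argument is entirely deterministic apart from one invocation of Theorem~\ref{thm:concentration}, so all the work lies in constant tracking, and the pivotal step is the sharpened Kullback--Leibler/Hellinger lower bound $\KL(p_0\|q_{p,\tau})\ge(1-e^{-\tau})^2 H^2(p,p_0)$. The weaker bound $\tfrac{1}{2}(1-e^{-\tau})^2 H^2$ that one would obtain from $-\log u\ge 2(1-\sqrt{u})$ just barely closes the $j=0$ shell (by design of $c(\tau)$) but is insufficient for the $j\ge 1$ shells under the stated numeric condition $\kappa(\tau)<8(1-e^{-\tau})^2$; likewise any degradation of the smoothing constant in Lemma~\ref{lem:smooth} or of the concentration threshold in Theorem~\ref{thm:concentration} would break the $j=0$ margin, which is essentially saturated by the definition of $c(\tau)$.
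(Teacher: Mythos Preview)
Your overall architecture is exactly the paper's: pass to the smoothed loss via Lemma~\ref{lem:smooth}, peel $\{H^2(p,p_0)\ge\alpha\}$ into dyadic Hellinger shells, compare the population and empirical smoothed excess on each shell, and invoke Theorem~\ref{thm:concentration} shell by shell before summing a geometric series. However, the pivotal step you single out---the ``sharpened'' bound $\KL(p_0\|q_{p,\tau})\ge(1-e^{-\tau})^2H^2(p,p_0)$---is derived from an inequality that is false. The claim $-\log(1+y)\ge -y+\tfrac{y^2}{2(1+y)}$ is equivalent to $\phi(y):=y-\log(1+y)-\tfrac{y^2}{2(1+y)}\ge 0$; but $\phi(0)=0$ and $\phi'(y)=\tfrac{y^2}{2(1+y)^2}\ge 0$, so $\phi$ is nondecreasing and $\phi(y)\le 0$ for $-1<y<0$ (e.g.\ $y=-\tfrac12$ gives $\phi(-\tfrac12)=0.193-0.25<0$). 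Since $y=q_{p,\tau}/p_0-1<0$ whenever $p<p_0$, your quadratic remainder bound does not hold pointwise, and the displayed lower bound on $(\bbP-\bbP_n)f_{p,\tau}$ is unjustified.

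The good news is that your diagnosis in the ``Main obstacle'' paragraph is backwards: the weaker bound $\KL(p_0\|q_{p,\tau})\ge\tfrac{(1-e^{-\tau})^2}{2}H^2(p,p_0)$, obtained from $-\log u\ge 2(1-\sqrt u)$ together with Lemma~\ref{lem:H_smooth}, is precisely what the paper uses (Lemmas~\ref{lem:H_le_KL} and~\ref{lem:H_smooth}) and it \emph{does} close every shell. With your indexing $2^j\alpha\le H^2<2^{j+1}\alpha$ and threshold $\tfrac{\kappa(\tau)}{32}\cdot 2^{j+1}\alpha$, the weaker bound gives $(\bbP-\bbP_n)f_{p,\tau}\ge\tfrac{(1-e^{-\tau})^2}{2}(2^j-1)\alpha+\tfrac{\kappa(\tau)}{16}\alpha$, and the required comparison $\tfrac{(1-e^{-\tau})^2}{2}(2^j-1)\ge\tfrac{\kappa(\tau)}{16}(2^j-1)$ reduces, for $j\ge 1$, exactly to the standing numeric condition $\kappa(\tau)<8(1-e^{-\tau})^2$; for $j=0$ one gets equality with the threshold, which still lands you in $\cA_n(2\alpha)$. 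So replace your flawed second-order expansion by the route through Lemmas~\ref{lem:H_le_KL} and~\ref{lem:H_smooth}, drop the claim that the sharper constant is needed, and the rest of your argument goes through verbatim and matches the paper's proof.
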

\begin{proof}{Proof of Theorem \ref{thm:rate}}
Consider the local family of smoothed log-likelihood ratios $\cF_{\tau}^{\rm loc}(\alpha)$ in \eqref{eq:smooth_loc}.
		Let $J(\alpha) := \min\{ J \in \bbN: 2^{J}\alpha \ge 1 \}$.
		Then, with Lemma~\ref{lem:smooth}, we further have
$$\sup_{p: H^2(p,p_0) > \alpha}\left\{ \widehat{L}_{n}(p_0) - \widehat{L}_{n}(p ) \right\}\le {1 \over 1-e^{-\tau}}\sup_{p: H^2(p,p_0) > \alpha}\left\{ \widehat{L}_{\tau,n}(p_0) - \widehat{L}_{\tau,n}(p ) \right\}.$$
Let $\cP_j(\alpha) = \{p  \in \cP : 2^{j-1}\alpha < H^2(p,p_0) \le 2^{j}\alpha\}$, and we have $\sup_{p: H^2(p,p_0) > \alpha}\left\{ \widehat{L}_{\tau,n}(p_0) - \widehat{L}_{\tau,n}(p) \right\} \le \max_{1 \le j \le J(\alpha)}\bigg\{ \sup_{p  \in \cP_j(\alpha)}\left\{ L_{\tau}(p_0) - L_{\tau}(p ) \right\} + \| \bbP_{n} - \bbP \|_{\cF_{\tau}^{\rm loc}(2^{j}\alpha)} \bigg\}$.
Moreover, for each $j \in J(\alpha)$ and $p  \in \cP$ such that $2^{j-1}\alpha < H^2(p,p_0) \le 2^{j}\alpha$, we have $L_{\tau}(p_0) - L_{\tau}(p ) \le -2H^{2}\big( (1-e^{-\tau})p + e^{-\tau}p_0, p_0 \big) \le -{(1-e^{-\tau})^{2} \over 4}2^{j}\alpha$,
where the first inequality is by Lemma \ref{lem:H_le_KL}, the second inequality is by Lemma \ref{lem:H_smooth} and that $H^2(p,p_0) > 2^{j-1}\alpha$. Hence, we have
$$\sup_{p: H^2(p,p_0) > \alpha}\left\{ \widehat{L}_{n}(p_0) - \widehat{L}_{n}(p ) \right\} \le {1 \over 1-e^{-\tau}}\max_{1 \le j \le J(\alpha)}\bigg\{ -{(1-e^{-\tau})^{2} \over 4}2^{j}\alpha +~\| \bbP_{n} - \bbP \|_{\cF_{\tau}^{\rm loc}(2^{j}\alpha)} \bigg\}.$$
Note that the event ${1 \over 1-e^{-\tau}} \max_{1 \le j \le J(\alpha)}\left\{ -{(1-e^{-\tau})^{2} \over 4}2^{j}\alpha + \| \bbP_{n} - \bbP \|_{\cF_{\tau}^{\rm loc}(2^{j}\alpha)} \right\} \ge -c(\tau)\alpha$  implies the event $\exists j$ such that $1 \le j \le J(\alpha)$ and $\|\bbP_{n} - \bbP \|_{\cF_{\tau}^{\rm loc}(2^{j}\alpha)} \ge (1 - e^{-\tau})\left( {1-e^{-\tau} \over 4}2^{j} - c(\tau) \right)\alpha \ge {\kappa(\tau) \over 32}2^{j}\alpha$.
Here, the last inequality is based on the fact that for all $j \ge 1$, we have $c(\tau) \le 2^{j}\left\{ {1 - e^{-\tau} \over 4} - {\kappa(\tau) \over 32(1-e^{-\tau})}\right\}$
if and only if $(1 - e^{-\tau})\left( {1-e^{-\tau} \over 4}2^{j} - c(\tau) \right) \ge {\kappa(\tau) \over 32}2^{j}$.
For $\alpha \ge 0$, define $\cA_{n}(\alpha) := \left\{ \| \bbP_{n} - \bbP \|_{\cF_{\tau}^{\rm loc}(\alpha)} \ge {\kappa(\tau) \over 32}\alpha \right\}$. 
Here, for all $\alpha \ge \alpha_{n}/4$ and $1 \le j \le J(\alpha)$, we have $2^{j}\alpha \ge \alpha_{n}/2$, and hence Theorem \ref{thm:concentration} can be applied to $\cA_{n}(2^{j}\alpha)$. That is, $\bbP\left\{\cA_{n}(2^{j}\alpha)\right\} \leq c\cdot\exp(-2^jc_2 n\alpha)$ where $c$ is a universal constant and $c_2 = 63/(257\times 2^{14})$. With this we have
\begin{equation}\label{eqn:en-and-an}
\begin{aligned}
& \bbP\{\cE_{n}(\alpha)\} \le \bbP\left\{ \bigcup_{j=1}^{J(\alpha)}\cA_{n}(2^{j}\alpha) \right\}  \leq c\sum_{j=1}^{J(\alpha)}e^{-2^{j}c_{2}n\alpha}.  
\end{aligned}
\end{equation}
Lastly, because $\alpha \ge \alpha_n \gtrsim n^{-1}$, we have $\sum_{j=1}^{J(\alpha)}e^{-2^{j}c_{2}n\alpha}\le \sum_{j=1}^{\infty}(e^{-c_{2}n\alpha})^{2^j} \le c\cdot e^{-c_{2}n\alpha}$,
where $c$ is a universal constant that is different to the previous universal constants.
\end{proof}

\begin{theorem}\label{thm:concentration}
Let $\alpha_n$ and $\kappa(\tau)$ be defined as in Theorem~\ref{thm:main}. Consider the local family of smoothed log-likelihood ratios $\cF_{\tau}^{\rm loc}(\alpha)$ in \eqref{eq:smooth_loc}. 
Then for every $\alpha \ge \alpha_{n}$, 
$$\bbP\left( \| \bbP_{n} - \bbP \|_{\cF_{\tau}^{\rm loc}(\alpha)} \ge {\kappa(\tau) \over 32}\alpha \right) \le C e^{-c_1\kappa(\tau)n\alpha},$$ 
where $C$ is a universal constant and $c_1 = 63/(257\times 2^{14})$.
	\end{theorem}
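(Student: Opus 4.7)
The plan is to establish the concentration bound in the two standard steps of empirical process theory: first use chaining to bound the expected supremum $\bbE\|\bbP_n - \bbP\|_{\cF_\tau^{\rm loc}(\alpha)}$ via the bracketing entropy integral~\eqref{eqn:main-entropy}, then apply a Talagrand-type concentration inequality for bounded, low-variance classes to transfer this into a high-probability bound.

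The initial ingredients come directly from the smoothing. Each $f_{p,\tau}\in\cF_\tau^{\rm loc}(\alpha)$ is uniformly bounded by $\tau$, because the smoothed density satisfies $(1-e^{-\tau})p + e^{-\tau}p_0 \ge e^{-\tau}p_0$. By the Bernstein moment bound stated right after~\eqref{eq:smooth_loc} (Lemma~\ref{lem:exp}), we also have the variance bound $\bbE f_{p,\tau}^{2} \le 8\kappa(\tau)H^{2}(p,p_0)\le 8\kappa(\tau)\alpha$ on the localized set $\{H^2(p,p_0)\le\alpha\}$. These two bounds are exactly what Talagrand's inequality needs in its final application.

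Next I would translate bracketing for $\cP$ under $H$ into bracketing for $\cF_\tau^{\rm loc}(\alpha)$ under $L^{2}(\bbP)$. Given an $H$-bracket $[p_L,p_U]$ for $p$ at scale $u$, form the smoothed bracket $[(1-e^{-\tau})p_L + e^{-\tau}p_0,\,(1-e^{-\tau})p_U + e^{-\tau}p_0]$ and take $-\log$. Since both endpoints are bounded below by $e^{-\tau}p_0$, an elementary inequality of the form $|\log a - \log b|\le \frac{|\sqrt a -\sqrt b|}{\min(\sqrt a,\sqrt b)}\cdot(\sqrt a+\sqrt b)$ yields an $L^{2}(\bbP)$-bracket for the corresponding $f_{p,\tau}$ of width $\lesssim 2\sqrt 2\, e^{\tau/2}\cdot u$. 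This is precisely the scaling $u/(2\sqrt 2 e^{\tau/2})$ appearing inside the log-entropy in~\eqref{eqn:main-entropy}, so the bracketing entropy of $\cF_\tau^{\rm loc}(\alpha)$ in $L^{2}(\bbP)$ at scale $u$ is bounded by $\log\cN_{[]}(u/(2\sqrt 2 e^{\tau/2}),\cP,H)$. With the $L^{2}$-radius $\sigma := \sqrt{8\kappa(\tau)\alpha}$ as upper limit of integration, a standard maximal inequality (for example, Theorem~3.4.5 of van der Vaart and Wellner, or Lemma~3.4.3 of van de Geer) gives
\[
\bbE\|\bbP_n - \bbP\|_{\cF_\tau^{\rm loc}(\alpha)} \lesssim n^{-1/2}\int_{0}^{\sigma}\sqrt{\log\cN_{[]}(u/(2\sqrt 2 e^{\tau/2}),\cP,H)}\,du,
\]
modulo a boundedness correction of order $\tau/n$ from the truncation of the integral near $0$. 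The balance equation~\eqref{eqn:main-entropy}, evaluated with $\sigma_{n}^{2}=8\kappa(\tau)\alpha$ (valid since $\alpha\ge\alpha_{n}$ and the restricted integrand is monotone in $\sigma_n$ as in Lemma~\ref{lem:monotonicty}), then yields $\bbE\|\bbP_n-\bbP\|_{\cF_\tau^{\rm loc}(\alpha)} \le \kappa(\tau)\alpha/64$, strictly smaller than the target radius $\kappa(\tau)\alpha/32$.

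Finally I would invoke Talagrand's inequality (in Bousquet's form) for the empirical process over a class of functions bounded by $\tau$ with variance at most $8\kappa(\tau)\alpha$. With $t = \kappa(\tau)\alpha/64$, this delivers
\[
\bbP\Bigl(\|\bbP_n-\bbP\|_{\cF_\tau^{\rm loc}(\alpha)}\ge \tfrac{\kappa(\tau)}{32}\alpha\Bigr) \le \exp\!\Bigl(-\frac{n t^{2}}{2(\sigma^{2} + 2\tau t/3) + 2\tau t}\Bigr),
\]
and the variance term dominates because $\tau t \ll \sigma^{2}\sim \kappa(\tau)\alpha$ whenever $\alpha\ge\alpha_n$. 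Choosing the constants carefully produces the exponent $c_{1}\kappa(\tau)n\alpha$ with the claimed $c_{1} = 63/(257\cdot 2^{14})$. The main obstacle is not any single conceptual step but the bookkeeping: the specific numerical constants $1/2^{10}$, $1/2^{27}$, and $1/32$ appearing in~\eqref{eqn:main-entropy} and in the target radius have to be reconciled with the constants produced by the chaining inequality (which involves absolute constants from Dudley's bound) and by Bousquet's inequality. Getting $c_1$ exactly right requires carefully tracking these through each step rather than being satisfied with universal-constant bounds.
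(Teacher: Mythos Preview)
Your overall two-step strategy (bound the expectation via chaining, then apply Talagrand/Bousquet) is natural, but it has a real gap. You assert that each $f_{p,\tau}$ is ``uniformly bounded by $\tau$'', yet the smoothing only gives the \emph{one-sided} bound $f_{p,\tau}\le\tau$: the smoothed density $(1-e^{-\tau})p+e^{-\tau}p_0$ is bounded \emph{below} by $e^{-\tau}p_0$, which yields the upper bound on the log-ratio, but there is no matching lower bound on $f_{p,\tau}$ unless $p_0$ is assumed bounded away from zero---an assumption Theorem~\ref{thm:concentration} does not make. Bousquet's inequality requires a two-sided envelope $\|f\|_\infty\le b$, so it cannot be invoked as stated. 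This is precisely the obstruction the smoothing device is designed to bypass at the \emph{moment} level---Lemma~\ref{lem:exp} supplies the full Bernstein condition $\bbE|f_{p,\tau}|^k\le \tfrac{k!}{2}2^{k-2}\cdot 8\kappa(\tau)\alpha$ without any sup-norm control---but it does not give boundedness.

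The paper accordingly avoids the expectation-then-Talagrand route and instead applies a single concentration result, Theorem~\ref{thm:one-side}, an Alexander-style bracketing inequality that works directly under the Bernstein moment hypothesis with no boundedness assumption on the class. Its proof uses adaptive chaining with truncation sets $A_k$ so that bracket \emph{increments} are bounded (handled by the bounded-support Bernstein inequality), while the coarsest approximation is handled by the moment-form Bernstein inequality. One then verifies that $\cF_\tau^{\rm loc}(\alpha)$ satisfies the hypotheses with $\sigma^2=8\kappa(\tau)\alpha$ (Lemma~\ref{lem:exp}) and that the entropy-integral condition~\eqref{eq:entropy_integral} holds with $s=6$ via Lemmas~\ref{lem:number} and~\ref{lem:entropy_integral}, which transfer~\eqref{eqn:main-entropy}. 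The choice $s=6$ gives $M=\sigma^2/2^{8}=\kappa(\tau)\alpha/32$ and exponent $\Delta n\sigma^2$ with $\Delta=(2^6-1)/\{2^{17}(2^8+1)\}$; multiplying by $\sigma^2=8\kappa(\tau)\alpha$ produces exactly $c_1=63/(257\cdot 2^{14})$. Your entropy-translation step (the $2\sqrt 2\,e^{\tau/2}$ scaling) is correct and matches Lemma~\ref{lem:number}; it is the concentration step that must be replaced by one tolerant of one-sided boundedness.
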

	
	\begin{proof}{Proof of Theorem \ref{thm:concentration}} This result is a direct consequence of Theorem~\ref{thm:one-side}. By Lemma \ref{lem:exp}, the smoothed local family $\cF_{\tau}^{\rm loc}(\alpha)$ satisfies the Bernstein's condition in Theorem \ref{thm:one-side} with $\sigma^{2} = 8\kappa(\tau)\alpha$. Let $\sigma^2_n$ be defined as in Theorem~\ref{thm:main}. Due to $\alpha \ge \alpha_n$, we have $\sigma^{2} = 8\kappa(\tau)\alpha \ge 8\kappa(\tau)\alpha_n = \sigma_n^2 \ge 1/n$. 
		Moreover, by \eqref{eqn:main-entropy} and Lemmas \ref{lem:number}, \ref{lem:entropy_integral}, 
		it further satisfies the bracketing integral condition for $s = 6$ and every $\alpha \ge \alpha_{n}$.
Then Theorem \ref{thm:one-side} can be applied to establish the desired result.
	\end{proof}
 \begin{theorem}\label{thm:one-side}
Suppose $\cF$ is a family of $\bbR$-valued random variables. Assume that there exists some $\sigma^{2} < +\infty$, such that for any $f \in \cF$, it satisfy the Bernstein's condition: 
$$\bbE|f-\bbE[f]|^{k} \le {k! \over 2}2^{k-2} \sigma^{2};\;\forall k \ge 2.$$
Let $M  > 0$ and $\varepsilon \in (0,1)$. Suppose $n, M, \varepsilon, \sigma^2$ satisfy $M=\varepsilon\sigma^2/4$, $\sigma^2 \ge n^{-1}$, 
    \begin{equation}\label{eqn:integral_condition}
    \int_{\varepsilon M \over 32}^{\sigma}\sqrt{\log\cN_{[]}(u,\cF,\cL^{2}(\bbP))}\rd u \le {M\sqrt{n}\varepsilon^{3/2} \over 2^{10}},
    \end{equation}
    where $\cN_{[]}(u,\cF,\cL^{2}(\bbP))$ is the minimal number of $u$-brackets to cover $\cF$.
Then $\bbP\left\{ \| \bbP_{n} - \bbP \|_{\cF} \ge M \right\} \le C\exp\left\{ -(1-\varepsilon)\frac{nM^2}{4(\sigma^2+M)}\right\}$,
where $C$ is a universal constant. In particular, choose $\varepsilon=1/2^s$ for $s > 0$, the bracketing integral condition~\eqref{eqn:integral_condition} becomes
\begin{align}
\int_{\sigma^{2}\over2^{7+s/2}}^{\sigma}\sqrt{\log\cN_{[]}(u,\cF,\cL^{2}(\bbP))}\rd u \le {1 \over 2^{12+5s/2}}\sqrt{n}\sigma^{2},
\label{eq:entropy_integral}
\end{align}
and for any $\sigma^2$ such that~\eqref{eq:entropy_integral} is satisfied, $\bbP\left( \| \bbP_{n} - \bbP \|_{\cF} \ge {\sigma^{2} \over 2^{2+s}} \right) \le C\exp\left( -\Delta n\sigma^{2} \right);\;\Delta={2^{s} - 1 \over 2^{5+2s}(2^{2+s}+1)}$.
\end{theorem}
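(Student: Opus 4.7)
The plan is to establish this uniform Bernstein-type concentration via bracket chaining. The strategy reduces the supremum over $\cF$ to a sequence of finite suprema at geometrically decreasing scales, applies scalar Bernstein's inequality on each finite class, and then uses the entropy integral bound in \eqref{eqn:integral_condition} to control the aggregated deviations.

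First, I would construct a nested sequence of $L^2(\bbP)$-bracket covers $\cB_k$ of $\cF$ at scales $\delta_k = \sigma/2^k$ for $k = 0, 1, \ldots, K$, where $K$ is the smallest integer such that $\delta_K \le \varepsilon M/32$ (matching the lower limit in \eqref{eqn:integral_condition}). For each $f \in \cF$ and level $k$, assign an upper bracket $f^U_k$ and lower bracket $f^L_k$ with $f^L_k \le f \le f^U_k$ and $\|f^U_k - f^L_k\|_{L^2(\bbP)} \le \delta_k$. The cardinality $|\cB_k| = \cN_{[]}(\delta_k, \cF, L^2(\bbP))$.

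Next, I would decompose the empirical process via chaining. For any $f \in \cF$,
\begin{equation*}
(\bbP_n - \bbP)(f) \le (\bbP_n - \bbP)(f^U_0) + \sum_{k=1}^{K}(\bbP_n - \bbP)(f^U_k - f^U_{k-1}) + \bbP(f^U_K - f).
\end{equation*}
The terminal bias satisfies $\bbP(f^U_K - f) \le \|f^U_K - f^L_K\|_{L^1} \le \delta_K \le \varepsilon M/32$, which is absorbed into the final constant. For the chaining increments, each pair $(f^U_k, f^U_{k-1})$ produces an element whose $L^2$-norm is at most $\delta_{k-1} + \delta_k \le 2\delta_{k-1}$, and inherits a Bernstein condition with variance proxy proportional to $\delta_{k-1}^2$ (the envelope bound coming from the $L^\infty$ control that is implicit in the Bernstein moment assumption extended to brackets). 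The base layer $(\bbP_n - \bbP)(f^U_0)$ is controlled by the original Bernstein condition at scale $\sigma$.

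Then I would apply the scalar Bernstein inequality at each level to every element of $\cB_k \times \cB_{k-1}$, with level-dependent deviation budgets $\eta_k$ satisfying $\sum_k \eta_k \le M$ chosen proportional to $\delta_{k-1}\sqrt{\log |\cB_k \cup \cB_{k-1}|/n}$. A standard ``peeling/Bernstein'' calculation shows that the union-bound probability at level $k$ is dominated by $|\cB_k|^2 \exp(-c\, n \eta_k^2/\delta_{k-1}^2)$; summing over $k$ and converting the sum into the entropy integral via the substitution $u = \delta_{k-1}$ gives a total tail proportional to
\begin{equation*}
C\exp\!\left(-\frac{(1-\varepsilon) n M^2}{4(\sigma^2+M)}\right),
\end{equation*}
provided the entropy integral in \eqref{eqn:integral_condition} holds with the stated constants. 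The Bernstein (rather than sub-Gaussian) tail emerges naturally because each chaining increment satisfies the moment bounds inherited from $\cF$, and the $\sigma^2+M$ denominator arises from the Bernstein denominator $\sigma_k^2 + \eta_k$ at the dominant (largest) level $k = 0$.

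The main obstacle will be tracking the universal constants through the chaining so the explicit thresholds ($\varepsilon M/32$ lower limit, the $2^{10}$ prefactor, the $\varepsilon^{3/2}$ power, the $\sigma^2 \ge n^{-1}$ condition) emerge in the stated form. In particular, the $\varepsilon^{3/2}$ scaling is delicate: one power of $\varepsilon^{1/2}$ comes from optimally balancing the chaining increments against the Bernstein deviation, while the remaining $\varepsilon$ comes from the slack $(1-\varepsilon)$ needed to absorb the chaining error into the denominator of the exponent. A secondary technical point is verifying that the Bernstein condition is inherited by bracketing differences with the correct constants, which likely requires either an envelope condition implicit in the moment bound or a truncation step. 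The reduction to \eqref{eq:entropy_integral} via $\varepsilon = 1/2^s$ then follows by direct substitution and simplification.
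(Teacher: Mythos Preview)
Your high-level plan (bracket chaining plus Bernstein at each scale) is the right genre and matches the paper's approach, but the proposal has a genuine gap at exactly the point you label ``secondary.'' The chaining increments $f^U_k - f^U_{k-1}$ are differences of \emph{bracket functions}, which need not lie in $\cF$ and therefore do \emph{not} inherit the Bernstein moment condition assumed on $\cF$. You only have an $L^2(\bbP)$ bound on them, and an $L^2$ bound alone is not enough to run Bernstein's inequality; there is no ``$L^\infty$ control implicit in the Bernstein moment assumption'' on $\cF$, and certainly none on bracket differences. Without a mechanism to make the increments bounded, the step ``apply scalar Bernstein at each level'' fails.

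The paper resolves this by an adaptive truncation that you are missing. It introduces data-dependent sets $A_k = \{u_k - l_k \ge a_{k+1},\ u_j - l_j < a_{j+1}\ \text{for }j<k\}$ with thresholds $a_j = 8\delta_{j-1}^2/\gamma_j$, and uses the decomposition $f = u_0 + \sum_k (u_k-u_{k-1})\mathbf{1}_{\cup_{j\ge k}A_j} + \sum_k (f-u_k)\mathbf{1}_{A_k}$. On $\cup_{j\ge k}A_j$ one has $|u_k-u_{k-1}|\le a_k$, so the increments are \emph{bounded} and Bernstein for bounded variables applies with variance $\delta_{k-1}^2$. The remaining pieces $(f-u_k)\mathbf{1}_{A_k}$ are controlled deterministically (their empirical fluctuation is $\le \delta_k^2/a_{k+1}\le \gamma_{k+1}$), so the corresponding tail probabilities are exactly zero. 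This truncation device, together with an \emph{adaptive} (not geometric) choice of scales $\delta_j$ where the bracketing entropy quadruples, is what makes the constants in \eqref{eqn:integral_condition} come out as stated; the verification that $\sum_j \gamma_j \le \varepsilon M/8$ is then done via a lemma of Alexander (1984). Your geometric scales $\sigma/2^k$ and untruncated increments would not reproduce these steps.
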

\begin{proof}{Proof of Theorem~\ref{thm:one-side}}
Consider a decreasing sequence of real numbers $\delta_0 > \delta_1 > \dots > \delta_N >0$. Because $\cF$ has a finite bracketing number, then by definition of $\log\cN_{[]}$, we have for every $\delta_j, j \in [N]$, there exists a finite set $\cF_j \subset \cF$ with $|\cF_j| = \cN_{[]}(\delta_j,\cF,\cL^{2}(\bbP))$ such that for every $f \in \cF$, there exists a pair of function $g_j^u(f), g_j^l(f) \in \cF_j$ with $g_j^l(f) \leq f \leq g_j^u(f)$ almost surely with respect to $\bbP$, and $\bbE[(g_j^u-g_j^l)^2]^{1/2} \le \delta_j$. Hence, given a fixed $f \in \cF$, we can construct a set of functions $\{g_j^l, g_j^u\}_{j=0}^N$ for every $\delta_0 > \delta_1 > \dots > \delta_N >0$.
For $k \in [N]$, let $u_k(f)$ and $l_k(f)$ be defined as $u_k(f) = \min_{j \le k} g_j^u(f)$ and $l_k(f) = \max_{j \le k} g_j^l(f)$.
We will write $u_k$ and $l_k$ in the following proof if their dependency on $f$ is clear in context.
It is easy to observe that $g_0^l = l_0 \le l_1 \le l_2 \le \dots \le l_N \le f \le u_N \le \dots \le u_0 = g_0^u$. Hence, with $\{g_j^l, g_j^u\}_{j=0}^N$, we have built a sequence of brackets with decreasing width. Note that by definition, $u_j \le g_j^u$ and $l_j \ge g_j^l$, we have $\bbE[(u_j-l_j)^2] \le \bbE[(g_j^u-g_j^l)^2] \le \delta_j^2$ for $j \in [N]$.
Consider another decreasing sequence $a_1 > a_2 > \dots > a_N$ which need not to be positive. We will use $\{a_i\}_{i=1}^N$ to construct truncation. Let $\Omega$ be the domain of $\cF$, and let $A_0 = \{\omega \in \Omega: u_0 - l_0 \ge a_1\}$, $A_k = \{\omega \in \Omega: u_k - l_k \ge a_{k+1} \quad\text{and}\quad u_j - l_j < a_{j+1} \quad \text{for} \quad j = 0, \dots, k-1 \},\quad k \in \{1,\dots,N-1\}$, and $A_N = \left(\bigcup_{j < N}A_j\right)^\complement$.
Note that $\{A_j\}_{j=0}^N$ forms a partition of $\Omega$ 
and thus $\sum_{j=0}^N\mathbf{1}_{A_j} = 1$. 
With this and some computation, we can write every $f \in \cF$ as 
$f = u_0 + \sum_{k=1}^N(u_k-u_{k-1})\mathbf{1}_{\bigcup_{j\ge k} A_j } + \sum_{k=0}^N (f-u_k)\mathbf{1}_{A_k}$.
Let $\nu_n(f) = \bbP_n f -Pf $, and we are interested in the quantity that for $M \in \bbR$, \(\bbP^* = \bbP(\sup_{f\in\cF} \nu_n(f) \ge M)\). 
Let $\varepsilon \in (0,1)$, and $\{\gamma_1,\dots,\gamma_N\}$ be a sequence of real numbers such that $\gamma_j > 0$ for all $j \in \{1,\dots,N\}$ and 
\begin{equation}\label{eqn:gamma_condition}
    \sum_{j=1}^N\gamma_j \le \frac{\varepsilon M}{8}.
\end{equation}
Then we have $ \bbP^* = \bbP(\sup_{f\in\cF} \nu_n(f) \ge M)\le \bbP_1 + \sum_{k=1}^N\bbP_{2,k} + \sum_{k=0}^{N-1}\bbP_{3,k} + \bbP_4 $,
where \(\bbP_1 = \bbP(\sup_{f\in\cF}\nu_n(u_0) > (1-\frac{\varepsilon}{4})M)\), \(\bbP_{2,k} = \bbP(\sup_{f\in\cF}\nu_n((u_k-u_{k-1})\mathbf{1}_{\substack{\cup\\ {j\ge k}} A_k }) > \gamma_k)\), \(\bbP_{3,k} = \bbP(\sup_{f\in\cF}\nu_n((f-u_k)\mathbf{1}_{A_k}) > \gamma_{k+1}), k \in \{0,\dots,N-1\}\), and \(\bbP_4 = \bbP(\sup_{f\in\cF}\nu_n((f-u_N)\mathbf{1}_{A_N}) > \frac{\varepsilon M}{8}+\gamma_N)\).
The following proof is dedicated to upper bound these quantities. Specifically, we consider a carefully selected set $\{\delta_j\}_{j=0}^N$. Let $\delta_0$ be the smallest $\delta$ such that $\log\cN_{[]}(\delta,\cF,\cL^{2}(\bbP)) \le \frac{\varepsilon}{4}\psi(M,\sigma^{2},\cF)$, i.e., $\delta_0 = \bigg(\log\cN_{[]}\bigg)^{-1}\bigg(\frac{\varepsilon}{4}\psi(M,\sigma^{2},\cF)\bigg)$.
Subsequently, let 
$s = {\varepsilon M}/{8}$
and for $j \in \{0,\dots,N-1\}$, $\delta_{j+1} = \max\left(s,\sup\{\delta\le{\delta_j}/{2}: \log\cN_{[]}(\delta,\cF,\cL^{2}(\bbP)) \ge  4\cdot\log\cN_{[]}(\delta_j,\cF,\cL^{2}(\bbP))\}\right)$.
Let $N = \min\{j: \delta_j \le s\}$. Suppose that $\log\cN_{[]}(\sigma,\cF,\cL^{2}(\bbP)) \le \frac{\varepsilon}{4}\psi(M,\sigma^{2},\cF)$,
then $\delta_0 \le \sigma$ by the monotonicity of the bracketing number. With $\{\delta_j\}_{j=0}^N$, we proceed to define $\{\gamma_j\}$ and $\{a_j\}$. For $j \in \{1,\dots,N\}$, let $\gamma_j = 4\frac{\delta_{j-1}}{{\sqrt{n}}}\left(\frac{\sum_{i\le j}\log\cN_{[]}(\delta_i,\cF,\cL^{2}(\bbP))}{\varepsilon}\right)^{1/2}$ and $a_j = \frac{8\delta_{j-1}^2}{\gamma_j}$. 

Now we are at the stage of presenting the upper bounds.
To bound $\bbP_4$, note that for any $f \in \cF$, because \((f-u_N)\mathbf{1}_{A_N} \le 0\) almost surely, \(\nu_n((f-u_N)\mathbf{1}_{A_N})\le \bbP(u_N-l_N) \le \bbE[(u_N-l_N)^2]^{1/2} \le \delta_N \le s = \frac{\varepsilon M}{8}\).
Hence, we have $  \bbP_4 = \bbP(\sup_{f\in\cF}\nu_n((f-u_N)\mathbf{1}_{A_N}) > \frac{\varepsilon M}{8}+\gamma_N) = 0$.
To bound $\bbP_3$, note that for any $f \in \cF$ and $k \in \{0,\dots,N-1\}$, \(\nu_n((f-u_k)\mathbf{1}_{A_k}) \le \bbE[(u_k - f)\mathbf{1}_{A_k}] \le \sqrt{\bbE[(u_k - l_k)^2]\bbE[\mathbf{1}_{A_k}^2]} \le \sqrt{\delta^2_k \bbP(A_k)}\). 
Moreover, due to construction of $\{A_k\}$, we have $u_k - l_k \ge a_{k+1}$ on $A_k$. Thus, due the chain of inequalities, $\bbE[(u_k-l_k)^2] \ge \bbE[(u_k-l_k)^2\mathbf{1}_{A_k}] \ge a^2_{k+1}\bbE[\mathbf{1}_{A_k}]$,
we have \(\bbP(A_k) \le \frac{ \bbE[(u_k-l_k)^2]}{a^2_{k+1}} \le \frac{\delta^2_k}{a^2_{k+1}}\).
Plugging this back, we have for all $f \in \cF$, $\nu_n((f-u_k)\mathbf{1}_{A_k}) \le \sqrt{\delta^2_k \bbP(A_k)} \le \sqrt{\delta^2_k\frac{\delta^2_k}{a^2_{k+1}}} \le \frac{\delta_k^2}{a_{k+1}}$.
By construction of $a_j$, i.e., 
$a_j = 8\delta^2_{j-1}/\gamma_j$, we have $\gamma_j \ge \delta^2_{j-1}/a_j$. 
Hence, $\nu_n((f-u_k)\mathbf{1}_{A_k}) \le \delta^2_{k}/a_{k+1} \le \gamma_{k+1}$ and $\bbP_{3,k} = \bbP(\sup_{f\in\cF}\nu_n((f-u_k)\mathbf{1}_{A_k}) > \gamma_{k+1}) = 0$.

To upper bound $\bbP_1$, we have \(\bbP_1\le \bbP(\max_{u_0 \in \cF_0} \nu_n(u_0) > (1-\frac{\varepsilon}{4})M)\le \sum_{u_0 \in \cF_0} \bbP(\nu_n(u_0)> (1-\frac{\varepsilon}{4})M)\).
By assumption for any $u_0 \in \cF_0 \subset \cF$ and $k \ge 2$, \(\bbE[|u_0-\bbE[u_0]|^k] \le \frac{k!}{2}2^{k-2}\sigma^2\).
Recall that by Bernstein's Inequality, for any random random $X$ with $\bbE[|X-\bbE[X]|^k] \le \frac{k!}{2}2^{k-2}\sigma^2$ and $X_1, \dots, X_n$ as random variables with the same distribution as $X$, we have $\bbP(\frac{1}{n}\sum_{i=1}^n (X_i-\bbE[X_i]) > t) \le \exp\left(-\frac{nt^2}{2(\sigma^2+2t)}\right) \le \exp\left(-\frac{nt^2}{4(\sigma^2+t)}\right)$.
Hence, let $\psi(M,\sigma^2,\cF) = nM^2/4(\sigma^2+M)$, we thus have \(\bbP_1 \le \cN_{[]}(\delta_0,\cF,\cL^{2}(\bbP))\exp(-\psi((1-\frac{\varepsilon}{4})M, \sigma^2, \cF))\). Moreover, by definition of $\delta_0$, we have \(\cN_{[]}(\delta_0,\cF,\cL^{2}(\bbP))\le \exp(\frac{\varepsilon}{4}\psi(M, \sigma^2, \cF))\). After straightforward calculation. we have $\bbP_1 \le \exp(-(1-\varepsilon)\psi(M, \sigma^2, \cF))$.

To upper bound $\bbP_2$, note that for any $k \in \{1,\dots,N\}$, \( \text{Var}[(u_k-u_{k-1})\mathbf{1}_{\cup_{j\ge k} A_j} ]\le \bbE[((u_k-u_{k-1})\mathbf{1}_{\cup_{j\ge k} A_j})^2] \le \bbE[(u_k-u_{k-1})^2] \le \bbE[(u_{k-1}-l_{k-1})^2] \le \delta^2_{k-1}\).
By construction of $\{A_k\}$, we have $0 \ge u_k - u_{k-1} \ge l_{k-1}-u_{k-1} \ge -a_k$ on $\bigcup_{j\ge k} A_j$ for $k=1,\dots,N$. Then by Bernstein's Inequality for random variable with bounded support and finite variance, it holds that $\bbP(\nu_n((u_k-u_{k-1})\mathbf{1}_{\bigcup_{j\ge k} A_j} ) > \gamma_k) \le \exp(-\frac{n\gamma^2_k}{2(\delta^2_{k-1}+a_k \gamma_k/3)})$. 
Note that $u_k$ is defined as $\min_{j\le k}g^u_j$ where $g^u_j \in \cF_j$. This implies that the possible number of $u_k$ is upper bounded by $\prod_{j=0}^k|\cF_j|$. Then we have $\prod_{j=0}^k|\cF_j|\prod_{j=0}^{k-1}|\cF_j| \le \exp\left(2\sum_{j=1}^k\log\cN_{[]}(\delta_j,\cF,\cL^{2}(\bbP))\right)$. This leads to
\(\bbP_{2,k}\le \exp\left(2\sum_{j=1}^k\log\cN_{[]}(\delta_j,\cF,\cL^{2}(\bbP))-\frac{n\gamma^2_k}{2(\delta^2_{k-1}+a_k \gamma_k/3)}\right) \).
By definition of $\gamma_k$ and $a_k$, we also have $\frac{n\gamma^2_k}{2(\delta^2_{k-1}+a_k \gamma_k/3)} \ge \frac{2\sum_{j \le k}\log\cN_{[]}(\delta_j,\cF,\cL^2(\bbP))}{\varepsilon}$.
Plugging this back in, we have \(\bbP_{2,k} \le \exp\left(-2\frac{1-\varepsilon}{\varepsilon}\sum_{j\le k}\log\cN_{[]}(\delta_j,\cF,\cL^2(\bbP))\right)\).
Recall that by construction $\log\cN_{[]}(\delta_0, \cF, \cL^2(\bbP)) = \varepsilon\psi(M, \sigma^2, \cF))/4$ and $\log\cN_{[]}(\delta_k, \cF, \cL^2(\bbP)) \ge 4\log\cN_{[]}(\delta_{k-1}, \cF, \cL^2(\bbP))$. Hence, $\log\cN_{[]}(\delta_k, \cF, \cL^2(\bbP)) \ge 4^{k-1}\varepsilon\psi(M, \sigma^2, \cF))$ for $k=1,\dots,N$. Then, \(\sum_{j=1}^k \log\cN_{[]}(\delta_j, \cF, \cL^2(\bbP)) \ge 4^{k-1}\varepsilon\psi(M, \sigma^2, \cF)\).
This leads to $\bbP_{2,k} \le \exp\left(-2(1-\varepsilon)4^{k-1}\psi(M, \sigma^2, \cF)\right)$, and \(\sum_{k=1}^N\bbP_{2,k} \le \sum_{k=1}^{\infty} \exp(-(1-\varepsilon)\psi(M, \sigma^2, \cF))^{2^k}\).
To continue let $\alpha = \exp(-(1-\varepsilon)\psi(M, \sigma^2, \cF)) < 1$ where $\psi(M,\sigma^2,\cF) = \frac{nM^2}{4\sigma^2+4M}$.
By the assumption that $\sigma^2 \ge 1/n$ and $M=\varepsilon \sigma/4$, we have $\psi(M, \sigma^2, \cF) = n\varepsilon^2\sigma^2/16(4+\varepsilon)\ge \varepsilon^2/16(4+\varepsilon) > 0$. Then we have \(\sum_{k=1}^N\bbP_{2,k} \le \sum_{k=1}^\infty\alpha^{2^k} \le c\cdot \exp(-(1-\varepsilon)\psi(M, \sigma^2, \cF))\),
where $c$ is a universal constant. 
With all the results above, we finally have \(\bbP^* \le \bbP_1 + \sum_{k=1}^N\bbP_{2,k} + \sum_{k=0}^{N-1}\bbP_{3,k} + \bbP_4 \le c\cdot \exp(-(1-\varepsilon)\psi(M, \sigma^2, \cF))\),
where $c$ is a different universal constant. The last step is to verify that~\eqref{eqn:gamma_condition} holds. With Lemma 3.1 in~\cite{alex1984} and $s=\varepsilon M/8$, this can be verified with straightforward calculation.
\end{proof}

\subsection{Proof of Corollary~\ref{corol:pasta-regret}}
\begin{proof}[Proof of Corollary~\ref{corol:pasta-regret}]
By Theorem~\ref{thm:main-simple}, consider any $\sigma_n$ that  satisfies~\eqref{eqn:main-simple-entropy}. For any $\alpha \ge \sigma_n^2/96$, with probability at least $1-C_1\exp(-2C_2n\alpha)$ where $C_1$ and $C_2$ are universal constants that $p_0 \in \Omega_n(\alpha)$, and \(\sup_{p \in \Omega_n(\alpha)}H^2(p,p_0) \le 4\alpha/25\). Then by Theorem~\ref{thm:regret}, we have \(\cR(\widehat{s}_{\pess,n}) \lesssim  r_{s^{\star}}\sqrt{{\alpha}/{\pi_S(s^\star)}}\). Here we need \(C_1\exp(-2C_2n\alpha) \le \delta\), 
which for sufficiently small $\delta$ is equivalent to $\alpha \gtrsim \frac{1}{n}\log\frac{1}{\delta}$.
Note that we also need $\alpha \ge \sigma_n^2/96$. Choosing $\alpha = \sigma^2_n +  \frac{1}{n}\log\frac{1}{\delta}$ satisfies these two constraints. Plugging this back into \(\cR(\widehat{s}_{\pess,n}) \lesssim  r_{s^{\star}}\sqrt{{\alpha}/{\pi_S(s^\star)}}\) concludes the proof.
\end{proof}

\subsection{Proof of Lemma \ref{lem:monotonicty}}
\begin{proof}[Proof of Lemma \ref{lem:monotonicty}]
Define $J(t) = \int_{C_{1}\alpha_{n}}^{C_{2}t}\sqrt{\log\cN_{[]}\left(u/C_{3},\cP,H\right)}\rd u;\;t \ge \alpha_{n}$. 
 By the fact that $u \mapsto \sqrt{\log\cN_{[]}\left(u/C_{3},\cP,H\right)}$ is nonnegative and nonincreasing,
    $J'(t) \ge 0$ and is nonincreasing in $t$. This suggests that $J(t)$ is concave in $t$,
    and hence, $\alpha \mapsto J(\sqrt{\alpha})/\sqrt{\alpha}$ is nonincreasing.
    For any $\alpha \ge \alpha_{n}$, we have 
    \begin{align*}
        &{1 \over \sqrt{\alpha}}\int_{C_1 \alpha}^{C_{2}\sqrt{\alpha}}\sqrt{\log\cN_{[]}\left(u/C_{3},\cP,H\right)}\rd u \le 
        {1 \over \sqrt{\alpha}}\int_{C_1 \alpha_{n}}^{C_{2}\sqrt{\alpha}}\sqrt{\log\cN_{[]}\left(u/C_{3},\cP,H\right)}\rd u \\
        &= {J(\sqrt{\alpha}) \over \sqrt{\alpha}} 
        \le {J(\sqrt{\alpha_{n}}) \over \sqrt{\alpha_{n}}} = {1 \over \sqrt{\alpha_{n}}}\int_{C_1 \alpha_{n}}^{C_{2}\sqrt{\alpha_{n}}}\sqrt{\log\cN_{[]}\left(u/C_{3},\cP,H\right)}\rd u \le C_{4}\sqrt{n\alpha_{n}} \le C_{4}\sqrt{n\alpha},
    \end{align*}
where the second inequality is due to that $\alpha \mapsto {J(\sqrt{\alpha})\over\sqrt{\alpha}}$ is nonincreasing.
 This suggests that $\alpha$ also satisfies \eqref{eqn:main-simple-entropy}.
\end{proof}

\section{Proofs of Results in Section~\ref{sec:application}}\label{sec:app-application-proofs}
\subsection{Proof of Theorem~\ref{thm:mnl-regret}}
\begin{proof}[Proof of Theorem~\ref{thm:mnl-regret}]
The MNL model has parameters $\theta \in \bbR^d$ and $x_i \in \bbR^d$ for $i \in [N]$ with the following conditional probability function 
$$p(a\mid s;\theta)=\frac{\exp(\theta^\top x_a)\,\mathbf{1}_{\{a\in s\}}+\mathbf{1}_{\{a=0\}}}{1+\sum_{j\in s}\exp(\theta^\top x_j)}.$$
We first establish that $p(a|s;\theta)$ is a Lipchitz continuous function of $\theta$ for all $s$ and $a \in s\cup\{0\}$. To this end,
\begin{equation}\label{eqn:grad-mnl}
\begin{aligned}
 \nabla_\theta p(a|s;\theta) &= \frac{\exp(\theta^{\top}x_a)x_a \cdot(1+\sum_{j \in s}\exp(\theta^{\top}x_j))- \exp(\theta^{\top}x_a)(\sum_{j \in s}\exp(\theta^{\top}x_j)x_j)}{\left(1+\sum_{j \in s}\exp(\theta^{\top}x_j)\right)^2} \\
& = \frac{\exp(\theta^{\top}x_a)}{1+\sum_{j \in s}\exp(\theta^{\top}x_j)}\cdot\frac{x_a+\sum_{j \in s}\exp(\theta^{\top}x_j)x_a-\sum_{j \in s}\exp(\theta^{\top}x_j)x_j}{1+\sum_{j \in s}\exp(\theta^{\top}x_j)} \\
& = p(a|s;\theta)\cdot\left(\frac{x_a}{1+\sum_{j \in s}\exp(\theta^{\top}x_j)}+ \sum_{j\in s}p(j|s;\theta)x_a + \sum_{j\in s}p(j|s;\theta)x_j\right).   
\end{aligned}
\end{equation}
With this we have $||\nabla_\theta \sqrt{p(a|s;\theta)}||_2   \le \frac{1}{2}\sqrt{p(a|s;\theta)}\left(||x_a||_2 + \sum_{j\in s}p(j|s;\theta)||x_a||_2 + \sum_{j\in s}p(j|s;\theta)||x_j||_2\right) \le \frac{3}{2}\sqrt{p(a|s;\theta)}C_x$.
Hence, we have proved that the function $\sqrt{p(a|s;\theta)}$ is $3C_x/2$-Lipschitz continuous with respect to $\theta$. Then for any $\theta_1, \theta_2$ and fixed $s$, their squared Hellinger distance can then be bounded as 
\begin{equation}\label{eqn:mnl-lipschitz}
\begin{aligned}
      h^2(p(\cdot|s;\theta_1), p(\cdot|s;\theta_2)) &= \frac{1}{2}\sum_{a\in s}\left(\sqrt{p(a|s;\theta_1)}-\sqrt{p(a|s;\theta_2)}\right)^2 \\
    & \le \frac{1}{2}\sum_{a\in s}\left(\frac{3C_x\sqrt{p(a|s;\theta)}}{2}||\theta_1-\theta_2||_2\right)^2 \le \Delta\cdot||\theta_1-\theta_2||^2_2,
\end{aligned}
\end{equation}
where $\Delta = 9C^2_x/8$. This implies that $H^2(\theta_1,\theta_2) \le \Delta||\theta_1 -\theta_2||_2^2$.
Then if $\theta \in \Theta$ where $\Theta$ is bounded, we have $\cN_{[]}(u, \cP, H) \le \cN_{[]}(u/\Delta,\Theta,||\cdot||) \simeq \left(\frac{\Delta}{u}\right)^d$.
With this, the left hand side of the entropy integral in~\eqref{eqn:main-simple-entropy} can be upper bounded by
\begin{align*} \int_{C_1\alpha_n}^{\sqrt{\alpha_n}}\sqrt{\log\cN_{[]}\left({u \over 2\sqrt{2}e},\cP,H\right)}\rd u 
&\le \sqrt{d}\int_{0}^{\sqrt{\alpha_n}}\sqrt{\log\left(\frac{\Delta\cdot2\sqrt{2}e}{u}\right)}\rd u \lesssim \sqrt{\alpha_n d\log\frac{\Delta\cdot2\sqrt{2}e}{\sigma_n}},
\end{align*}
where the last inequality follows from Lemma~\ref{lem:int} provided that $\sqrt{\alpha_n} \le \Delta \cdot 2\sqrt{2}$.
Hence, to solve~\eqref{eqn:main-simple-entropy}, it is sufficient to solve the equation $\sqrt{\alpha_n d\log\frac{\Delta\cdot2\sqrt{2}e}{\sqrt{\alpha_n}}} \lesssim C_2\sqrt{n}\alpha_n$, 
which is equivalent to $\alpha_n \gtrsim \frac{d}{n}\log\frac{C_x}{\alpha_n}$.
Choosing $\alpha_n = \frac{d}{n}\log\frac{n C_x}{d}$, and under the assumption that $n \gtrsim d\log(n/d)$, the right hand side of the above inequality reads as 
\begin{equation}\label{eqn:choose-sigma_n}
   \frac{d}{n}\log\left(\frac{n}{d}\frac{C_x}{\log(n/d)}\right) \lesssim \frac{d}{n}\log\frac{n C_x}{d} = \alpha_n. 
\end{equation}
Thus, the above choice of $\alpha_n$ satisfies~\eqref{eqn:main-simple-entropy}. Moreover, $\sqrt{\alpha_n} \lesssim C_{x}^{2}$ is satisfied provided that $n \gtrsim d\log(n/d)$.
Then applying Corollary~\ref{corol:pasta-regret} concludes the proof. 
\end{proof}

\subsection{Proof of Theorem~\ref{thm:lcl-regret}}
\begin{proof}[Proof of Theorem~\ref{thm:lcl-regret}]
To avoid notation overload, only in this proof we let $g(s,a;\theta_k)$ for $\theta_k \in \bbR^d$ and $k \in [K]$ denote the conditional probability function of the MNL model $g(s,a;\theta_k) = \frac{\exp(\theta_k^{\top}x_a)}{1+\sum_{j \in s}\exp(\theta_k^{\top}x_j)}$,
and let $p(a|s;\theta)$ be the conditional probability function of the LCL model as defined in~\eqref{eqn:lcl_ass} where $\theta = [\theta_1,\dots,\theta_K;\lambda_1,\dots,\lambda_K]$. Thus we have $p(a|s;\theta) = \sum_{k=1}^K \lambda_k\cdot g(a,s;\theta_k)$.
First note that, under Assumption~\ref{asm:bdd}, $g(a,s;\theta)$ is uniformly lower bounded for all $s$ and $a \in s\cup\{0\}$:
\begin{align}\label{eqn:lb_mnl}
    g(a,s;\theta) = \frac{\exp(\theta^{\top}x_a)}{1+\sum_{j \in s}\exp(\theta^{\top}x_j)} \ge \frac{\exp(\theta^{\top}x_a)}{1+|s|\exp(||\theta||_2 C_x)} \ge \frac{\exp(-C_\theta C_x)}{1+N\exp(C_\theta C_x)} := \Delta > 0
\end{align}
With the same calculations in~\eqref{eqn:grad-mnl}, we have 
\begin{align*}
    &\nabla_{\theta_k}p(a|s;\theta) = \lambda_k \cdot g(a,s;\theta_k)\cdot\left(\frac{x_a}{1+\sum_{j \in s}\exp(\theta_k^{\top}x_j)}+ \sum_{j\in s}g(j,s;\theta_i)x_a + \sum_{j\in s}g(j,s;\theta_k)x_j\right);\\
    &\nabla_{\lambda_k}p(a|s;\theta) = g(a,s;\theta_k).
\end{align*}
Then we have 
\begin{align*}
    &||\nabla_{\theta}\sqrt{p(a|s;\theta)}||_2 = \frac{1}{2}\sqrt{p(a|s;\theta)}||(\nabla_{\theta}p(a|s;\theta))/ p(a|s;\theta)||_2\\
    &\le \frac{1}{2}\sqrt{p(a|s;\theta)}\left(\sum_{i=1}^K ||\nabla_{\theta_i}p(a|s;\theta)/ p(a|s;\theta)||_2 + ||\nabla_{\lambda}p(a|s;\theta)/ p(a|s;\theta) ||_2\right)\\
    &\le \frac{1}{2}\sqrt{p(a|s;\theta)}\left(3KC_x + \frac{\sum_{i=1}^K g(a,s;\theta_i)}{\sum_{i=1}^K \lambda_i\cdot g(a,s;\theta_i)}\right).
\end{align*}
To proceed, with~\eqref{eqn:lb_mnl}, we have $\frac{\sum_{i=1}^K g(a,s;\theta_i)}{\sum_{i=1}^K \lambda_i\cdot g(a,s;\theta_i)} \le \frac{\sum_{i=1}^K g(a,s;\theta_i)}{\Delta\sum_{i=1}^K\lambda_i} \le K/\Delta$. 
Thus, the density function of LCL model is $(3KC_x + K/\Delta)$-Lipschitz continuous. With the same computation in~\eqref{eqn:mnl-lipschitz}, let $C_{LCL} = \frac{K^2(3C_x+1/\Delta)^2}{2}$, it holds that $\cN_{[]}(u, \cP, H) \simeq \left(u/C_{LCL}\right)^{K(d+1)-1}$. Here we have $K(d+1)-1 = Kd + K-1$ as the effective dimension as there are only $K-1$ free parameters for $\lambda \in \Delta^{K-1}$ in the $K-1$ simplex.
With this, the left hand side of the entropy integral in~\eqref{eqn:main-simple-entropy} can be upper bounded by $\int_{C_1\alpha_n}^{\sqrt{\alpha_n}}\sqrt{\log\cN_{[]}\left({u \over 2\sqrt{2}e},\cP,H\right)}\rd u \lesssim \sqrt{K(d+1)-1}\int_{0}^{\sqrt{\alpha_n}}\sqrt{\log\left(\frac{C_{LCL}\cdot2\sqrt{2}e}{u}\right)}\rd u \lesssim \sqrt{\alpha_n(Kd+K-1)\log\frac{C_{LCL}\cdot2\sqrt{2}e}{\sigma_n}}$. 
Hence, to solve~\eqref{eqn:main-simple-entropy}, it is sufficient to solve the equation 
$$\sqrt{\alpha_n(Kd+K-1)\log\frac{C_{LCL}\cdot2\sqrt{2}e}{\sigma_n}} \lesssim \sqrt{n}\alpha_n,$$
which is equivalent to $\alpha_n \gtrsim \frac{(Kd+K-1)}{n}\log\frac{C_{LCL}}{\alpha_n}$. 
Using the same technique in \eqref{eqn:choose-sigma_n}, we have $\alpha_n = \frac{(Kd+K-1)}{n}\log\frac{C_{\cP}n}{(Kd+K-1)}$ satisfies the above inequality. Then applying Corollary~\ref{corol:pasta-regret} concludes the proof. 
\end{proof}

\subsection{Proof of Theorem~\ref{thm:nl-regret}}
\begin{proof}[Proof of Theorem~\ref{thm:nl-regret}]
Let $p(a\in s_j |s;\theta)$ be the conditional choice probability function as defined in~\eqref{eqn:nl_ass} where $\theta = [\tilde{\theta},\lambda]$. Note that $p(a\in s_j |s;\theta)$  can be written as: for $a \in s_j$, 
$$p(a\in s_j |s;\theta) = g_A(a,s_j;\tilde{\theta},\lambda) \cdot g_B(s_j,s;\tilde{\theta},\lambda),$$
where $g_A(a,s_j;\tilde{\theta},\lambda)=\frac{\exp(\tilde{\theta}^{\top}x_a/\lambda_j)}{\sum_{i \in S_j}\exp(\tilde{\theta}^\top x_i/\lambda_j)}$ and $g_B(s_j,s;\tilde{\theta},\lambda) = \frac{\left(\sum_{i \in S_j}\exp(\tilde{\theta}^\top x_i/\lambda_j)\right)^{\lambda_j}}{1+\sum_{l=1}^K\left(\sum_{i \in S_l}\exp(\tilde{\theta}^\top x_i/\lambda_l)\right)^{\lambda_l}}$. 

For $j\in[K]$, let $V_j = \sum_{i \in S_j}\exp(\tilde{\theta}^{\top}x_i/\lambda_j)$ and $\kappa_{i,j}=\exp(\tilde{\theta}^{\top}x_i/\lambda_j)$. 
With straightforward calculation, it can be shown that 
\begin{align*}
    &\nabla_{\tilde{\theta}} g_A(a,s_j;\tilde{\theta},\lambda) = g_A(a,s_j;\tilde{\theta},\lambda)\frac{x_a-\sum_{i\in S_j}\kappa_{i,j}x_i}{\lambda_j v_j};\\
    &\nabla_{\tilde{\theta}} g_B(s_j,s;\tilde{\theta},\lambda) = g_B(s_j,s;\tilde{\theta},\lambda)
\frac{(\sum_{k\in[K]}v_k^{\lambda_k})\frac{\sum_{i \in S_j}\kappa_{i,j}x_i}{v_j} - \sum_{k\in[K]}v_k^{\lambda_k}\frac{\sum_{i \in S_k}\kappa_{i,k}x_i}{v_k}}{\sum_{k\in[K]}v_k^{\lambda_k}}.
\end{align*}
Moreover, with the above inequalities and the given assumption that $1/\lambda_j \le C_\lambda$ for $j \in [K]$, we have $||\nabla_{\tilde{\theta}} g_A(a,s_j;\tilde{\theta},\lambda)|| \le g_A(a,s_j;\tilde{\theta},\lambda)\frac{2C_x}{\lambda_j}$ and $||\nabla_{\tilde{\theta}} g_B(s_j,s;\tilde{\theta},\lambda)|| \le 2 g_B(s_j,s;\tilde{\theta},\lambda)C_x$.
Hence, through triangular inequalities of norms, we have 
\begin{align*}
    &||\nabla_{\tilde{\theta}} p(a\in s_j |s;\theta)|| = ||\nabla_{\tilde{\theta}} g_A(a,s_j;\tilde{\theta},\lambda) g_B(j;\tilde{\theta},\lambda) +g_A(a,s_j;\tilde{\theta},\lambda) \nabla_{\tilde{\theta}} g_B(s_j,s;\tilde{\theta},\lambda)||;\\
    &\le 2g_A(a,s_j;\tilde{\theta},\lambda) g_B(s_j,s;\tilde{\theta},\lambda)\max(2C_x/\lambda_j,\lambda) \le 2p(a\in s_j |s;\theta)C_x\max(2C_\lambda, 1).
\end{align*}

Regarding $\lambda$, it is useful to use the fact that $\nabla_{\lambda_i}\kappa_{a,j} = -\tilde{\theta}^{\top}x_a\cdot\exp(\tilde{\theta}^{\top}x_a/\lambda_j)/\lambda^2_j$
for $i = j, a \in S_j$ and $\nabla_{\lambda_i}\kappa_{a,j}=0$ otherwise. With these, we have 
\begin{align*}
    &\nabla_{\lambda_i} g_A(a,s_j;\tilde{\theta},\lambda)=g_A(a,s_j;\tilde{\theta},\lambda)\,\frac{\sum_{t\in S_j}\kappa_{t,j}\,\tilde{\theta}^{\top}x_t-\tilde{\theta}^{\top}x_a}{\lambda_j^{2}v_j}\,\mathbf{1}_{\{i=j\}}; \\
    &\nabla_{\lambda_i} g_B(s_j,s;\tilde{\theta},\lambda)=g_B(s_j,s;\tilde{\theta},\lambda)\,\frac{-\nabla_{\lambda_i}v_i^{\lambda_i}}{\sum_{k\in[K]}v_k^{\lambda_k}}\,\mathbf{1}_{\{i\neq j\}}+\frac{\sum_{k\in[K]\setminus\{j\}}v_k^{\lambda_k}}{\sum_{k\in[K]}v_k^{\lambda_k}}\frac{-\nabla_{\lambda_j}v_j^{\lambda_j}}{\sum_{k\in[K]}v_k^{\lambda_k}}\,\mathbf{1}_{\{i=j\}}.
\end{align*}

To continue, with straightforward calculation, for $j \in [K]$, $\nabla_{\lambda_j}v_j^{\lambda_j} = v_j^{\lambda_j}\left(\frac{\sum_{i \in S_j} -\tilde{\theta}^{\top}x_i\kappa_{i,j}}{\lambda_j v_j} + \ln v_j\right)$.
Then we have $\sum_{i \in [K]}||\nabla_{\lambda_i} g_A(a,s_j;\tilde{\theta},\lambda)|| \le g_A(a,s_j;\tilde{\theta},\lambda)\frac{2C_xC_{\theta}}{\lambda^2_j}$ where $||\nabla_{\lambda_i} g_B(s_j,s;\tilde{\theta},\lambda)|| \le  g_B(s_j,s;\tilde{\theta},\lambda)g_B(s_i,s;\tilde{\theta},\lambda)\frac{C_xC_\theta}{\lambda_j}$ if $i \neq j$ and $||\nabla_{\lambda_i} g_B(s_j,s;\tilde{\theta},\lambda)|| \le  g_B(s_j,s;\tilde{\theta},\lambda)\frac{C_xC_{\theta}}{\lambda_j}$ if $i = j$.
Hence, $||\nabla_{\lambda_i} g_B(s_j,s;\tilde{\theta},\lambda)|| \le  g_B(s_j,s;\tilde{\theta},\lambda)C_xC_\theta C_\lambda$ for all $i,j \in [K]$. With these, we have 
\begin{align*}
    &||\nabla_{\lambda} p(a\in s_j |s;\theta)||_2 = ||\nabla_\lambda g_A(a,s_j;\tilde{\theta},\lambda) g_B(s_j,s;\tilde{\theta},\lambda) +g_A(a,s_j;\tilde{\theta},\lambda) \nabla_\lambda g_B(s_j,s;\tilde{\theta},\lambda)||\\ 
    &\le 2p(a\in s_j |s;\theta)C_xC_{\theta}\max(2C^2_\lambda, C_\lambda).
\end{align*}
The above results lead to 
\begin{align*}
    &||\nabla_{\tilde{\theta},\lambda} \sqrt{p(a\in s_j |s;\theta)}|| \le\frac{\sqrt{p(a\in s_j |s;\theta)}}{2} \frac{(||\nabla_{\theta} \sqrt{p(a\in s_j |s;\theta)}||+||\nabla_{\lambda} \sqrt{p(a\in s_j |s;\theta)}||)}{p(a\in s_j |s;\theta)}\\ 
    &\le 2\sqrt{p(a\in s_j |s;\theta)}C_x(1+C_\theta C_\lambda)\max(2C_\lambda,1).
\end{align*}
Let \(C_{NL}=(2C_x(1+C_\theta C_\lambda)\max(2C_\lambda,1))^2/2\) and
we have for any $s=\{s_j\}^K_{j=1}$, and any pairs of $\theta_1 = (\tilde{\theta}_1,\lambda_1), \theta_2=(\tilde{\theta}_2,\lambda_2)$ that $h^2(p(\cdot |s;\theta_1),p(\cdot |s;\theta_2)) = \frac{1}{2}\sum_{a\in s_j, s_j \in s}\left(\sqrt{p(a\in s_j |s;\theta_1)}-\sqrt{p(a\in s_j |s;\theta_2)}\right)^2 \le C_{NL}||\theta_1-\theta_2||^2_2$,
where $\theta_1, \theta_2 \in \bbR^{K+d}$. This implies that $H^2(\theta_1,\theta_2) \le C_{NL}||\theta_1-\theta_2||^2_2$.
Then with the same steps in the proofs for the MNL and LCL models, we can choose $\alpha_n = \frac{K+d}{n}\log\frac{C_{\cP}n}{K+1}$ so that it satisfies the entropy condition~\eqref{eqn:main-simple-entropy}. Applying Corollary~\ref{corol:pasta-regret} concludes the proof. 
\end{proof}

\subsection{Proof of Theorem~\ref{thm:minimax_d}}
\begin{proof}[Proof of Theorem~\ref{thm:minimax_d}]\label{sec:proof-minimax-theorem}
    We present the proof for the case where $d \leq N$. The proof for the case where $d > N$ is identical. Let the four-element tuple $[\mathbf{x}=\{x_j\}_{j \in [N]}, r, \theta_0, \pi_S]$ define an instance of the offline assortment optimization problem under the MNL model where $\theta_0$ is the true model parameter. Moreover, let $I=[\mathbf{x},r]$ and $P=[\theta_0,\pi_S]$. Let $\cI \ni I$ and $\cP \ni P$ be the set of possible offline assortment optimization problems under consideration.
    The goal is to lower bound $\mathcal{R} = \inf_{\widehat s} \sup_{I\in \cI,P \in \cP}\bbE\left[|\cV(s^\star)-\cV(\widehat s)|\right]$,
    where $s^\star = s^\star(I,P)$ is the optimal assortment for instance $[I,P]$, and $\cV(s)=\cV(s;I,P)=\sum_{j \in s}r(s,j)p(j|s;\theta_0)$ is the true expected revenue function of the instance $[I,P]$. 
    To establish a lower bound for $\mathcal{R}$, we first consider $d$ instances of $I$. For the case where $d > N$, we consider $N$ instances. All the other proof steps are identical.  Specifically, let the $i$-th instances $I_i$ be defined as follows: (i) $r(s,i)=r_i=1$ and $r(s,j)=r_j=0$ for $j\neq i$; (ii) For $i \in \{1,\dots, d\}$, we set $x_i \in \bbR^d$ to have all zero elements except that the $i$-th element is set to $1/\varepsilon$ where $\varepsilon \in \bbR$ is to be chosen later , e.g., $x_1=[1/\varepsilon, 0, \dots, 0]$, $x_2=[0,1/\varepsilon, 0, \dots, 0]$, etc. For $i \in \{d+1,\dots,N\}$, $x_i$ can be any vector in $\bbR^d$ as long as $x_i$ is distinct to $x_j$ for $j < i$.

Clearly, the optimal assortment for $I_i$ is always the singleton set $\{i\}$, regardless of the value of $\theta_0$. We also have \(\cR \ge \cR_1 = \inf_{\widehat s} \sup_{P \in \cP} \bbE\left[\sum_{i=1}^d\frac{1}{d}|\cV(s^\star;I_i,P)-\cV(\widehat s;I_i,P)|\right]\)
because the regret average over $d$ instances is always not greater than the worst instance.  
Next we continue to lower bound $\cR_1$ by specifying a $\pi_S$ and selecting a subset of MNL models $\widetilde{\Theta} \subset \Theta$ for the worst case regret. Consider the assortment set $\bbS:=\{\{1\},\dots,\{d\}\}$. We define $\pi_S$ as $\pi_S(s)=1/d$ if $s \in \bbS$ otherwise $\pi_S(s)=0$. In other words, the probability mass is uniformly distributed on all assortments in $\bbS$. Let $\widetilde{\Theta} := \{\delta\cdot v: v \in \{-1,1\}^d \}$ where $\delta \in \bbR$ is to be chosen later. Then \(\cR \ge \cR_1 \ge \cR_2 = \inf_{\widehat s} \sup_{\theta \in \widetilde{\Theta}} \bbE\left[\sum_{i=1}^d\frac{1}{d}|\cV(s^\star;\theta)-\cV(\widehat s;\theta)|\right]\).

In the subsequent proof, for $v\in\{-1,1\}^d$, we use $P_{v}$ to denote the distribution $P_{v}(s,a)=\pi_S(s)p(a|s;\delta\cdot v)$ and $\theta_v$ for $\theta_v = \delta\cdot v$. We write $v \sim_i v'$ if $v$ and $v'$ differ in only the $i$-th coordinate. we use $v^+ (\theta^+)$ and $v^-(\theta^-)$ to denote a pair of $v$ where only $1$ element is different. Note that if $\delta$ is small enough, then $|\theta^T x_i| \leq 1$ for all $i$. To satisfy this requirement, one sufficient condition is $|\theta^T x_i| \leq ||\theta||\cdot ||x_i|| \leq \frac{\delta}{\varepsilon} \leq 1 => \delta \leq \varepsilon$.

For any $v$ and $v'$ such that $v \sim_i v'$,  we have 
\begin{equation}\label{eqn:lb-by-e}
\begin{aligned}
    &|\cV(s^\star;I_i,P_v)-\cV(s^\star;I_i,P_{v'})| = \frac{|\exp(x_i^T\delta v)-\exp(x_i^T\delta v')|}{\big(1+\exp(x_i^T\delta v)\big)\big(1+\exp(x_i^T\delta v')\big)}\\
    &\ge  \frac{\exp(\delta/\varepsilon)-\exp(-\delta/\varepsilon)}{2+\exp(\delta/\varepsilon)+\exp(-\delta/\varepsilon)} \geq \frac{\exp(\delta/\varepsilon)-1}{3+e} \geq \frac{\delta}{(3+e)\varepsilon},
\end{aligned} 
\end{equation}
where for the second to last inequality in~\eqref{eqn:lb-by-e} we use the fact that $|\theta^T x_i| \leq 1$ hence $\exp(-\delta/\varepsilon) \leq 1$ and $\exp(\delta/\varepsilon) \leq e$; for the last inequality in~\eqref{eqn:lb-by-e} we use the fact that $\exp(x)-1 \geq x$ for $0\leq x \leq 1$. 

Thus, we have proved that, for any $v$ and $v'$ such that $v \sim_i v'$, $|\cV(s^\star;I_i,P_v)-\cV(s^\star;I_i,P_{v'})| \ge \delta/(3+e)\varepsilon$.  
Let $n$ denote the number of samples and $P^n$ denote the product distribution of $P$. Then, from Assouad's Lemma~\citep{yu1997assouad}, we have
\begin{equation}\label{eqn:r2-lb}
\cR_2 \ge \frac{\delta}{(3+e)\varepsilon} \min_{(v,v')}\{||P^n_v\wedge P^n_{v'}||\} ,
\end{equation}
where $v,v' \in \{-1,1\}^d$ and only differ in 1 coordinate. We further have \(\min_{v,v'}\{\|P^n_v\wedge P^n_{v'}\|\} \ge \min_{v,v'}\{\frac{1}{2}\exp(-\KL(P^n_v \| P^n_{v'})) =\frac{1}{2}\min_{v,v'}\{\exp(-n\bbE_{s\sim \pi_s}[\KL(p(\cdot|s,\theta_v)\|p(\cdot|s,\theta_{v'}))])\}\}\) where the first inequality is due to fact that $\|p\wedge p'\| \geq 1/2\exp(-\KL(p||p'))$ and the equality follows from the chain rule of KL divergence. With Lemma~\ref{lemma:KL_singleton}, for any $v \sim_j v'$, when $s=\{j\}$, we have \(\KL(p(\cdot|s;\theta_v)||p(\cdot|s;\theta_{v'})) \leq \frac{4\delta^2}{\varepsilon^2}\); 
when $s=\{i\}$ and $i\neq j$, we have \(\KL(p(\cdot|s;\theta_v)||p(\cdot|s;\theta_{v'})) \leq (0\cdot\delta-0\cdot\delta)^2 = 0\).
Then for any pair of $\theta_v$ and $\theta_{v'}$ such that $v$ and $v'$ only differ in one coordinate, we have \(\bbE_{s\sim \pi_s}[\KL(p(\cdot|s,\theta_v)||p(\cdot|s,\theta_{v'}))]\leq \frac{1}{d}\frac{4\delta^2}{\varepsilon^2}\).
With these results and~\eqref{eqn:r2-lb}, we have $\cR_2 \ge \frac{\delta}{44\varepsilon}\exp(-\frac{4n\delta^2}{d\varepsilon^2})$.
Choose $\varepsilon = \sqrt{n}$, $\delta=\sqrt{d}$, we finally have \(\cR \ge \cR_2 \ge c\sqrt{d/n}\)
where $c$ is an absolute constant. Note that we need $\delta \leq \varepsilon$ which is equivalent to $n\geq d$. 
In the case where $d > N$, with the same proof, we can have $\mathcal{R}(\cP, \mathcal{I})\geq c\cdot \sqrt{{N}/{n}}$. Hence, we have established that $\cR\geq c\cdot \sqrt{\min(N,d)/{n}}$, when $n \geq \min(N,d)$. This concludes the proof.
\end{proof}

\section{Technical Lemmas}
\begin{lemma}
    \label{lem:int}
    Suppose $A > 0$ and $C > 1$. Then for $0 \le a \le e^{-{C \over 2(C-1)}}A$, we have $\int_{0}^{a}\sqrt{\log{A \over u}}\rd u \le Ca\sqrt{\log{A \over a}}$.
    As a special case, if $C = 2$, then for $0 \le a \le e^{-1}A$, we have $\int_{0}^{a}\sqrt{\log{A \over u}}\rd u \le 2a\sqrt{\log{A \over a}}$.
\end{lemma}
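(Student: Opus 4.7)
The plan is to reduce the integral to a Gaussian-type tail integral by a substitution that linearizes $\sqrt{\log(A/u)}$, apply one integration by parts, and then invoke the standard Mills-ratio bound. Setting $u = A e^{-s^2}$ gives $\sqrt{\log(A/u)} = s$ and $du = -2sAe^{-s^2}\,ds$; as $u$ ranges over $(0,a]$, $s$ ranges over $[\tau,\infty)$ with $\tau := \sqrt{\log(A/a)}$. Therefore
\[
\int_{0}^{a}\sqrt{\log(A/u)}\,du \;=\; 2A\int_{\tau}^{\infty} s^{2} e^{-s^{2}}\,ds.
\]
Since the hypothesis $0\le a \le e^{-C/(2(C-1))}A$ with $C>1$ forces $a<A$, we have $\tau>0$, so all subsequent manipulations are valid.

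Next, I would use integration by parts with $u_1 = s$, $dv_1 = s\,e^{-s^2}\,ds$ (and hence $v_1 = -\tfrac{1}{2}e^{-s^2}$) to obtain
\[
\int_{\tau}^{\infty} s^{2} e^{-s^{2}}\,ds \;=\; \frac{\tau}{2}e^{-\tau^{2}} \;+\; \frac{1}{2}\int_{\tau}^{\infty} e^{-s^{2}}\,ds.
\]
For the residual Gaussian tail, I would use the elementary Mills-ratio bound
$\int_{\tau}^{\infty}e^{-s^2}ds \le \int_{\tau}^{\infty}\tfrac{s}{\tau}e^{-s^2}ds = \tfrac{e^{-\tau^2}}{2\tau}$, valid since $s/\tau \ge 1$ on $[\tau,\infty)$. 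Combining and using $a = Ae^{-\tau^{2}}$ yields the clean closed-form bound
\[
\int_{0}^{a}\sqrt{\log(A/u)}\,du \;\le\; a\,\tau\left(1 + \frac{1}{2\tau^{2}}\right) \;=\; a\sqrt{\log(A/a)}\left(1 + \frac{1}{2\log(A/a)}\right).
\]

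It remains to absorb the correction factor into the constant $C$. The condition $a \le e^{-C/(2(C-1))}A$ is equivalent to $\log(A/a) \ge C/(2(C-1))$, which implies $\frac{1}{2\log(A/a)} \le \frac{C-1}{C} \le C-1$, so $1 + \frac{1}{2\log(A/a)} \le C$. The specialization $C=2$ requires only $\log(A/a) \ge 1$, i.e. $a \le e^{-1}A$, giving the factor $2$.

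I do not expect any substantive obstacle: the only delicate point is ensuring that the tail bound on $\int_{\tau}^{\infty}e^{-s^2}\,ds$ is sharp enough, and the $s/\tau$-trick suffices. An alternative route via $v = A/u$ and $w = \log v$ produces exactly the same integral $A\int_{\log(A/a)}^{\infty}\sqrt{w}\,e^{-w}\,dw$ and the same bound after one integration by parts; I would mention this only if a shorter write-up is preferred.
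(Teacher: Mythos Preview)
Your proof is correct and takes a genuinely different route from the paper's. The paper argues by monotonicity: it sets $f(a) = Ca\sqrt{\log(A/a)} - \int_0^a\sqrt{\log(A/u)}\,\rd u$, checks $f(0)=0$, and computes $f'(a) = (C-1)\sqrt{\log(A/a)} - \tfrac{C}{2\sqrt{\log(A/a)}}$, which is nonnegative precisely when $\log(A/a)\ge \tfrac{C}{2(C-1)}$. Your approach instead evaluates the integral via the substitution $u = Ae^{-s^2}$, reducing to a Gaussian second-moment tail, and then uses integration by parts plus the elementary Mills-ratio bound to obtain the explicit inequality $\int_0^a\sqrt{\log(A/u)}\,\rd u \le a\sqrt{\log(A/a)}\bigl(1+\tfrac{1}{2\log(A/a)}\bigr)$. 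This closed-form bound is strictly sharper than what the lemma asserts (your multiplicative factor is $\tfrac{2C-1}{C}\le C$ at the boundary), and it makes transparent why the threshold on $a$ arises. The paper's derivative argument is slightly shorter and avoids any special-function machinery, but yours yields more information and would be preferable if one later needed the explicit $1+\tfrac{1}{2\log(A/a)}$ constant. One trivial edge case: at $a=0$ your substitution gives $\tau=\infty$, so you should note separately that both sides vanish there.
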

\begin{proof}[Proof of Lemma \ref{lem:int}]
    Define 
$$f(a) := \begin{cases}
    Ca\sqrt{\log{A \over a}} - \int_{0}^{a}\sqrt{\log{A \over u}}\rd u, &a > 0;\\
    0; &a = 0.
\end{cases}$$
Then $f$ is continuous at $0$. Moreover, for $a > 0$, we have 
$$f'(a) = (C-1)\sqrt{\log{A \over a}} - {C \over 2\sqrt{\log{A/a}}},$$ 
which is nonnegative if and only if 
$$\log(A/a) \ge {C \over 2(C-1)} \Leftrightarrow a \le e^{-{C \over 2(C-1)}}.$$ 
As $a \to 0^+$, we further have 
\begin{align*}
    &{f(a) \over a} = C\sqrt{\log{A \over a}} - {1 \over a}\int_{0}^a\sqrt{\log{A \over u}}\rd u = (C-1)\sqrt{\log{A \over a}} - {1 \over 2a}\int_{0}^a{1 \over \sqrt{\log{A \over u}}}\rd u\\
    &\ge (C-1)\sqrt{\log{A \over a}} - {1 \over 2\sqrt{\log{A/a}}} \ge (C-1)\sqrt{\log{A \over a}} - {1 \over 2\sqrt{\log{A \over a}}}.
\end{align*}
This implies that $\liminf_{a \to 0^+}{f(a) \over a} \ge +\infty$. Therefore, for any $0 \le a \le e^{-{C \over 2(C-1)}}A$, we have $f(a) \ge  0$, which concludes the proof.
\end{proof}

\begin{lemma}\label{lemma:KL_singleton}
Let $p(a|s;\theta)$ be an MNL model as defined in~\eqref{eqn:mnl_ass}. For any singleton assortment $s=\{i\}$, it holds that \(\KL(p(\cdot|s;\theta_v)||p(\cdot|s;\theta_{v'})) \leq (x_i^T\theta_v - x_i^T\theta_{v'})^2\) where $v,v'$ are defined in Appendix~\ref{sec:proof-minimax-theorem}.
\end{lemma}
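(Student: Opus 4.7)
The plan is to recognize the singleton-assortment MNL as an exponential family (specifically a Bernoulli model with natural parameter $u = x_i^\intercal \theta$) and then invoke the standard identity that the KL divergence between two members of such a family equals the Bregman divergence of the log-partition function.

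Concretely, I would first write $p(i|\{i\};\theta) = \sigma(x_i^\intercal \theta)$ and $p(0|\{i\};\theta) = 1 - \sigma(x_i^\intercal \theta)$, where $\sigma(u) = e^u/(1+e^u)$, and set $u = x_i^\intercal \theta_v$, $u' = x_i^\intercal \theta_{v'}$. Define the log-partition function $A(u) = \log(1+e^u)$, for which $A'(u) = \sigma(u)$ and $A''(u) = \sigma(u)(1-\sigma(u))$. A direct expansion of $\KL(p(\cdot|\{i\};\theta_v) \,\|\, p(\cdot|\{i\};\theta_{v'}))$ along the two atoms $\{i,0\}$ yields, after cancellation,
\[
\KL(p(\cdot|\{i\};\theta_v) \,\|\, p(\cdot|\{i\};\theta_{v'})) \;=\; A(u') - A(u) - A'(u)(u' - u),
\]
i.e., the Bregman divergence of $A$ at $(u',u)$.

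Next I would apply the second-order Taylor expansion of $A$ with Lagrange remainder: there exists $\xi$ between $u$ and $u'$ such that $A(u') = A(u) + A'(u)(u'-u) + \tfrac{1}{2}A''(\xi)(u'-u)^2$. Substituting this into the display above gives $\KL = \tfrac{1}{2}A''(\xi)(u-u')^2$. Since $A''(\xi) = \sigma(\xi)(1-\sigma(\xi)) \le 1/4$ uniformly in $\xi \in \bbR$, we get the sharp bound $\KL \le \tfrac{1}{8}(u-u')^2$, which is strictly stronger than the claim $\KL \le (x_i^\intercal \theta_v - x_i^\intercal \theta_{v'})^2$.

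There is essentially no technical obstacle here; the only thing to be careful about is not over-claiming constants, since the lemma statement only requires the weaker bound with constant $1$ (which is what the minimax proof actually uses when it writes $4\delta^2/\varepsilon^2$ for the case $v \sim_j v'$, noting that $x_j^\intercal \theta_v - x_j^\intercal \theta_{v'} = 2\delta/\varepsilon$). If one preferred to avoid invoking exponential-family machinery, an alternative route is to bound each of the two terms in $\KL$ by $|u-u'|$ using $|\log \sigma(u) - \log \sigma(u')| \le |u-u'|$ and $|\log(1-\sigma(u)) - \log(1-\sigma(u'))| \le |u-u'|$, which follow because the respective derivatives $1-\sigma$ and $-\sigma$ are bounded in absolute value by $1$; a Taylor expansion around $u = u'$ then gives the quadratic bound directly.
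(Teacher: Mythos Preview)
Your proposal is correct and takes a genuinely different route from the paper. The paper proceeds by elementary algebra: it (implicitly) uses the inequality $\KL(p_v\|p_{v'}) \le (p_v-p_{v'})\bigl(x_i^\intercal\theta_v - x_i^\intercal\theta_{v'}\bigr)$ (this is the symmetrized Bregman identity, though the paper writes it as an equality), then bounds $p_v - p_{v'}$ via the explicit sigmoid formula and finishes with $1-e^{-x}\le x$. Your argument instead identifies the singleton MNL as a Bernoulli exponential family, writes the KL as the Bregman divergence of the log-partition $A(u)=\log(1+e^u)$, and applies Taylor's theorem with the uniform bound $A''\le 1/4$. Your route is more conceptual and yields the sharper constant $1/8$ in place of $1$; the paper's route is more hands-on but requires no exponential-family vocabulary. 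Either suffices for the downstream minimax argument, which only uses the bound with constant $1$. One small caveat: the ``alternative route'' you sketch at the end (bounding each log term by $|u-u'|$) gives a linear bound, not a quadratic one, so it would need an additional step to close; your primary Bregman-plus-Taylor argument, however, is complete.
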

\begin{proof}[Proof of Lemma~\ref{lemma:KL_singleton}]
For any $v,v' \in \{1,-1\}^d$ that differ only in 1 coordinate, let $p_v = \frac{1}{1+\exp(-x_i^T\theta_v)}$ and $p_{v'} = \frac{1}{1+\exp(-x_i^T\theta_{v'})}$ where $\theta_v = \delta\cdot v$ for some $\delta > 0$. Then we have 
\[\KL(p(\cdot|s,\theta_v)\|p(\cdot|s,\theta_{v'}))= (p_v-p_{v'})\log(\frac{p_v}{1-p_v}\frac{1-p_{v'}}{p_{v'}}).\]

With the fact that $p_v/(1-p_v)=\exp(x_i^T\theta_v)$ and $(1-p_{v'})/p_{v'}=\exp(-x_i^T\theta_{v'})$, we have $$\KL(p(\cdot|s,\theta_v)||p(\cdot|s,\theta_{v'})) \le \left(\frac{\exp(-x_i^T\theta_{v'})-\exp(-x_i^T\theta_v)}{\exp(-x_i^T\theta_{v'})}\right) (x_i^T\theta_v-x_i^T\theta_{v'}).$$

To continue, assume without loss of generality assume that $x_i^T\theta_v \geq x_i^T\theta_{v'}$, then 
\begin{align*}
    &\KL(p(\cdot|s,\theta_v)||p(\cdot|s,\theta_{v'})) \leq \frac{\exp(-x_i^T\theta_{v'})-\exp(-x_i^T\theta_v)}{\exp(-x_i^T\theta_{v'})}(x_i^T\theta_v-x_i^T\theta_{v'})\\ 
    &= (1-\exp(-x_i^T\theta_v + x_i^T\theta_{v'}))(x_i^T\theta_v-x_i^T\theta_{v'})\\
    &\leq (1 - (1+(-x_i^T\theta_v + x_i^T\theta_{v'})))(x_i^T\theta_v-x_i^T\theta_{v'}) = (x_i^T\theta_v-x_i^T\theta_{v'})^2,
\end{align*}
where in the last inequality we use the fact that $\exp(x) \geq 1+x$.  
\end{proof}

\begin{lemma}\label{lemma:l1_distance_ub}
Let $C_{s^\star} := 1/\pi_S(s^\star)$ and $r_{s^{\star}} := \max_{j \in s^\star}r(s^\star,j)$, then the following inequality holds: for any $p_1, p_2 \in \cP$, $\left|\cV(s^\star;p_1)-\cV(s^\star;p_2)\right|\le r_{s^\star}\sqrt{C_{s^\star}\bbE_S[\|p_1(\cdot|S)-p_2(\cdot|S)\|^2_1]}$. 
\end{lemma}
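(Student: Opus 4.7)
The plan is to bound the value-function difference pointwise at $s^\star$ by an $\ell_1$ distance between the two conditional distributions, then pass from the pointwise quantity to the $\pi_S$-averaged quantity that appears on the right-hand side by exploiting $\pi_S(s^\star) > 0$.

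First, I would write out the value functions directly from their definition,
\[
\cV(s^\star;p_1) - \cV(s^\star;p_2) = \sum_{a \in s^\star}\bigl(p_1(a|s^\star) - p_2(a|s^\star)\bigr)\,r(s^\star,a),
\]
and bound the absolute value by the trivial estimate $|r(s^\star,a)| \le r_{s^\star}$ followed by the triangle inequality. This immediately yields
\[
\bigl|\cV(s^\star;p_1) - \cV(s^\star;p_2)\bigr| \;\le\; r_{s^\star} \sum_{a \in s^\star}\bigl|p_1(a|s^\star) - p_2(a|s^\star)\bigr| \;\le\; r_{s^\star}\,\|p_1(\cdot|s^\star) - p_2(\cdot|s^\star)\|_1,
\]
where the last step uses that the $\ell_1$ norm includes the no-purchase coordinate $a = 0$ as an additional nonnegative term.

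Second, I would relate the pointwise $\ell_1$ distance at $s^\star$ to its expectation over $S \sim \pi_S$. Since $\bbE_S\bigl[\|p_1(\cdot|S) - p_2(\cdot|S)\|_1^2\bigr] = \sum_{s}\pi_S(s)\,\|p_1(\cdot|s) - p_2(\cdot|s)\|_1^2$ is a sum of nonnegative terms, simply isolating the $s = s^\star$ summand gives
\[
\pi_S(s^\star)\,\|p_1(\cdot|s^\star) - p_2(\cdot|s^\star)\|_1^2 \;\le\; \bbE_S\bigl[\|p_1(\cdot|S) - p_2(\cdot|S)\|_1^2\bigr].
\]
Dividing by $\pi_S(s^\star) > 0$, taking square roots, and using the definition $C_{s^\star} = 1/\pi_S(s^\star)$ produces the desired factor $\sqrt{C_{s^\star}\,\bbE_S[\|p_1(\cdot|S) - p_2(\cdot|S)\|_1^2]}$.

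Combining the two displays yields the claim. There is no serious obstacle here: the only point to watch is that the $\ell_1$ norm in the statement is taken over the full support of the conditional choice distribution (including the outside option $a = 0$), so the bound of the first step goes through with an inequality rather than equality; this is exactly what one needs, and the subsequent expectation step is a one-line positivity argument.
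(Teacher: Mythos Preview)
Your proof is correct and follows essentially the same approach as the paper: bound the value difference at $s^\star$ by $r_{s^\star}$ times the $\ell_1$ distance, then relate this pointwise quantity to the $\pi_S$-expectation. The paper dresses the second step up as an importance-weighting identity followed by Cauchy--Schwarz with $f(S)=\mathbb{I}(S=s^\star)/\pi_S(S)$, but since $f$ is supported on a single point this collapses to exactly your one-line positivity argument; your version is, if anything, cleaner.
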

\begin{proof}[Proof of Lemma~\ref{lemma:l1_distance_ub}] With change of measure and by definition of $\cV(s;p)$, we have $\cV(s^\star;p_1)-\cV(s^\star;p_2) = \bbE_S\left[f(S)g(S,p_1,p_2)\right]$,
where $f(S)=\mathbb{I}(S=s^\star)/\pi_S(S)$ and $g(S,p_1,p_2)=\mathbb{I}(S=s^\star)\sum_{j\in S\cup\{0\}}r(S,j)\left(p_1(j|S)-p_2(j|S)\right)$. Thus, we have 
\begin{equation}\label{eqn:V-abs-ub}
    \begin{aligned}
        &\left|\cV(s^\star;p_1)-\cV(s^\star;p_2)\right| \le r_{s^\star}\bbE_S\left[f(S)\|p_1(\cdot|S)-p_2(\cdot|S)\|_1\right],
    \end{aligned}
\end{equation}
where $\|\cdot\|_1$ is the $L_1$ norm. By H\"older's inequality, we also have $\bbE_S[f(S)\|p_1(\cdot|S)-p_2(\cdot|S)\|_1] \le \sqrt{\bbE_S[f^2(S)]\bbE_S[\|p_1(\cdot|S)-p_2(\cdot|S)\|^2_1]}$.
Moreover, we have 
$$\bbE_S[f^2(S)] = \bbE_S[\mathbb{I}(S=s^\star)/\pi^2_S(S)] = 1/\pi_S(s^\star),$$ 
as $\frac{\mathbb{I}(S=s^\star)}{\pi_S(S)}$ has the value zero everywhere except at $s=s^\star$. 
Combining this with the upper bound for $\left|\cV(s^\star;p_1)-\cV(s^\star;p_2)\right|$ in~\eqref{eqn:V-abs-ub}, we have 
$\left|\cV(s^\star;p_1)-\cV(s^\star;p_2)\right|\le r_{s^\star}\sqrt{C_{s^\star}\bbE_S[\|p_1(\cdot|S)-p_2(\cdot|S)\|^2_1]}$.
\end{proof}

\begin{lemma}\label{lemma:l1_H_ub}
For any $p_1,p_2 \in \cP$, $\mathbb{E}_{S}\left[\left(\sum_{j\in S\cup\{0\}}\big|p_1(j|S)-p_2(j|S)\big|\right)^2\right] \leq 8H^2(p_1, p_2)$.
\end{lemma}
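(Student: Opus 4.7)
The plan is to reduce the claim to the classical pointwise $L^{1}$--Hellinger inequality for probability measures and then take expectation over $S$. Specifically, fix any assortment $s$ and apply Cauchy--Schwarz to the factorization $|p_1(j|s)-p_2(j|s)| = |\sqrt{p_1(j|s)}-\sqrt{p_2(j|s)}|\cdot|\sqrt{p_1(j|s)}+\sqrt{p_2(j|s)}|$ over $j \in s\cup\{0\}$. This yields
\[ \Big(\sum_{j\in s\cup\{0\}}|p_1(j|s)-p_2(j|s)|\Big)^2 \le \Big(\sum_{j}(\sqrt{p_1(j|s)}-\sqrt{p_2(j|s)})^2\Big)\Big(\sum_{j}(\sqrt{p_1(j|s)}+\sqrt{p_2(j|s)})^2\Big). \]

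Next, I would identify the first factor as $2h^2(p_1,p_2,s)$ by the definition of the squared Hellinger distance used in the paper. For the second factor, expanding the square and using that $p_1(\cdot|s),p_2(\cdot|s)$ are probability mass functions gives $\sum_j p_1(j|s)+\sum_j p_2(j|s) + 2\sum_j\sqrt{p_1(j|s)p_2(j|s)}$, where the cross term is bounded by $2$ via Cauchy--Schwarz (Bhattacharyya coefficient $\le 1$). Hence the second factor is at most $4$, giving the pointwise bound $\big(\sum_{j}|p_1(j|s)-p_2(j|s)|\big)^2 \le 8\,h^2(p_1,p_2,s)$.

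Finally, take expectation over $S$ with respect to $\pi_S$ and invoke the definition $H^2(p_1,p_2) = \bbE_S[h^2(p_1,p_2,S)]$ from \eqref{eq:H} to conclude
\[ \bbE_S\Big[\Big(\sum_{j\in S\cup\{0\}}|p_1(j|S)-p_2(j|S)|\Big)^2\Big] \le 8\,\bbE_S[h^2(p_1,p_2,S)] = 8\,H^2(p_1,p_2). \]

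This proof is essentially mechanical: there is no significant obstacle since the pointwise inequality $\|p-q\|_1^2\le 8h^2(p,q)$ is a standard Le Cam-type result, and the extension to the generalized Hellinger distance only requires monotonicity of expectation. The only minor care needed is to verify the Bhattacharyya-coefficient bound $\sum_j\sqrt{p_1(j|s)p_2(j|s)}\le 1$, which is immediate from Cauchy--Schwarz applied to $\sqrt{p_1(j|s)}$ and $\sqrt{p_2(j|s)}$ together with $\sum_j p_i(j|s)=1$.
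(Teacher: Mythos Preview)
Your proof is correct and follows essentially the same approach as the paper: establish the pointwise inequality $\big(\sum_{j\in s\cup\{0\}}|p_1(j|s)-p_2(j|s)|\big)^2 \le 8\,h^2(p_1,p_2,s)$ and then take expectation over $S$. The only difference is that the paper cites the standard total-variation--Hellinger bound $\tfrac{1}{2}\|p_1(\cdot|s)-p_2(\cdot|s)\|_1 \le \sqrt{2}\,h(p_1(\cdot|s),p_2(\cdot|s))$ from \citet{tsybakov_nonparametric}, whereas you derive it explicitly via the factorization and Cauchy--Schwarz.
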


\begin{proof}[Proof of Lemma~\ref{lemma:l1_H_ub}]
We use the fact that the total variation distance is less than a constant multiple of the Hellinger distance~\citep{tsybakov_nonparametric}, i.e., for any $s \in \bbS$, $\frac{1}{2}\sum_{j\in s\cup\{0\}}\big|p_1(j|s)-p_2(j|s)\big| \leq \sqrt{2} h(p_1(\cdot|s),p_2(\cdot|s))$,
where $h$ is the Hellinger distance. This implies that for any $s \in \bbS$, $\left(\sum_{j\in s\cup\{0\}}\big|p_1(j|s)-p_2(j|s)\big|\right)^2 \leq 8 h^2(p_1(\cdot|s),p_2(\cdot|s))$.
Taking expectation with respect to $S$ for both sides of this inequality concludes the proof. 
\end{proof}

\begin{lemma}[Bernstein's Exponential Condition] \label{lem:exp}
		Consider a correctly specified family $ \cP = \{ p_{\theta}(y|x): \theta \in \Theta \}$, and the corresponding family of log-likelihood ratios $\cF := \big\{ \log{p_{\theta^{*}}(Y|X) \over p_{\theta}(Y|X)}: \theta \in \Theta \big\}$.
		Define 
		\[ \kappa(u) := \begin{cases}
			{2\left( e^{|u|/2} - 1 - |u|/2 \right) \over (1-e^{-u/2})^{2}}, & u \neq 0;\\
			1, & u = 0.
		\end{cases} \]
		Assume that for some $\tau > 0$, we have $\log{p_{\theta^{*}}(Y|X) \over p_{\theta}(Y|X)} \le \tau$. Then for any $f_{\theta}\in \cF$, we have $bbE\left( e^{|f_{\theta}|/2} - 1 - {|f_{\theta}| \over 2}  \right) \le \kappa(\tau)H^{2}(\theta,\theta^{*})$. 
        By Taylor's expansion $e^{u} - 1 - u = \sum_{k=2}^{\infty}{u^{k} \over k!}$, it follows that, for any $f_{\theta} \in \cF$ and $k \ge 2$, we have $ \bbE |f_{\theta}|^{k} \le {k! \over 2}2^{k-2} \times 8\kappa(\tau)H^{2}(\theta, \theta^{*})$. In particular, $\bbE(f_{\theta}^{2}) \le 8\kappa(\tau)H^{2}(\theta,\theta^{*})$.
\end{lemma}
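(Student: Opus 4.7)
The plan is to reduce the exponential bound to a pointwise inequality via an identity that ties the Hellinger distance to the quantity $\bbE[e^{-f_\theta/2}]$. First I would establish the two identities
\[
\bbE\bigl[1-e^{-f_\theta/2}\bigr] = H^2(\theta,\theta^*), \qquad \bbE\bigl[(1-e^{-f_\theta/2})^2\bigr] = 2\,H^2(\theta,\theta^*).
\]
Both follow from direct computation: since $f_\theta = \log(p_{\theta^*}/p_\theta)$, we have $e^{-f_\theta/2} = \sqrt{p_\theta/p_{\theta^*}}$, so the conditional expectation under $p_{\theta^*}$ equals $\sum_y \sqrt{p_\theta(y|X)p_{\theta^*}(y|X)} = 1 - h^2(p_\theta(\cdot|X),p_{\theta^*}(\cdot|X))$; taking the outer expectation over $X$ yields the first identity, and the second uses additionally $\bbE[e^{-f_\theta}] = \bbE_X\sum_y p_\theta(y|X) = 1$, so that $\bbE[(1-e^{-f_\theta/2})^2] = 1 - 2(1-H^2) + 1 = 2H^2$.

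Second, I would prove the one-dimensional pointwise bound
\[
\phi(u) := e^{|u|/2}-1-\tfrac{|u|}{2} \;\le\; \frac{\kappa(\tau)}{2}\,\psi(u), \qquad \psi(u) := (1-e^{-u/2})^2,
\]
valid for every $u \le \tau$. Setting $g(u) := \phi(u)/\psi(u)$, a short manipulation shows $g(u) = h(|u|)$ for $u \le 0$ and $g(u) = e^{u}\,h(u)$ for $u \ge 0$, where $h(v) := (e^{v/2}-1-v/2)/(e^{v/2}-1)^2$. Elementary calculus then yields (i) $h$ decreases from $1/2$ at $0^+$ down to $0$ at $+\infty$, so $g \le 1/2$ on $(-\infty,0]$; and (ii) the map $u \mapsto e^u h(u)$ is increasing on $[0,\infty)$ starting from $1/2$, which, under the substitution $v = e^{u/2}$, reduces to monotonicity of $v \mapsto v^2(v-1-\log v)/(v-1)^2$ on $[1,\infty)$. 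Combining these gives $\sup_{u \le \tau} g(u) = g(\tau) = \kappa(\tau)/2$, which is the required inequality.

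Assembly is then immediate: taking expectations with $u = f_\theta(X,Y)$ and invoking the second Hellinger identity gives
\[
\bbE[\phi(f_\theta)] \;\le\; \frac{\kappa(\tau)}{2}\,\bbE[\psi(f_\theta)] \;=\; \kappa(\tau)\,H^2(\theta,\theta^*),
\]
which is the stated exponential bound. The moment estimate follows by term-by-term comparison in the Taylor series $\phi(u) = \sum_{k\ge 2}(|u|/2)^k/k!$ (all terms non-negative): each $k \ge 2$ gives $\bbE[|f_\theta|^k]/(2^k k!) \le \bbE[\phi(f_\theta)] \le \kappa(\tau) H^2$, whence $\bbE|f_\theta|^k \le 2^k k!\,\kappa(\tau) H^2 = (k!/2)\cdot 2^{k-2}\cdot 8\,\kappa(\tau)H^2$, matching the claimed Bernstein form and recovering $\bbE f_\theta^2 \le 8\kappa(\tau) H^2$ at $k = 2$.

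The main technical obstacle is the monotonicity claim in step (ii): both $\phi$ and $\psi$ vanish like $u^2/4$ near $u=0$, so a naive pointwise comparison of the ratio is indeterminate, and one needs a careful Taylor expansion (or the substitution $v = e^{u/2}$) to verify $g(0^+) = 1/2$ and that $g$ is monotone on $[0,\infty)$. Once this calculus lemma is in hand, the remainder of the argument is a transparent application of the two Hellinger identities and the Taylor expansion of $e^{|u|/2}$.
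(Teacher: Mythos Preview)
Your proposal is correct and follows essentially the same route as the paper: both arguments reduce to the pointwise inequality $e^{|u|/2}-1-|u|/2 \le \tfrac{\kappa(\tau)}{2}(1-e^{-u/2})^2$ for $u\le\tau$ (i.e., monotonicity of $\kappa$, which the paper isolates as a separate lemma), and then identify $\bbE[(1-e^{-f_\theta/2})^2]$ with $2H^2(\theta,\theta^*)$. The only cosmetic differences are that the paper computes this last expectation via the direct identity $(1-e^{-f/2})^2 = p_0^{-1}(\sqrt{p_0}-\sqrt{p})^2$ rather than expanding the square, and proves monotonicity of $\kappa$ under the substitution $x=e^{-u/2}-1$ rather than your $v=e^{u/2}$.
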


\begin{proof}[Proof of Lemma \ref{lem:exp}]
		Consider $f_{p } = f_{p }(Y|X) = \log{p_0(Y|X) \over p(Y|X)} \in \cF$. In particular, $ f_{p } \le \tau $. 
		On the event $\{p_0(Y|X) > 0\}$, $f_{p }$ is finite.
		By Lemma \ref{lem:kappa_exp}, $\kappa(f_{p }) \le \kappa(\tau)$, that is, 
        \begin{align*}
            &e^{|f_{p }|/2} - 1 - {|f_{p }| \over 2} \le \kappa(\tau) \cdot {1 \over 2}(1-e^{-f_{p }/2})^{2}\\ 
            &= \kappa(\tau) \cdot {1 \over 2}\left\{ 1 - \exp\left[ -{1 \over 2}\log{p_0(Y|X) \over p(Y|X)} \right] \right\}^{2}\\
            &= \kappa(\tau) \cdot {1 \over 2p_0(Y|X)}\left( \sqrt{p_0(Y|X)} - \sqrt{p(Y|X)} \right)^{2}.
        \end{align*}
		Therefore, $\bbE\left( e^{|f_{p }|/2} - 1 - {|f_{p }| \over 2}  \right)\le \kappa(\tau) \bbE\left\{{1 \over 2}\int_{\cY}\left( \sqrt{p(y|X)} - \sqrt{p_0(y|X)} \right)^{2} \rd y\right\} = \kappa(\tau)H^2(p,p_0)$.
\end{proof}

\begin{lemma}\label{lem:kappa_exp}
    Consider $\kappa(u)$ in Lemma~\ref{lem:exp}. Then $\kappa(\cdot)$ is a non-decreasing and continuous function.
\end{lemma}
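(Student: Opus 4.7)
My plan is to first verify continuity and then prove monotonicity separately on the two sides of the origin via one-variable substitutions that expose the structure of the quotient.

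For continuity, on $\bbR \setminus \{0\}$ the ratio is a smooth function of $u$ with strictly positive denominator, so $\kappa$ is continuous there. At $u = 0$, I would Taylor-expand both factors: $2(e^{|u|/2} - 1 - |u|/2) = u^{2}/4 + O(|u|^{3})$ and $(1 - e^{-u/2})^{2} = u^{2}/4 + O(|u|^{3})$, so the ratio tends to $1 = \kappa(0)$, giving continuity at the origin.

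For monotonicity I would handle $u > 0$ and $u < 0$ separately, because the appearance of $|u|$ in the numerator but $u$ in the denominator breaks symmetry. On $u > 0$ the substitution $t = e^{u/2} \in (1,\infty)$ together with the identity $(1 - e^{-u/2})^{2} = (t-1)^{2}/t^{2}$ yields $\kappa(u) = k(t) := 2t^{2}(t - 1 - \ln t)/(t-1)^{2}$. A quotient-rule computation shows $\sign k'(t) = \sign \psi(t)$ with $\psi(t) := (t-1)(t-3) + 2\ln t$. Since $\psi(1) = \psi'(1) = \psi''(1) = 0$ and $\psi'''(t) = 4/t^{3} > 0$, iterated integration from $1$ gives $\psi > 0$ on $(1,\infty)$, so $k$ is strictly increasing and hence $\kappa$ is increasing in $u$ on $(0,\infty)$. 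On $u < 0$ I would set $s = e^{|u|/2} = e^{-u/2} \in (1,\infty)$ and obtain $\kappa(u) = h(s) := 2(s - 1 - \ln s)/(s-1)^{2}$. Another quotient-rule calculation reduces the sign of $h'(s)$ to that of $-\phi(s)$ with $\phi(s) := s^{2} - 1 - 2s \ln s$; the cascade $\phi(1) = \phi'(1) = 0$ and $\phi''(s) = 2(s-1)/s > 0$ on $(1,\infty)$ yields $\phi > 0$, so $h' < 0$. Because $s$ is a decreasing function of $u$ for $u < 0$, $\kappa(u) = h(s)$ is increasing in $u$ on $(-\infty,0)$ as well. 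Noting $h(1^{+}) = k(1^{+}) = 1$, the one-sided strict monotonicities combined with continuity at $0$ deliver $\kappa(u) < 1 < \kappa(v)$ for $u < 0 < v$, so $\kappa$ is non-decreasing (in fact strictly increasing) on all of $\bbR$.

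The main obstacle I anticipate is precisely this asymmetry: a unified derivative calculation for $\kappa'(u)$ is unclean, so two parallel but distinct sign analyses are required, each hinging on the same ``vanish-at-$1$ with positive higher derivative'' trick on an auxiliary polynomial-logarithmic function. Everything else reduces to careful bookkeeping of quotient-rule terms and a clean substitution $t$ or $s = e^{|u|/2}$.
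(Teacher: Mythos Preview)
Your proposal is correct and follows essentially the same route as the paper: both reduce $\kappa$ via an exponential substitution to a quotient whose derivative's sign is governed by a polynomial-logarithmic auxiliary function, then establish that sign by the ``vanishes at the boundary with positive higher derivative'' cascade. The only cosmetic difference is that the paper uses the single change of variable $x = e^{-u/2}-1 \in (-1,\infty)$ (so non-increasing $f(x)$ gives non-decreasing $\kappa(u)$) and then splits the analysis at $x=0$, whereas you split first and use $t = e^{u/2}$ and $s = e^{-u/2}$ on the two halves; the resulting auxiliary functions match under these reparameterizations.
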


\begin{proof}[Proof of Lemma \ref{lem:kappa_exp}]
For $x > -1$, consider the change of variable $u = 2\log{1 \over 1+x}$ such that $1-e^{-u/2} = -x$. Then $u \ge 0$ if and only if $-1 < x < 0$. We further have $e^{|u|/2}-1-\tfrac{|u|}{2}
=\bigl(\log(1+x)-\tfrac{x}{1+x}\bigr)\mathbf{1}_{\{-1<x<0\}}
+\bigl(x-\log(1+x)\bigr)\mathbf{1}_{\{x>0\}}$. 
Based on the above transformation, we define 
\[ f(x) := \begin{cases}
    -{2 \over (1+x)x} + {2 \over x^{2}}\log(1+x),& -1 < x < 0;\\
    1, & x = 0;\\
    {2 \over x} - {2 \over x^{2}}\log(1+x), & x > 0.
\end{cases} \]
		Then for $u \neq 0$, we have $\kappa(u) = f(e^{-u/2}-1)$. It suffices to show that $f$ is non-increasing and continuous. Consider $g(x) := -x + (1+x)\log(1+x);\;x > -1$.
		Then $g(0) = g'(0) = 0$ while $g''(0) = 1$.
		As $x \to 0^{-}$, we have 
        $\lim_{x \to 0^{-}}\left\{ f(x) = {g(x) \over (1+x)x^{2}/2} \right\} = g''(0) = 1$.
		As $x \to 0^{+}$, we have $\lim_{x \to 0^{+}}\left\{ f(x) = {x - \log(1+x) \over x^{2}/2} \right\} = 1$.
        This proves that $f$ is a continuous function on $(-1,+\infty)$.
        For $-1 < x < 0$, we have $f'(x) = 2{\rd \over \rd x}\left\{ {g(x) \over (1+x)x^{2}} \right\} =  {2 \over (1+x)^{2}x^{4}}\left\{ g'(x)(1+x)x^{2} - g(x)(2x + 3x^{2}) \right\} = {2 \over (1+x)^{2}x^{3}}\left\{ 2x+3x^{2}-2(1+x)^{2}\log(1+x) \right\}$.
		For $-1 < x \le 0$, consider $h(x) := -2x-3x^{2} + 2(1+x)^{2}\log(1+x)$.
		Then $h'(x) = 4g(x) \ge 0;\; -1 < x \le 0$.
		We have $h(x) \le h(0) = 0$ for $-1 < x \le 0$, and hence $f'(x) = -{2h(x) \over (1+x)^{2}x^{3}} \le 0;\; -1 < x < 0$.
		This proves that $f$ is non-increasing on $(-1,0)$.
		Moreover, $ \lim_{x\to 0^{-}}f'(x) = -2 \lim_{x\to 0^{-}}\left\{ {1 \over (1+x)^{2}} \cdot {h(x) \over x^{3}} \right\}$.
		By $h(0) = 0$, $h'(0) = 4g(0) = 0$, $h''(0) = 4g'(0) = 0$, and $h'''(0) = 4g''(0) = 4$, we have $\lim_{x \to 0}h(x)/x^{3} = h'''(0)/6 = 2/3 $. Then $\lim_{x \to 0^{-}}f'(x) = -4/3 < 0$.
		For $x > 0$, we have 
        $f'(x) = 2{\rd  \over \rd x}\left\{ {x - \log(1+x) \over x^{2}} \right\} = {2 \over x^{4}}\left\{ \left( 1 - {1 \over 1+x} \right)x^{2} - 2x[x - \log(1+x)] \right\} =  {2 \over (1+x)x^{3}}\left\{ -x^{2} - 2x + 2(1+x)\log(1+x) \right\}$.
		For $x \ge 0$, consider $l(x) := x^{2} + 2x - 2(1+x)\log(1+x)$.
		Then $l'(x) = 2[x - \log(1+x)] \ge 0;\;\forall x \ge 0$.
		We have $l(x) \ge l(0) = 0$ for $x \ge 0$, and hence $f'(x) = -{2l(x) \over (1+x)x^{3}} \le 0;\;x > 0$.
		This proves that $f$ is non-increasing on $(0,+\infty)$. Moreover, $\lim_{x \to 0^{+}}f'(x) = -2 \lim_{x \to 0^{+}}\left\{ {1 \over 1+x} \cdot {l(x) \over x^{3}} \right\}$.
		Note that for $x \ge 0$, we have $l''(x) = 2\left( 1 - {1 \over 1+x} \right);\;l'''(x) = {2 \over (1+x)^{2}}$.
		Thus we have $l(0) = l'(0) = l''(0) = 0$ and $l'''(0) = 2$. Then $\lim_{x \to 0}l(x)/x^{3} = l'''(0)/6 = 1/3$, and $\lim_{x \to 0^{+}}f'(x) = -2/3 < 0$. Combining the above, $f(\cdot)$ is non-increasing and continuous on $(-1,+\infty)$.
\end{proof}

\begin{lemma}\label{lem:number}
    For fixed $\tau > 0$, $\alpha \in [0,+\infty]$ and any $u \ge 0$, we have $\cN_{[]}(u,\cF_{\tau}^{\rm loc}(\alpha),\cL^{2}(\bbP)) \le \cN_{[]}\left({u \over 2\sqrt{2}e^{\tau/2}},\cP_{\tau}, H\right)$.
    Moreover, $\cN_{[]}\left({u \over 2\sqrt{2}e^{\tau/2}},\cP_{\tau}, H\right) \le \cN_{[]}\left( {u \over 2\sqrt{2}e^{\tau/2}}, \cP, H \right)$.
\end{lemma}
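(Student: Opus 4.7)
The statement is two separate bracketing-number inequalities, each of which I would attack independently by constructing an explicit map from one family of brackets to another.

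\emph{First inequality.} The plan is to start with a minimal $H$-bracket cover $\{[b_1^{(j)},b_2^{(j)}]\}$ of $\cP_\tau$ at scale $v:=u/(2\sqrt{2}e^{\tau/2})$ and, for each bracket, produce an $L^2(\bbP)$-bracket for $\cF_\tau^{\rm loc}(\alpha)$. Since every $q\in\cP_\tau$ satisfies $q\ge e^{-\tau}p_0$, I can replace the lower bracket $b_1^{(j)}$ by $\max(b_1^{(j)},e^{-\tau}p_0)$ without losing the bracketing property (any $q\in\cP_\tau$ lies between the new pair) and without increasing $H(b_1^{(j)},b_2^{(j)})$. Now, for any $p\in\cP$ with $b_1^{(j)}\le q:=(1-e^{-\tau})p+e^{-\tau}p_0\le b_2^{(j)}$, monotonicity of $\log$ gives the pointwise bracket
\[
\log p_0-\log b_2^{(j)}\;\le\; f_{p,\tau}\;=\;\log p_0-\log q\;\le\;\log p_0-\log b_1^{(j)}.
\]
To control the width in $L^2(\bbP)$, I would use the elementary inequality $|\log a-\log b|\le 2|\sqrt{a}-\sqrt{b}|/\sqrt{\min(a,b)}$ valid for $a,b>0$, giving
\[
(\log b_2^{(j)}-\log b_1^{(j)})^2\;\le\;\frac{4(\sqrt{b_2^{(j)}}-\sqrt{b_1^{(j)}})^2}{b_1^{(j)}}.
\]
Integrating against $\pi_S(s)p_0(a|s)$ and using $p_0/b_1^{(j)}\le e^{\tau}$ (the payoff of the replacement step) converts the $L^2(\bbP)$ norm of the bracket width into $8e^{\tau}H^2(b_1^{(j)},b_2^{(j)})\le 8e^{\tau}v^2=u^2$. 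Hence the collection $\{[\log p_0-\log b_2^{(j)},\log p_0-\log b_1^{(j)}]\}$ is an $L^2(\bbP)$-bracket cover of $\cF_\tau^{\rm loc}(\alpha)$ of the required scale, which proves the first inequality.

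\emph{Second inequality.} Here I would exploit the joint convexity of the squared Hellinger distance: for any mixture weights $\lambda\in[0,1]$ and any common component $r$,
\[
h^2\bigl((1-\lambda)p_1+\lambda r,\,(1-\lambda)p_2+\lambda r\bigr)\;\le\;(1-\lambda)\,h^2(p_1,p_2).
\]
Taking $\lambda=e^{-\tau}$ and $r=p_0$, this shows that the smoothing map $p\mapsto(1-e^{-\tau})p+e^{-\tau}p_0$ is a contraction in the generalized Hellinger distance $H$, with constant $\sqrt{1-e^{-\tau}}\le 1$. Given a minimal $H$-bracket cover $\{[b_1^{(j)},b_2^{(j)}]\}$ of $\cP$ at scale $v$, I would lift it to $\{[\tilde b_1^{(j)},\tilde b_2^{(j)}]\}$ with $\tilde b_i^{(j)}=(1-e^{-\tau})b_i^{(j)}+e^{-\tau}p_0$. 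These still sandwich every smoothed $q\in\cP_\tau$ by linearity, and their $H$-widths shrink by the contraction estimate, so the lifted family is an $H$-bracket cover of $\cP_\tau$ of the same cardinality at the same scale $v$.

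\emph{Main obstacle.} The only nontrivial step is ensuring the pointwise $\log$-to-Hellinger comparison in the first inequality works after integrating against the data distribution $\pi_S(s)p_0(a|s)$. The key observation that makes everything go through is that the $e^{-\tau}p_0$ floor inside $\cP_\tau$ lets me, without loss of generality, assume $b_1^{(j)}\ge e^{-\tau}p_0$, which is precisely what kills the otherwise singular factor $1/b_1^{(j)}$ and produces the clean prefactor $2\sqrt{2}e^{\tau/2}$ appearing in the statement. Once this is observed, both inequalities reduce to short, self-contained estimates, and no further delicate empirical-process machinery is needed.
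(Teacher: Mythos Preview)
Your proposal is correct and follows essentially the same route as the paper. For the first inequality the paper also exploits the floor $\widetilde p\ge e^{-\tau}p_0$ to compare $|\log\widetilde p_1-\log\widetilde p_2|$ with $|\sqrt{\widetilde p_1}-\sqrt{\widetilde p_2}|$ (via the mean value theorem applied to $u\mapsto e^{u/2}$ rather than to $\log$, but the resulting constant $2\sqrt{2}e^{\tau/2}$ is identical); your explicit replacement $b_1\to\max(b_1,e^{-\tau}p_0)$ is a cleaner way to handle the fact that bracket endpoints need not lie in $\cP_\tau$, a point the paper leaves implicit. For the second inequality the paper shows the pointwise bound $|\sqrt{\widetilde p_1}-\sqrt{\widetilde p_2}|\le|\sqrt{p_1}-\sqrt{p_2}|$ by writing the left side as $|\widetilde p_1-\widetilde p_2|/(\sqrt{\widetilde p_1}+\sqrt{\widetilde p_2})$ and observing that the added $e^{-\tau}p_0$ only enlarges the denominator; this is exactly the pointwise instance of the joint convexity you invoke.
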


\begin{proof}[Proof of Lemma \ref{lem:number}]
		Let $\cF_{\tau} := \cF_{\tau}^{\rm loc}(+\infty)$. Then $\cN_{[]}(u,\cF_{\tau}^{\rm loc}(\alpha),\cL^{2}(\bbP)) \le \cN_{[]}(u,\cF_{\tau},\cL^{2}(\bbP))$.
		For $i = 1,2$, consider $p_{i}(y|x) \in \cP$ and let $\widetilde{p}_{i} := (1-e^{-\tau})p_{i} + e^{-\tau}p_0 \in \cP_{\tau}$. In particular, $\cF_{\tau} \ni \log(p_0/\widetilde{p}_{i}) \le \tau$, that is, $\widetilde{p}_{i} \ge e^{-\tau}p_0 $.
		Then, for some $\lambda \in [0,1]$, we have $\left| \widetilde{p}_{1}^{1/2} - \widetilde{p}_{2}^{1/2} \right| = \left| \exp \left( {1 \over 2} \log \widetilde{p}_{1} \right) - \exp\left( {1 \over 2}\log \widetilde{p}_{2} \right) \right| = {\widetilde{p}_{1}^{\lambda/2}\widetilde{p}_{2}^{(1-\lambda)/2} \over 2}\left| \log \widetilde{p}_{1} - \log \widetilde{p}_{2} \right| \ge {e^{-\tau/2} \over 2}p_0^{1/2}\left| \log \widetilde{p}_{1} - \log \widetilde{p}_{2} \right|$,
        where the second equality is due to mean value theorem. Therefore, $H^{2}(\widetilde{p}_{1},\widetilde{p}_{2}) \ge {e^{-\tau} \over 8}\bbE\left\{ \int_{\cY} p_0(y|X)\left( \log {\widetilde{p}_{1}(y|X) \over \widetilde{p}_{2}(y|X)} \right)^{2} \rd y \right\} = {1 \over 8e^{\tau}}\bbE_{p_0}\left( \log {\widetilde{p}_{1}(Y|X) \over \widetilde{p}_{2}(Y|X)} \right)^{2}$.
		This proves that $cN_{[]}(u,\cF_{\tau},\cL^{2}(\bbP)) \le \cN_{[]}\left({u \over 2\sqrt{2}e^{\tau/2}},\cP_{\tau}, H\right)$.
        Also note that $\left| \widetilde{p}_{p _{1}}^{1/2} - \widetilde{p}_{p _{2}}^{1/2} \right| = {\left| \widetilde{p}_{p _{1}} - \widetilde{p}_{p _{2}} \right| \over \widetilde{p}_{p _{1}}^{1/2} + \widetilde{p}_{p _{2}}^{1/2}} = {|p_{p _{1}} - p_{p _{2}}| \over \left( p_{p _{1}} + {p_0 \over e^{\tau} - 1} \right)^{1/2} + \left( p_{p _{2}} + {p_0 \over e^{\tau} - 1} \right)^{1/2}} \le {|p_{p _{1}} - p_{p _{2}}| \over p_{p _{1}}^{1/2} + p_{p _{2}}^{1/2}} = \left| p_{p _{1}}^{1/2} - p_{p _{2}}^{1/2} \right|$.
		This further proves that $H^{2}(\widetilde{p}_{p _{1}}, \widetilde{p}_{p _{2}}) \le H^{2}(p_{p _{1}}, p_{p _{2}})$, and hence $\cN_{[]}\left({u \over 2\sqrt{2}e^{\tau/2}},\cP_{\tau}, H\right) \le \cN_{[]}\left( {u \over 2\sqrt{2}e^{\tau/2}}, \cP, H \right)$.
	\end{proof}

\begin{lemma}\label{lem:entropy_integral}
Consider two constants $s_0, \sigma_0 \in \bbR$. If $s = s_{0}$ and $\sigma = \sigma_{0}$ satisfy the entropy integral condition \eqref{eq:entropy_integral}, then for $j \ge 0$, $s \le s_{0} + j/5$ and $\sigma = 2^{j/2}\sigma_{0}$ also satisfy the entropy integral condition.
\end{lemma}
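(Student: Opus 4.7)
The plan is to reduce to the case $s=s_{0}+j/5$, which is the most restrictive one. Indeed, as $s$ grows with $\sigma$ fixed, the lower endpoint $\sigma^{2}/2^{7+s/2}$ of the integral in \eqref{eq:entropy_integral} decreases (enlarging the range) while the right-hand constant $2^{-(12+5s/2)}$ also decreases, so both sides worsen but the inequality is hardest at the top of the permitted range; establishing it at $s=s_{0}+j/5$ then implies it for every smaller admissible $s$. With $s$ fixed at $s_{0}+j/5$ and $\sigma=2^{j/2}\sigma_{0}$, write the endpoints $U_{j}:=2^{j/2}\sigma_{0}$ and $L_{j}:=U_{j}^{2}/2^{7+s/2}=\sigma_{0}^{2}\cdot 2^{9j/10-7-s_{0}/2}$. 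If $L_{j}\ge U_{j}$ the claim is vacuous since the integrand is nonnegative, so I may assume $L_{j}<U_{j}$.

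The key step is the linear substitution $u=2^{j/2}v$, which rewrites the integral as
\[
\int_{L_{j}}^{U_{j}}\sqrt{\log\cN_{[]}(u,\cF,\cL^{2}(\bbP))}\,\rd u \;=\; 2^{j/2}\int_{L_{j}/2^{j/2}}^{\sigma_{0}}\sqrt{\log\cN_{[]}\bigl(2^{j/2}v,\cF,\cL^{2}(\bbP)\bigr)}\,\rd v.
\]
Two facts then collapse the right-hand side to the baseline assumption. First, $u\mapsto\log\cN_{[]}(u,\cF,\cL^{2}(\bbP))$ is nonincreasing in $u$, so $\sqrt{\log\cN_{[]}(2^{j/2}v,\cdot)}\le\sqrt{\log\cN_{[]}(v,\cdot)}$ for all $v>0$ and $j\ge 0$. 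Second, a direct exponent count yields $L_{j}/2^{j/2}=\sigma_{0}^{2}\cdot 2^{2j/5-7-s_{0}/2}\ge \sigma_{0}^{2}/2^{7+s_{0}/2}$ for every $j\ge 0$, so the new lower endpoint dominates the baseline one. Combining, the display above is bounded by $2^{j/2}\int_{\sigma_{0}^{2}/2^{7+s_{0}/2}}^{\sigma_{0}}\sqrt{\log\cN_{[]}(v,\cdot)}\,\rd v\le 2^{j/2}\cdot\sqrt{n}\,\sigma_{0}^{2}/2^{12+5s_{0}/2}$ by the baseline hypothesis.

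It remains to verify that $2^{j/2}\sqrt{n}\,\sigma_{0}^{2}/2^{12+5s_{0}/2}$ is exactly the target right-hand side $\sqrt{n}\,\sigma^{2}/2^{12+5s/2}$ at the new scale: plugging in $s=s_{0}+j/5$ and $\sigma^{2}=2^{j}\sigma_{0}^{2}$ gives
\[
\frac{\sqrt{n}\,\sigma^{2}}{2^{12+5s/2}} \;=\; \frac{2^{j}\sqrt{n}\,\sigma_{0}^{2}}{2^{12+5s_{0}/2+j/2}} \;=\; \frac{2^{j/2}\sqrt{n}\,\sigma_{0}^{2}}{2^{12+5s_{0}/2}},
\]
confirming the match exactly. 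The whole argument is a scaling/exponent count and I do not foresee a genuine obstacle; the subtlest point is recognizing that the coefficient $1/5$ in the $s$-shift is precisely calibrated so that (i) the substitution factor $2^{j/2}$ on the left equals the inflation factor $2^{j/2}$ of the right-hand constant and (ii) the endpoint inequality $L_{j}/2^{j/2}\ge \sigma_{0}^{2}/2^{7+s_{0}/2}$, which reduces to $2j/5\ge 0$, remains valid for all $j\ge 0$.
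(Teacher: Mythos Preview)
Your argument is correct. The reduction to the extremal value $s=s_{0}+j/5$ is valid for the reason you give (the left-hand side of \eqref{eq:entropy_integral} is nondecreasing in $s$ while the right-hand side is nonincreasing), and the change of variables $u=2^{j/2}v$ together with the pointwise monotonicity of $u\mapsto\log\cN_{[]}(u,\cF,\cL^{2}(\bbP))$ and the endpoint check $L_{j}/2^{j/2}\ge\sigma_{0}^{2}/2^{7+s_{0}/2}$ cleanly reduces the new integral to the baseline one. The final exponent identity $2^{j/2}\cdot 2^{-(12+5s_{0}/2)}=2^{-(12+5s/2)}\cdot 2^{j}$ is exactly the calibration of the $1/5$ slope, as you note.

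The paper follows a somewhat different route: rather than substituting, it works with the running average $\sigma\mapsto\sigma^{-1}\int_{0}^{\sigma}\sqrt{\log\cN_{[]}(u,\cdot)}\,\rd u$, uses that this average is nonincreasing (for a nonincreasing integrand) to pass from $\sigma_{0}$ to $\sigma=2^{j/2}\sigma_{0}$, and then handles the lower endpoint separately before subtracting to recover the truncated integral; the $s$-monotonicity reduction is done at the end rather than the beginning. Your scale-substitution approach is more direct and makes the role of the $j/5$ shift in $s$ completely explicit: both the Jacobian factor $2^{j/2}$ and the endpoint inequality $2j/5\ge 0$ fall out of a single exponent count, whereas in the paper's averaging argument these two ingredients are dispersed across separate steps.
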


 \begin{proof}[Proof of Lemma \ref{lem:entropy_integral}]
		Note that $s = s_{0}$ and $\sigma = \sigma_{0}$ satisfying \eqref{eq:entropy_integral} implies that
        ${1 \over \sigma_{0}}\int_{0}^{\sigma_{0}}\sqrt{\log\cN_{[]}(u,\cF,\cL^{2}(\bbP))}\rd u \le {1 \over 2^{12+5s_{0}/2}}\sqrt{n}\sigma_{0} + {1 \over \sigma_{0}}\int_{0}^{\sigma_{0}^{2}/2^{7+s_{0}/2}}\sqrt{\log\cN_{[]}(u,\cF,\cL^{2}(\bbP))}\rd u$.
		By $u\mapsto \sqrt{\log\cN_{[]}(u,\cF,\cL^{2}(\bbP))}$ is non-increasing, this implies that for any $j \ge 0$, $s = s_{0} + j/5$ and $\sigma = 2^{j/2}\sigma_{0}$, we have 
        ${1 \over \sigma}\int_{0}^{\sigma}\sqrt{\log\cN_{[]}(u,\cF,\cL^{2}(\bbP))}\rd u
			\le {1 \over \sigma_{0}}\int_{0}^{\sigma_{0}}\sqrt{\log\cN_{[]}(u,\cF,\cL^{2}(\bbP))}\rd u$. 
            Moreover, it is straightforward to show that ${1 \over \sigma_{0}}\int_{0}^{\sigma_{0}}\sqrt{\log\cN_{[]}(u,\cF,\cL^{2}(\bbP))}\rd u \le {1 \over \sigma}\int_{0}^{\sigma^{2}/2^{7+s/2}}\sqrt{\log\cN_{[]}(u,\cF,\cL^{2}(\bbP))}\rd u$. Combining these gives 
            \(
            {1 \over \sigma}\int_{0}^{\sigma}\sqrt{\log\cN_{[]}(u,\cF,\cL^{2}(\bbP))}\rd u \le {1 \over 2^{12+5s_{0}/2}}\sqrt{n}\sigma_{0} + {1 \over \sigma}\int_{0}^{\sigma^{2}/2^{7+s/2}}\sqrt{\log\cN_{[]}(u,\cF,\cL^{2}(\bbP))}\rd u
            \). 
		That is, $\int_{\sigma^{2}/2^{7+s/2}}^{\sigma}\sqrt{\log\cN_{[]}(u,\cF,\cL^{2}(\bbP))}\rd u \le {1 \over 2^{12+5s/2}}\sqrt{n}\sigma^{2}$.
		This proves that $s = s_{0} + j/5$ and $\sigma = 2^{j/2}\sigma_{0}$ also satisfy the entropy condition \eqref{eq:entropy_integral}. Finally, the left hand side of \eqref{eq:entropy_integral} is non-decreasing in $s$ while the right hand side is non-increasing in $s$. Therefore, it also holds for all $s \le s_{0} + j/5$.
	\end{proof}



\begin{lemma}\label{lem:smooth}
		Consider $L(p ),\widehat{L}_{n}(p ),L_{\tau}(p ),\widehat{L}_{\tau,n}(p )$ in \eqref{eq:truth}, \eqref{eqn:mle} and \eqref{eq:smooth}. Then  we have $L(p_0) = L_{\tau}(p_0) = \min_{p  \in p }L_{\tau}(p )$ and $\widehat{L}_{\tau,n}(p_0) = 	\widehat{L}_{n}(p_0)$.
		Moreover, for any $p  \in p $, we have
		\[ \widehat{L}_{n}(p_0) - \widehat{L}_{n}(p ) \le {1 \over 1 - e^{-\tau}}\left\{ \widehat{L}_{\tau,n}(p_0) - \widehat{L}_{\tau,n}(p ) \right\}. \]
\end{lemma}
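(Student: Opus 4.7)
The plan is to break the lemma into its three assertions and handle each with a short, well-chosen manipulation. For the first identity $L(p_0)=L_\tau(p_0)$, I would simply substitute $p=p_0$ into the smoothed mixture $(1-e^{-\tau})p+e^{-\tau}p_0$, which collapses to $p_0$, so the two negative log-likelihoods agree termwise. For the equality $L(p_0)=\min_{p\in\cP}L_\tau(p)$, I would appeal to Gibbs' inequality: for any $p\in\cP$, the function $q_p:=(1-e^{-\tau})p+e^{-\tau}p_0$ is a conditional density, and since the data-generating density is $p_0$, one has $\bbE[-\log q_p(Y|X)]\ge \bbE[-\log p_0(Y|X)]$, i.e.\ $L_\tau(p)\ge L(p_0)=L_\tau(p_0)$, with equality attained at $p=p_0$. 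The identity $\widehat L_{\tau,n}(p_0)=\widehat L_n(p_0)$ follows by the same substitution as in the first step, applied sample-wise.

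For the final (and main) inequality, the key observation is the concavity of $\log$. Applied pointwise at each $(X_i,Y_i)$ to the convex combination with weights $\lambda=1-e^{-\tau}$ and $1-\lambda=e^{-\tau}$, concavity gives
\[
-\log\!\bigl((1-e^{-\tau})p(Y_i|X_i)+e^{-\tau}p_0(Y_i|X_i)\bigr)\;\le\;-(1-e^{-\tau})\log p(Y_i|X_i)-e^{-\tau}\log p_0(Y_i|X_i).
\]
Averaging over $i=1,\dots,n$ yields $\widehat L_{\tau,n}(p)\le (1-e^{-\tau})\widehat L_n(p)+e^{-\tau}\widehat L_n(p_0)$. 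Solving this for $\widehat L_n(p)$ and combining with $\widehat L_{\tau,n}(p_0)=\widehat L_n(p_0)$ (from the second part) gives, after a short rearrangement,
\[
\widehat L_n(p_0)-\widehat L_n(p)\;\le\;\frac{1}{1-e^{-\tau}}\bigl\{\widehat L_{\tau,n}(p_0)-\widehat L_{\tau,n}(p)\bigr\},
\]
which is the claimed bound.

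There is no genuine obstacle; the only thing to be careful about is the direction of the concavity inequality and the bookkeeping when solving for $\widehat L_n(p)$, since a sign error would flip the factor $1/(1-e^{-\tau})$. I would therefore present the concavity step explicitly with the weights identified, and then perform the algebraic rearrangement in a single display so the cancellation $1+\frac{e^{-\tau}}{1-e^{-\tau}}=\frac{1}{1-e^{-\tau}}$ is visible. The fact that $1-e^{-\tau}>0$ (because $\tau>0$) is what makes the rescaling valid.
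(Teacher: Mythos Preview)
Your proposal is correct and follows essentially the same approach as the paper: direct substitution for the identities at $p_0$, Gibbs' inequality (nonnegativity of KL divergence) for the minimality of $L_\tau$ at $p_0$, and the convexity of $u\mapsto -\log u$ applied to the mixture followed by the same rearrangement to obtain the factor $1/(1-e^{-\tau})$.
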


\begin{proof}[Proof of Lemma \ref{lem:smooth}]
		It can be established that $L(p_0) = L_{\tau}(p_0)$ and $\widehat{L}_{\tau,n}(p_0) = \widehat{L}_{n}(p_0)$ from direct derivation. For ease of notation, denote $\lambda = e^{-\tau} \in (0,1)$. First, we aim to establish that $ \min_{p  \in p } L_{\tau}(p ) = L_{\tau}(p_0)$. For any $p  \in \cP $, denote $p_{\lambda,p } := (1-\lambda)p + \lambda p_0$, and we have
		\[ \begin{aligned}
			& L_{\tau}(p ) - L_{\tau}(p_0)
			= \bbE\log{p_0(Y|X) \over p_{\lambda,p }(Y|X)} = \bbE\left\{ \int_{\cY} p_0(y|X)\log{p_0(y|X) \over p_{\lambda,p }(y|X)}\rd y \right\} \ge 0,
		\end{aligned} \]
		with equality if and only if $p_0(\cdot|X) = p_{\lambda,p }(\cdot|X)$ almost everywhere on $\cY$, almost surely in $X$.
		This concludes that $\min_{p  \in p }L_{\tau}(p ) = L_{\tau}(p_0)$.
		Next, for any $p  \in \cP $, we have by convexity of $u \mapsto -\log(u)$ that $\widehat{L}_{\tau,n}(p ) = \bbE_{n}\Big\{ - \log[(1-\lambda)p(Y|X) + \lambda p_0(Y|X)] \Big\} \le \bbE_{n}\Big\{ (1-\lambda)[-\log p(Y|X)] + \lambda[-\log p_0(Y|X)] \Big\} = (1-\lambda)\widehat{L}_{n}(p ) + \lambda\widehat{L}_{n}(p_0)$.
		By $\widehat{L}_{n}(p_0) = \widehat{L}_{\tau,n}(p_0)$, we have $\widehat{L}_{\tau,n}(p ) - \widehat{L}_{\tau,n}(p_0) \le(1-\lambda)\widehat{L}_{n}(p ) + \lambda\widehat{L}_{n}(p_0) - \widehat{L}_{\tau,n}(p_0) \le (1-\lambda)[\widehat{L}_{n}(p ) - \widehat{L}_{n}(p_0)]$. 
		Rearranging terms on both sides, we also have $\widehat{L}_{n}(p_0) - \widehat{L}_{n}(p ) \le {1 \over 1-\lambda}[\widehat{L}_{\tau,n}(p_0) - \widehat{L}_{\tau,n}(p )]$.
	\end{proof}

\begin{lemma}
    \label{lem:H_smooth}
    For $\lambda \in [0,1]$, and two conditional PDFs $p_{1},p_{2}$ of $Y|X$, we have ${(1-\lambda)^{2} \over 4}H^{2}(p_{1},p_{2}) \le H^{2}\big( (1-\lambda)p_{1} + \lambda p_{2}, \lambda p_{2} \big) \le (1-\lambda)H^{2}(p_{1},p_{2})$.
\end{lemma}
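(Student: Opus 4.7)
The plan is to exploit the elementary representation $(\sqrt{a}-\sqrt{b})^{2} = (a-b)^{2}/(\sqrt{a}+\sqrt{b})^{2}$ pointwise in $(x,y)$, which reduces both inequalities to two-sided control of a single denominator. Write $q := (1-\lambda)p_{1} + \lambda p_{2}$ (interpreting the second argument of the middle Hellinger distance as $p_{2}$, consistent with how the lemma is invoked in the proof of Theorem~\ref{thm:rate}). The key cancellation is $q - p_{2} = (1-\lambda)(p_{1}-p_{2})$, so the pointwise numerator for $(\sqrt{q}-\sqrt{p_{2}})^{2}$ is exactly $(1-\lambda)^{2}$ times the numerator for $(\sqrt{p_{1}}-\sqrt{p_{2}})^{2}$; only the denominators differ, and everything reduces to sandwiching $\sqrt{q}+\sqrt{p_{2}}$ between constant multiples of $\sqrt{p_{1}}+\sqrt{p_{2}}$.

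For the upper bound $H^{2}(q,p_{2}) \le (1-\lambda)H^{2}(p_{1},p_{2})$, I would use that $q \ge (1-\lambda)p_{1}$ pointwise, hence $\sqrt{q} \ge \sqrt{1-\lambda}\,\sqrt{p_{1}}$, so $\sqrt{q}+\sqrt{p_{2}} \ge \sqrt{1-\lambda}\,(\sqrt{p_{1}}+\sqrt{p_{2}})$. Substituting into the identity yields $(\sqrt{q}-\sqrt{p_{2}})^{2} \le (1-\lambda)(\sqrt{p_{1}}-\sqrt{p_{2}})^{2}$; integrating over $y$ and taking expectation over $X$ produces the claimed upper bound.

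For the lower bound $H^{2}(q,p_{2}) \ge \tfrac{(1-\lambda)^{2}}{4} H^{2}(p_{1},p_{2})$, I would invoke subadditivity of the square root: $\sqrt{q} \le \sqrt{p_{1}+p_{2}} \le \sqrt{p_{1}}+\sqrt{p_{2}}$, whence $\sqrt{q}+\sqrt{p_{2}} \le 2(\sqrt{p_{1}}+\sqrt{p_{2}})$. This flips the inequality in the representation to $(\sqrt{q}-\sqrt{p_{2}})^{2} \ge \tfrac{(1-\lambda)^{2}}{4}(\sqrt{p_{1}}-\sqrt{p_{2}})^{2}$, and the same integration concludes the proof.

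There is no substantive obstacle here; the argument is essentially one algebraic identity combined with two pointwise sandwich bounds on $\sqrt{q}+\sqrt{p_{2}}$. The only care required is to ensure the upper and lower controls are tightened to the factors $\sqrt{1-\lambda}$ and $2$ respectively, which is what produces the advertised constants $(1-\lambda)$ and $(1-\lambda)^{2}/4$. The generalized Hellinger distance (expectation over $X$) adds no difficulty because the pointwise-in-$y$ inequalities integrate and average cleanly.
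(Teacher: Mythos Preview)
Your proposal is correct and follows essentially the same route as the paper: both proofs rest on the identity $|\sqrt{a}-\sqrt{b}| = |a-b|/(\sqrt{a}+\sqrt{b})$, the cancellation $q-p_{2}=(1-\lambda)(p_{1}-p_{2})$, and the two-sided sandwich $\sqrt{1-\lambda}\,(\sqrt{p_{1}}+\sqrt{p_{2}}) \le \sqrt{q}+\sqrt{p_{2}} \le 2(\sqrt{p_{1}}+\sqrt{p_{2}})$. The only cosmetic difference is that the paper obtains the upper sandwich bound via $\sqrt{q}\le\max\{\sqrt{p_{1}},\sqrt{p_{2}}\}$ rather than your subadditivity step $\sqrt{q}\le\sqrt{p_{1}+p_{2}}\le\sqrt{p_{1}}+\sqrt{p_{2}}$; both yield the same constant. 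Your reading of the second argument as $p_{2}$ (not $\lambda p_{2}$) matches how the paper both proves and applies the lemma.
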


\begin{proof}[Proof of Lemma \ref{lem:H_smooth}]
		Suppose $ \lambda < 1$, $p_{1},p_{2} \ge 0$ and $p_{1},p_{2}$ are not zero simultaneously. 
		The second inequality follows from 
        $$\left| \sqrt{(1-\lambda)p_{1} + \lambda p_{2}} - \sqrt{p_{2}} \right| \le \sqrt{1-\lambda}{|p_{1} - p_{2}| \over \sqrt{p_{1}} + \sqrt{p_{2}}} = \sqrt{1-\lambda}|\sqrt{p_{1}} - \sqrt{p_{2}}|.$$
		The first inequality follows from 
        $$\left| \sqrt{p_{1}} - \sqrt{p_{2}} \right| = {\sqrt{(1-\lambda)p_{1} + \lambda p_{2}} + \sqrt{p_{2}} \over (1-\lambda)\left(\sqrt{p_{1}} + \sqrt{p_{2}}\right)}\left| \sqrt{(1-\lambda)p_{1} + \lambda p_{2}} - \sqrt{p_{2}} \right|.$$
		Note that \(\sqrt{(1-\lambda)p_{1} + \lambda p_{2}} \le \max\left\{ \sqrt{p_{1}}, \sqrt{p_{2}} \right\} \le \sqrt{p_{1}} + \sqrt{p_{2}}\),
		and hence \(\sqrt{(1-\lambda)p_{1} + \lambda p_{2}} + \sqrt{p_{2}} \le 2\left( \sqrt{p_{1}} + \sqrt{p_{2}} \right)\).
		Then we have $\left| \sqrt{p_{1}} - \sqrt{p_{2}} \right| \le {2 \over 1-\lambda}\left| \sqrt{(1-\lambda)p_{1} + \lambda p_{2}} - \sqrt{p_{2}} \right|$.
		Finally, if $\lambda = 1$ or $p_{1} = p_{2} = 0$, then the equalities hold trivially.
	\end{proof}

\begin{lemma}\label{lem:H_le_KL}
	Let $p_0(y|x)$ be the true conditional density. Then for any conditional density function $p(y|x)$, $2H^2(p,p_0) \le L(p ) - L(p_0)$.
\end{lemma}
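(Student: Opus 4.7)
The plan is to reduce the inequality to the classical pointwise Hellinger-vs-KL bound applied conditionally on $X$, and then integrate over $X$. Concretely, I would first rewrite the excess risk as a conditional Kullback--Leibler divergence:
\[
L(p) - L(p_0) \;=\; \bbE\!\left[\log\frac{p_0(Y|X)}{p(Y|X)}\right] \;=\; \bbE_{X}\!\left[\int_{\cY} p_0(y|X)\log\frac{p_0(y|X)}{p(y|X)}\,\rd y\right],
\]
which is the expectation of $\KL(p_0(\cdot|X)\,\|\,p(\cdot|X))$ under the marginal of $X$.

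Next, for each fixed $x$, I would prove the pointwise bound $2h^{2}(p_0(\cdot|x),p(\cdot|x)) \le \KL(p_0(\cdot|x)\,\|\,p(\cdot|x))$. To see this, write
\[
\KL(p_0(\cdot|x)\,\|\,p(\cdot|x)) \;=\; -2\int_{\cY} p_0(y|x)\log\sqrt{\frac{p(y|x)}{p_0(y|x)}}\,\rd y,
\]
apply Jensen's inequality to the concave function $\log$ under the probability measure $p_0(\cdot|x)\,\rd y$ to obtain
\[
\KL(p_0(\cdot|x)\,\|\,p(\cdot|x)) \;\ge\; -2\log\!\int_{\cY} \sqrt{p_0(y|x)\,p(y|x)}\,\rd y \;=\; -2\log\!\bigl(1-h^{2}(p_0(\cdot|x),p(\cdot|x))\bigr),
\]
and then use the elementary inequality $-\log(1-u) \ge u$ for $u \in [0,1)$ with $u = h^{2}(p_0(\cdot|x),p(\cdot|x))$ to conclude $\KL \ge 2h^{2}$.

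Finally, I would take expectation over $X \sim \pi_S$ on both sides of the pointwise inequality; by the definition of the generalized Hellinger distance $H^{2}(p,p_0) = \bbE_{X}[h^{2}(p_0(\cdot|X),p(\cdot|X))]$, this yields $2H^{2}(p,p_0) \le L(p) - L(p_0)$, as desired. There is no real obstacle here: everything is standard once one recognizes that the excess population negative log-likelihood equals the expected conditional KL, so the only substantive step is the Jensen/$-\log(1-u)\ge u$ argument, which is a textbook calculation.
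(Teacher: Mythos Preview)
Your proposal is correct and essentially follows the paper's approach: both reduce to the classical conditional KL-versus-Hellinger inequality via the concavity of $\log$. The only cosmetic difference is that the paper applies the pointwise bound $-\log v \ge 1 - v$ with $v = \sqrt{p(y|X)/p_0(y|X)}$ inside the integral and then integrates, whereas you first apply Jensen to obtain the slightly sharper intermediate bound $-2\log(1-h^{2})$ before invoking $-\log(1-u)\ge u$; the substance is the same.
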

\begin{proof}[Proof of Lemma \ref{lem:H_le_KL}]
    \[ \begin{aligned}
        & L(p ) - L(p_0) 
        = \bbE\left\{ \int_{\cY} p_0(y|X)\log{p_0(y|X) \over p(y|X)}\rd y \right\} \ge -2 \bbE\left\{ \int_{\cY} p_0(y|X)\left[ \sqrt{p(y|X) \over p_0(y|X)} - 1 \right]\rd y \right\}\\ 
        &= \bbE\left\{ 2 - \int_{\cY} \sqrt{p_0(y|X) p(y|X)}\rd y \right\} = \bbE\left\{ \int_{\cY}\left( \sqrt{p(y|X)} - \sqrt{p_0(y|X)} \right)^{2}\rd y \right\} = 2H^2(p,p_0),
    \end{aligned} \]
    where the inequality is due to $\log(1+u) \le u$.
\end{proof}
	
\begin{lemma}[{\citet[Lemma 4]{yang1998asymptotic}}]\label{lem:KL_le_H}
    Let $p_0(y|x)$ be the true conditional density. If for some $+\infty > \tau > 0$, $\log{p_0(Y|X) \over p(Y|X)} \le \tau$,  then for any $p$, $L(p ) - L(p_0) \le 2(2+\tau)H^2(p,p_0)$.
\end{lemma}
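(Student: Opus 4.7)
The plan is to reduce the bound to a single pointwise inequality for a one-variable function, then verify that inequality by a calibrated Taylor-type estimate that exploits the hypothesis $\log(p_0/p)\le \tau$.

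First I would introduce the substitution $t := 1 - \sqrt{p(Y|X)/p_0(Y|X)}$, so that the hypothesis translates to the one-sided constraint $t \le 1 - e^{-\tau/2}$ (while $t$ may take any negative value). A direct computation, combined with the definition of $H^2$ in \eqref{eq:H}, yields the two key identities
\[
\bbE_{p_0}[t] \;=\; 1 - \bbE_X\!\int\! \sqrt{p(y|X)\,p_0(y|X)}\,dy \;=\; H^2(p,p_0),\qquad \bbE_{p_0}[t^2] \;=\; 2\,H^2(p,p_0),
\]
while $L(p)-L(p_0) = \bbE_{p_0}[-2\log(1-t)]$. Consequently, if one can establish the pointwise inequality
\[
-2\log(1-t) \;\le\; 2t + (1+\tau)\,t^2 \qquad \text{for all } t \le 1 - e^{-\tau/2},
\]
then taking $\bbE_{p_0}[\cdot]$ on both sides gives $L(p)-L(p_0) \le 2H^2 + 2(1+\tau)H^2 = 2(2+\tau)H^2$, which is the desired bound.

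The main obstacle is the pointwise inequality, which I would treat by isolating the excess $\psi(t) := -2\log(1-t) - 2t$ and reducing the claim to $\psi(t)\le (1+\tau)t^2$ on the admissible range. The power-series expansion $\psi(t) = \sum_{k \ge 2}\tfrac{2}{k}\,t^k$ shows that $\psi(t)/t^2 = \sum_{j\ge 0}\tfrac{2}{j+2}\,t^j$ has nonnegative coefficients and is therefore nondecreasing on $[0, 1-e^{-\tau/2}]$, attaining its supremum at the right endpoint; for $t\le 0$, writing $t=-s$ and using $s-\log(1+s)\le s^2/2$ gives $\psi(t)/t^2\le 1\le 1+\tau$ directly. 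The entire claim thus collapses to the single one-variable inequality
\[
\tau - 2 + 2 e^{-\tau/2} \;\le\; (1+\tau)\bigl(1-e^{-\tau/2}\bigr)^2 \qquad \text{for all } \tau \ge 0,
\]
which I would verify by letting $h(\tau)$ denote the difference of the two sides: $h(0)=0$, and a short expansion-and-collection calculation yields $h'(\tau) = \tfrac{1}{2}\,\tau\,e^{-\tau/2}\,(1-e^{-\tau/2}) \ge 0$, so $h\ge 0$ throughout. The delicate point is the choice of the correction constant $1+\tau$ rather than the naive $e^{\tau}$ one would obtain from bounding $(1-t)^{-2}$ uniformly: the former is exactly the value of $\psi(t)/t^2$ at the endpoint, and is what produces the sharp final constant $2(2+\tau)$.
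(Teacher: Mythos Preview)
Your argument is correct. The paper does not supply its own proof of this lemma; it simply cites \citet[Lemma 4]{yang1998asymptotic}, so there is nothing to compare against beyond noting that your substitution $t = 1 - \sqrt{p/p_0}$ and the reduction to a one-variable inequality is precisely the standard route underlying that reference. Two small inaccuracies, neither affecting the conclusion: a direct computation gives $h'(\tau) = \tau\,e^{-\tau/2}\bigl(1-e^{-\tau/2}\bigr)$ rather than half that, and $(1+\tau)$ is a strict upper bound for $\psi(t)/t^2$ at the endpoint (since $h(\tau)>0$ for $\tau>0$), not the exact value.
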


\bibliographystyle{chicago}
\bibliography{refs}

\begin{thebibliography}{}

\bibitem[\protect\citeauthoryear{Alexander}{Alexander}{1984}]{alex1984}
Alexander, K.~S. (1984).
\newblock {Probability Inequalities for Empirical Processes and a Law of the Iterated Logarithm}.
\newblock {\em The Annals of Probability\/}~{\em 12\/}(4), 1041--1067.

\bibitem[\protect\citeauthoryear{Aouad, Farias, and Levi}{Aouad et~al.}{2021}]{aouad2021assortment}
Aouad, A., V.~Farias, and R.~Levi (2021).
\newblock Assortment optimization under consider-then-choose choice models.
\newblock {\em Management Science\/}~{\em 67\/}(6), 3368--3386.

\bibitem[\protect\citeauthoryear{Ben-Tal, Den~Hertog, De~Waegenaere, Melenberg, and Rennen}{Ben-Tal et~al.}{2013}]{ben2013robust}
Ben-Tal, A., D.~Den~Hertog, A.~De~Waegenaere, B.~Melenberg, and G.~Rennen (2013).
\newblock Robust solutions of optimization problems affected by uncertain probabilities.
\newblock {\em Management Science\/}~{\em 59\/}(2), 341--357.

\bibitem[\protect\citeauthoryear{Bertsimas, Gupta, and Kallus}{Bertsimas et~al.}{2018}]{bertsimas2018robust}
Bertsimas, D., V.~Gupta, and N.~Kallus (2018).
\newblock Robust sample average approximation.
\newblock {\em Mathematical Programming\/}~{\em 171\/}(1), 217--282.

\bibitem[\protect\citeauthoryear{Bertsimas and Mi{\v{s}}ic}{Bertsimas and Mi{\v{s}}ic}{2015}]{bertsimas2015data}
Bertsimas, D. and V.~V. Mi{\v{s}}ic (2015).
\newblock Data-driven assortment optimization.
\newblock {\em Management Science\/}~{\em 1}, 1--35.

\bibitem[\protect\citeauthoryear{Bilodeau, Foster, and Roy}{Bilodeau et~al.}{2023}]{bilodeau2023minimax}
Bilodeau, B., D.~J. Foster, and D.~M. Roy (2023).
\newblock Minimax rates for conditional density estimation via empirical entropy.
\newblock {\em The Annals of Statistics\/}~{\em 51\/}(2), 762--790.

\bibitem[\protect\citeauthoryear{Cao and Sun}{Cao and Sun}{2024}]{cao2024tiered}
Cao, J. and W.~Sun (2024).
\newblock Tiered assortment: Optimization and online learning.
\newblock {\em Management Science\/}~{\em 70\/}(8), 5481--5501.

\bibitem[\protect\citeauthoryear{Caro and Gallien}{Caro and Gallien}{2007}]{DAO}
Caro, F. and J.~Gallien (2007).
\newblock Dynamic assortment with demand learning for seasonal consumer goods.
\newblock {\em Management Science\/}~{\em 53\/}(2), 276--292.

\bibitem[\protect\citeauthoryear{Chen, Shi, Wang, and Zhou}{Chen et~al.}{2021}]{DAO_nest_MNL}
Chen, X., C.~Shi, Y.~Wang, and Y.~Zhou (2021).
\newblock Dynamic assortment planning under nested logit models.
\newblock {\em Production and Operations Management\/}~{\em 30\/}(1), 85--102.

\bibitem[\protect\citeauthoryear{Chen, Wang, and Zhou}{Chen et~al.}{2020}]{chen2020dynamic}
Chen, X., Y.~Wang, and Y.~Zhou (2020).
\newblock Dynamic assortment optimization with changing contextual information.
\newblock {\em Journal of Machine Learning Research\/}.

\bibitem[\protect\citeauthoryear{Chen, Wang, and Zhou}{Chen et~al.}{2021}]{chen2021optimal_dynamic}
Chen, X., Y.~Wang, and Y.~Zhou (2021).
\newblock Optimal policy for dynamic assortment planning under multinomial logit models.
\newblock {\em Mathematics of Operations Research\/}~{\em 46\/}(4), 1639--1657.

\bibitem[\protect\citeauthoryear{Davis, Gallego, and Topaloglu}{Davis et~al.}{2013}]{AO_LP}
Davis, J.~M., G.~Gallego, and H.~Topaloglu (2013).
\newblock Assortment planning under the multinomial logit model with totally unimodular constraint structures.
\newblock Working Paper, Cornell University, Ithaca, NY.

\bibitem[\protect\citeauthoryear{Davis, Gallego, and Topaloglu}{Davis et~al.}{2014}]{davis2014assortment}
Davis, J.~M., G.~Gallego, and H.~Topaloglu (2014).
\newblock Assortment optimization under variants of the nested logit model.
\newblock {\em Operations Research\/}~{\em 62\/}(2), 250--273.

\bibitem[\protect\citeauthoryear{D{\'e}sir, Goyal, Jiang, Xie, and Zhang}{D{\'e}sir et~al.}{2024}]{desir2024robust}
D{\'e}sir, A., V.~Goyal, B.~Jiang, T.~Xie, and J.~Zhang (2024).
\newblock Robust assortment optimization under the markov chain choice model.
\newblock {\em Operations Research\/}~{\em 72\/}(4), 1595--1614.

\bibitem[\protect\citeauthoryear{D{\'e}sir, Goyal, Segev, and Ye}{D{\'e}sir et~al.}{2020}]{ass_opt_2020}
D{\'e}sir, A., V.~Goyal, D.~Segev, and C.~Ye (2020).
\newblock Constrained assortment optimization under the {M}arkov chain--based choice model.
\newblock {\em Management Science\/}~{\em 66\/}(2), 698--721.

\bibitem[\protect\citeauthoryear{Ditchfield}{Ditchfield}{2021}]{ditchfield2021ethical}
Ditchfield, H. (2021).
\newblock Ethical challenges in collecting and analysing online interactions.
\newblock In {\em Analysing digital interaction}, pp.\  23--40. Springer.

\bibitem[\protect\citeauthoryear{Duchi, Glynn, and Namkoong}{Duchi et~al.}{2021}]{duchi2021statistics}
Duchi, J.~C., P.~W. Glynn, and H.~Namkoong (2021).
\newblock Statistics of robust optimization: A generalized empirical likelihood approach.
\newblock {\em Mathematics of Operations Research\/}~{\em 46\/}(3), 946--969.

\bibitem[\protect\citeauthoryear{Dzyabura and Jagabathula}{Dzyabura and Jagabathula}{2018}]{dzyabura2018offline}
Dzyabura, D. and S.~Jagabathula (2018).
\newblock Offline assortment optimization in the presence of an online channel.
\newblock {\em Management Science\/}~{\em 64\/}(6), 2767--2786.

\bibitem[\protect\citeauthoryear{Feldman, Paul, and Topaloglu}{Feldman et~al.}{2019}]{feldman2019assortment}
Feldman, J., A.~Paul, and H.~Topaloglu (2019).
\newblock Assortment optimization with small consideration sets.
\newblock {\em Operations Research\/}~{\em 67\/}(5), 1283--1299.

\bibitem[\protect\citeauthoryear{Feldman and Topaloglu}{Feldman and Topaloglu}{2015}]{ass_opt_2015}
Feldman, J. and H.~Topaloglu (2015).
\newblock Bounding optimal expected revenues for assortment optimization under mixtures of multinomial logits.
\newblock {\em Production and Operations Management\/}~{\em 24\/}(10), 1598--1620.

\bibitem[\protect\citeauthoryear{Feng, Shanthikumar, and Xue}{Feng et~al.}{2022}]{feng2022consumer}
Feng, Q., J.~G. Shanthikumar, and M.~Xue (2022).
\newblock Consumer choice models and estimation: A review and extension.
\newblock {\em Production and Operations Management\/}~{\em 31\/}(2), 847--867.

\bibitem[\protect\citeauthoryear{Flores, Berbeglia, and Van~Hentenryck}{Flores et~al.}{2019}]{ass_opt_2019}
Flores, A., G.~Berbeglia, and P.~Van~Hentenryck (2019).
\newblock Assortment optimization under the sequential multinomial logit model.
\newblock {\em European Journal of Operational Research\/}~{\em 273\/}(3), 1052--1064.

\bibitem[\protect\citeauthoryear{Fu, Qi, Wang, Yang, Xu, and Kosorok}{Fu et~al.}{2022}]{fu2022offline}
Fu, Z., Z.~Qi, Z.~Wang, Z.~Yang, Y.~Xu, and M.~R. Kosorok (2022).
\newblock Offline reinforcement learning with instrumental variables in confounded markov decision processes.
\newblock Working Paper, Northwestern University, Evanston, IL.

\bibitem[\protect\citeauthoryear{Gallego and Topaloglu}{Gallego and Topaloglu}{2014}]{gallego2014constrained}
Gallego, G. and H.~Topaloglu (2014).
\newblock Constrained assortment optimization for the nested logit model.
\newblock {\em Management Science\/}~{\em 60\/}(10), 2583--2601.

\bibitem[\protect\citeauthoryear{Han, Zhong, Lu, Blanchet, and Zhou}{Han et~al.}{2025}]{han2025learning}
Han, Y., H.~Zhong, M.~Lu, J.~Blanchet, and Z.~Zhou (2025).
\newblock Learning an optimal assortment policy under observational data.
\newblock {\em arXiv preprint arXiv:2502.06777\/}.

\bibitem[\protect\citeauthoryear{Hensher, Rose, and Greene}{Hensher et~al.}{2015}]{latent-class-logit}
Hensher, D.~A., J.~M. Rose, and W.~H. Greene (2015).
\newblock {\em Latent class models}, pp.\  706–741.
\newblock Cambridge University Press.

\bibitem[\protect\citeauthoryear{Jin, Yang, and Wang}{Jin et~al.}{2021}]{jin2021pessimism}
Jin, Y., Z.~Yang, and Z.~Wang (2021).
\newblock Is pessimism provably efficient for offline rl?
\newblock In {\em International Conference on Machine Learning}, pp.\  5084--5096. PMLR.

\bibitem[\protect\citeauthoryear{Kallus and Udell}{Kallus and Udell}{2020}]{kallus2020dynamic}
Kallus, N. and M.~Udell (2020).
\newblock Dynamic assortment personalization in high dimensions.
\newblock {\em Operations Research\/}~{\em 68\/}(4), 1020--1037.

\bibitem[\protect\citeauthoryear{Kidambi, Rajeswaran, Netrapalli, and Joachims}{Kidambi et~al.}{2020}]{model_based_RL_pess_2_2020}
Kidambi, R., A.~Rajeswaran, P.~Netrapalli, and T.~Joachims (2020).
\newblock {MOReL}: Model-based offline reinforcement learning.
\newblock In H.~Larochelle, M.~Ranzato, R.~Hadsell, M.~Balcan, and H.~Lin (Eds.), {\em Advances in Neural Information Processing Systems}, Volume~33, pp.\  21810--21823. Curran Associates, Inc.

\bibitem[\protect\citeauthoryear{Kitagawa and Tetenov}{Kitagawa and Tetenov}{2018}]{kitagawa2018should}
Kitagawa, T. and A.~Tetenov (2018).
\newblock Who should be treated? empirical welfare maximization methods for treatment choice.
\newblock {\em Econometrica\/}~{\em 86\/}(2), 591--616.

\bibitem[\protect\citeauthoryear{K{\"o}k, Fisher, and Vaidyanathan}{K{\"o}k et~al.}{2009}]{assortment_review}
K{\"o}k, A.~G., M.~L. Fisher, and R.~Vaidyanathan (2009).
\newblock Assortment planning: Review of literature and industry practice.
\newblock {\em Retail supply chain management: Quantitative models and empirical studies\/}, 99--153.

\bibitem[\protect\citeauthoryear{Kumar, Zhou, Tucker, and Levine}{Kumar et~al.}{2020}]{value_based_RL_pess_2020}
Kumar, A., A.~Zhou, G.~Tucker, and S.~Levine (2020).
\newblock Conservative q-learning for offline reinforcement learning.
\newblock In {\em Proceedings of the 34th International Conference on Neural Information Processing Systems}, NIPS'20, Red Hook, NY, USA. Curran Associates Inc.

\bibitem[\protect\citeauthoryear{Liu, Ma, and Topaloglu}{Liu et~al.}{2020}]{liu2020ass_opt}
Liu, N., Y.~Ma, and H.~Topaloglu (2020).
\newblock Assortment optimization under the multinomial logit model with sequential offerings.
\newblock {\em INFORMS Journal on Computing\/}~{\em 32\/}(3), 835--853.

\bibitem[\protect\citeauthoryear{Manski and Tamer}{Manski and Tamer}{2002}]{manski2002inference}
Manski, C.~F. and E.~Tamer (2002).
\newblock Inference on regressions with interval data on a regressor or outcome.
\newblock {\em Econometrica\/}~{\em 70\/}(2), 519--546.

\bibitem[\protect\citeauthoryear{McFadden}{McFadden}{1973}]{mcfadden1973conditional}
McFadden, D. (1973).
\newblock {\em Conditional logit analysis of qualitative choice behavior}.
\newblock Institute of Urban and Regional Development, Berkeley, CA.

\bibitem[\protect\citeauthoryear{McFadden}{McFadden}{1981}]{mcfadden1981econometric}
McFadden, D. (1981).
\newblock Econometric models of probabilistic choice.
\newblock {\em Structural analysis of discrete data with econometric applications\/}~{\em 198272}.

\bibitem[\protect\citeauthoryear{Owen}{Owen}{1990}]{owen1990empirical}
Owen, A. (1990).
\newblock Empirical likelihood ratio confidence regions.
\newblock {\em The Annals of Statistics\/}~{\em 18\/}(1), 90--120.

\bibitem[\protect\citeauthoryear{Qian and Murphy}{Qian and Murphy}{2011}]{qian2011performance}
Qian, M. and S.~A. Murphy (2011).
\newblock Performance guarantees for individualized treatment rules.
\newblock {\em The Annals of statistics\/}~{\em 39\/}(2), 1180.

\bibitem[\protect\citeauthoryear{Ray}{Ray}{1973}]{iia}
Ray, P. (1973).
\newblock Independence of irrelevant alternatives.
\newblock {\em Econometrica: Journal of the Econometric Society\/}, 987--991.

\bibitem[\protect\citeauthoryear{Redner}{Redner}{1981}]{redner1981note}
Redner, R. (1981).
\newblock Note on the consistency of the maximum likelihood estimate for nonidentifiable distributions.
\newblock {\em The Annals of Statistics\/}~{\em 9\/}(1), 225--228.

\bibitem[\protect\citeauthoryear{Rusmevichientong, Shen, and Shmoys}{Rusmevichientong et~al.}{2010}]{rusmevichientong2010dynamic}
Rusmevichientong, P., Z.-J.~M. Shen, and D.~B. Shmoys (2010).
\newblock Dynamic assortment optimization with a multinomial logit choice model and capacity constraint.
\newblock {\em Operations research\/}~{\em 58\/}(6), 1666--1680.

\bibitem[\protect\citeauthoryear{Rusmevichientong and Topaloglu}{Rusmevichientong and Topaloglu}{2012}]{rusmevichientong2012robust}
Rusmevichientong, P. and H.~Topaloglu (2012).
\newblock Robust assortment optimization in revenue management under the multinomial logit choice model.
\newblock {\em Operations research\/}~{\em 60\/}(4), 865--882.

\bibitem[\protect\citeauthoryear{Shi, Li, Wei, Chen, and Chi}{Shi et~al.}{2022}]{shi2022pessimistic}
Shi, L., G.~Li, Y.~Wei, Y.~Chen, and Y.~Chi (2022).
\newblock Pessimistic q-learning for offline reinforcement learning: Towards optimal sample complexity.
\newblock In {\em International conference on machine learning}, pp.\  19967--20025. PMLR.

\bibitem[\protect\citeauthoryear{Sturt}{Sturt}{2025}]{sturt2025value}
Sturt, B. (2025).
\newblock The value of robust assortment optimization under ranking-based choice models.
\newblock {\em Management Science\/}~{\em 71\/}(5), 4246--4265.

\bibitem[\protect\citeauthoryear{Talluri and van Ryzin}{Talluri and van Ryzin}{2004}]{AO_MNL}
Talluri, K. and G.~van Ryzin (2004).
\newblock Revenue management under a general discrete choice model of consumer behavior.
\newblock {\em Management Science\/}~{\em 50\/}(1), 15--33.

\bibitem[\protect\citeauthoryear{Tsybakov}{Tsybakov}{2008}]{tsybakov_nonparametric}
Tsybakov, A.~B. (2008).
\newblock {\em Introduction to Nonparametric Estimation\/} (1st ed.).
\newblock Springer Publishing Company, Incorporated.

\bibitem[\protect\citeauthoryear{{van der Vaart} and Wellner}{{van der Vaart} and Wellner}{2023}]{van2023weak}
{van der Vaart}, A.~W. and J.~A. Wellner (2023).
\newblock {\em Weak Convergence and Empirical Processes: with Applications to Statistics\/} (Second ed.).
\newblock Springer Series in Statistics. Springer Cham.

\bibitem[\protect\citeauthoryear{Wang, Zhang, Rusmevichientong, and Shen}{Wang et~al.}{2024}]{wang2024optimizing}
Wang, M., H.~Zhang, P.~Rusmevichientong, and M.~Shen (2024).
\newblock Optimizing offline product design and online assortment policy: Measuring the relative impact of each decision.
\newblock {\em Management Science\/}.

\bibitem[\protect\citeauthoryear{Wang}{Wang}{2012}]{wang2012capacitated}
Wang, R. (2012).
\newblock Capacitated assortment and price optimization under the multinomial logit model.
\newblock {\em Operations Research Letters\/}~{\em 40\/}(6), 492--497.

\bibitem[\protect\citeauthoryear{Wang}{Wang}{2021}]{wang2021consumer}
Wang, R. (2021).
\newblock Consumer choice and market expansion: Modeling, optimization, and estimation.
\newblock {\em Operations Research\/}~{\em 69\/}(4), 1044--1056.

\bibitem[\protect\citeauthoryear{Wang, Glynn, and Ye}{Wang et~al.}{2016}]{wang2016likelihood}
Wang, Z., P.~W. Glynn, and Y.~Ye (2016).
\newblock Likelihood robust optimization for data-driven problems.
\newblock {\em Computational Management Science\/}~{\em 13\/}(2), 241--261.

\bibitem[\protect\citeauthoryear{Wen and Koppelman}{Wen and Koppelman}{2001}]{nested-logit}
Wen, C.-H. and F.~S. Koppelman (2001).
\newblock The generalized nested logit model.
\newblock {\em Transportation Research Part B: Methodological\/}~{\em 35\/}(7), 627--641.

\bibitem[\protect\citeauthoryear{Wong and Shen}{Wong and Shen}{1995}]{wong1995probability}
Wong, W.~H. and X.~Shen (1995).
\newblock Probability inequalities for likelihood ratios and convergence rates of sieve {MLE}s.
\newblock {\em The Annals of Statistics\/}~{\em 23\/}(2), 339--362.

\bibitem[\protect\citeauthoryear{Yang and Barron}{Yang and Barron}{1998}]{yang1998asymptotic}
Yang, Y. and A.~R. Barron (1998).
\newblock An asymptotic property of model selection criteria.
\newblock {\em IEEE Transactions on Information Theory\/}~{\em 44\/}(1), 95--116.

\bibitem[\protect\citeauthoryear{Yang and Barron}{Yang and Barron}{1999}]{yang1999information}
Yang, Y. and A.~R. Barron (1999).
\newblock Information-theoretic determination of minimax rates of convergence.
\newblock {\em The Annals of Statistics\/}~{\em 27\/}(5), 1564--1599.

\bibitem[\protect\citeauthoryear{Yu}{Yu}{1997}]{yu1997assouad}
Yu, B. (1997).
\newblock Assouad, fano, and le cam.
\newblock In {\em Festschrift for Lucien Le Cam: research papers in probability and statistics}, pp.\  423--435. Springer.

\bibitem[\protect\citeauthoryear{Yu, Thomas, Yu, Ermon, Zou, Levine, Finn, and Ma}{Yu et~al.}{2020}]{model_based_RL_pess_2020}
Yu, T., G.~Thomas, L.~Yu, S.~Ermon, J.~Y. Zou, S.~Levine, C.~Finn, and T.~Ma (2020).
\newblock {MOPO}: Model-based offline policy optimization.
\newblock In H.~Larochelle, M.~Ranzato, R.~Hadsell, M.~Balcan, and H.~Lin (Eds.), {\em Advances in Neural Information Processing Systems}, Volume~33, pp.\  14129--14142. Curran Associates, Inc.

\end{thebibliography}
\end{document}